\crefname{assumption}{assumption}{assumptions}
\Crefname{assumption}{Assumption}{Assumptions}
\crefname{des}{desideratum}{desiderata}
\Crefname{des}{Desideratum}{Desiderata}
\newtheorem{theorem}{Theorem}[section]
\newtheorem{lemma}[theorem]{Lemma}
\newtheorem{corollary}[theorem]{Corollary}
\newtheorem{proposition}[theorem]{Proposition}
\newtheorem{remark}[theorem]{Remark}
\newtheorem{procedure}{Procedure}
\newtheorem{definition}{Definition}
\newtheorem{fact}[theorem]{Fact}
\newtheorem{assumption}{Assumption}
\newtheorem{desiderata}{Desiderata}
\title{Provable Weak-to-Strong Generalization via Benign Overfitting}
\author{%
  David X. Wu\\
  Department of EECS\\
  UC Berkeley\\
  Berkeley, CA 94720 \\
  \texttt{david\_wu@berkeley.edu} \\
  \and
  Anant Sahai \\
  Department of EECS\\
  UC Berkeley\\
  Berkeley, CA 94720 \\
  \texttt{sahai@eecs.berkeley.edu} \\
}
\begin{document}

\maketitle

%!TEX root = ./main.tex
\begin{abstract}
    The classic teacher-student model in machine learning posits that a strong teacher supervises a weak student to improve the student's capabilities.
    We instead consider the inverted situation, where a weak teacher supervises a strong student with imperfect pseudolabels. 
    This paradigm was recently brought forth by \citet{burns2023weak} and termed \emph{weak-to-strong generalization}. 
    We theoretically investigate weak-to-strong generalization for binary and multilabel classification in a stylized overparameterized spiked covariance model with Gaussian covariates where the weak teacher's pseudolabels are asymptotically like random guessing.
    Under these assumptions, we provably identify two asymptotic phases of the strong student's generalization after weak supervision: (1) successful generalization and (2) random guessing. 
    Our techniques should eventually extend to weak-to-strong multiclass classification. 
    Towards doing so, we prove a tight lower tail inequality for the maximum of correlated Gaussians, which may be of independent interest.
    Understanding the multilabel setting reinforces the value of using logits for weak supervision when they are available. 
\end{abstract}

%!TEX root = ./main.tex
\section{Introduction}
Motivated by the problem of aligning increasingly capable models, \cite{burns2023weak} introduced the framework of \emph{weak-to-strong generalization}\footnote{This situation is related, but slightly different, from the earlier identified problem of easy-to-hard generalization (see e.g. \cite{schwarzschild2021can,hase2024unreasonable}) wherein a model is trained on ``easy'' cases but has to generalize to ``hard'' cases. The most important distinction is that in weak-to-strong generalization, the weak teacher actually gets things wrong.}, which draws an analogy between humans supervising superhuman AIs and weaker teacher models supervising stronger student models. 
This inverts the classic teacher-student model framework, which typically assumes that the teacher is stronger than the student.
\cite{burns2023weak} found that using GPT-2 to finetune GPT-4 can recover most of the performance of standard supervised finetuning with human-annotated data across standard NLP benchmarks, but struggles on more difficult 
tasks such as chess puzzles or reward modeling. One failure mode observed by \citet{burns2023weak} is that the strong student sometimes learns to mimic the weak teacher, i.e. the strong model overfits to the limitations of the weaker one. 
Since \citet{burns2023weak}, this phenomenon has been studied in a variety of other empirical settings (see e.g. \citet{ji2024aligner,guo2024vision,liu2024co,yang2024super, tao2024your}). 

These empirical observations naturally lead to the question that we aim to answer in this paper: 
Can we identify a simple, concrete theoretical setting where we can provably exhibit different phases of weak-to-strong generalization? 
Below, we survey closely related areas of research.

\vskip3pt
\textbf{Pseudolabeling and synthetic data.} 
In semi-supervised learning, \emph{pseudolabeling} refers to the method of using one model's outputs to generate pseudolabels for unlabeled data \citep{lee2013pseudo,arazo2020pseudo,rizve2021defense,zhang2021flexmatch,cascante2021curriculum,he2024weakly}.
These pseudolabels are then used to supervise the target model. %, which may be the same as the one generating the pseudolabels. 
We consider using the weak model to generate pseudolabels for the strong model in our concrete setting.

Synthetic data generation is another widely popular paradigm for scaling up models in the absence of human-annotated data; see, e.g. \citep{nikolenko2021synthetic,liu2024best,chen2021synthetic,figueira2022survey} and references therein.
In this approach, one uses generative models to generate synthetic data, which can then be labeled and used to supervise other models.
The use of synthetic data to train models resembles the weak-to-strong setup that we study, although we assume the unlabeled datapoints are generated from the ground-truth distribution. 
Nevertheless, we believe that extending our techniques may yield interesting insights on the success and failure modes of synthetic data.
\vskip3pt
\textbf{Supervised finetuning and scaling laws.}
In contemporary deep learning, the dominant paradigm is to pretrain large (e.g.~billions of parameters) models on copious (e.g.~trillions of tokens) unlabeled data in a self-supervised fashion and then finetune the model on relatively tiny (e.g.~less than 100K) finetuning datasets. 
The prevailing wisdom suggests that (1) pretraining helps the model learn useful features that can be repurposed for specific tasks and (2) the optimal size of models should scale with the amount of data (see, e.g. \citep{KMHB+20,wei2021finetuned,wang2022self,liu2022few,Chin22}). 
Much effort has gone into improving the effectiveness and efficiency of finetuning \citep{houlsby2019parameter,hu2021lora,lester2021power,dettmers2024qlora}.

According to NTK theory, if the model weights do not move far from  initialization during finetuning, we can approximate finetuning as training a generalized linear model using (tangent) features.
These features arise from the gradients of the network's outputs with respect to the parameters at finetuning initialization. 
This motivates our studying the behavior of interpolating linear models in the context of supervised finetuning. Our concrete theoretical setting sits within the Gaussian-features linear-model style of \citet{pmlr-v162-wei22a, reg:belkin2020two, reg:mei2022generalization, BLLT20, reg:muthukumar2020harmless, MNS+21, binary_classification:chatterji2021finite, binary_classification:wang2021benign, subramanian2022generalization, reg:wang2022tight, multi_class_theory:Wang21, mult:cornacchia2023learning}.
\vskip3pt
\textbf{Knowledge distillation.} 
The topic of weak-to-strong generalization is related to the extensive
literature on knowledge distillation, which originated in the desire
to compress large powerful models (teachers) into smaller ones
(students) \citep{hinton2015distilling, bucilua2006model}. See
\cite{gou2021knowledge} for a general survey. Theoretical perspectives
for the essentially underparameterized case were developed
in \citet{phuong2019towards, ji2020knowledge}, and the theoretically
engaged literature in this area has continued to expand; see \citet{yuan2024learning,
NEURIPS2023_2433fec2, NEURIPS2023_ee1f0da7, alballa2024practical,
10.1007/978-981-97-2253-2_19, pmlr-v202-sarnthein23a,
zhao2023fundamental, pmlr-v202-das23d, 10287963, NEURIPS2023_12d28628,
borup2023selfdistillation, harutyunyan2023information,
stanton2021does, mobahi2020self} for a few representative more recent
examples.

The question of how knowledge distillation can 
improve the student's generalization to surpass the teacher (especially when they have the same architecture) has led to
the identification of several different underlying mechanisms 
\citep{yuan2024learning, NEURIPS2023_ee1f0da7, pmlr-v202-sarnthein23a,
pmlr-v202-das23d, NEURIPS2023_12d28628, mobahi2020self}. Of these, the
closest in spirit to our approach is the regularization
viewpoint of \citet{mobahi2020self}, which studies a kernel-regression model. They call out the crucial role of the spectrum of the Gram matrix --- more specifically, self-distillation  accentuates the importance of the larger eigenvalues, which has a regularizing effect by making the corresponding basis functions more prominent in the learned pattern. 
\vskip3pt
\textbf{Concurrent theoretical work.}
\cite{zhang2024transcendence} engages with generative models and observes that temperature can play an important role in allowing a trained model to surpass its training sources.
\cite{somerstep2024statistical} takes a transfer-learning perspective and asserts that naive fine-tuning on weak pseudolabels tends not to work; our results show conditions where this does in fact succeed.
\cite{charikar2024quantifying} takes a representation-centric perspective in a regression setting and zooms in on the question of how much better the representation is for the stronger model.
\cite{lang2024theoretical} studies the classification setting and takes a neighborhood perspective that posits that the stronger model's neighborhood structure allows it to average over the weak labels to get generalization. At a high level, our work along with \cite{mobahi2020self, charikar2024quantifying,lang2024theoretical} all circle around the idea that weak-to-strong generalization works when the cascading learning process purifies representations in the true direction and contracts in false directions.

\subsection{Contributions}
In this paper, we explore weak-to-strong generalization in a stylized theoretical model that captures the dynamics of finetuning with weak supervision studied in \citet{burns2023weak}. 
Under a simple overparameterized spiked covariance model for the pretraining features, we prove that finetuning an overparameterized linear classifier using minimum $\ell_2$ norm interpolation on top of these features provably exhibits \emph{two distinct phases} of weak-to-strong generalization. 
In particular, under certain scalings, as we increase the number of weakly labeled finetuning examples, the strong learner's asymptotic accuracy transitions from (1) random guessing to (2) perfect generalization; see \Cref{sec:main-results} for a precise statement. 
% On the other hand, the strong learner's performance would transition to perfect generalization with significantly fewer clean examples. 
% The blue and green regions in \Cref{fig:conceptual} conceptually illustrate our main result.

% \begin{figure}
%     \centering
%     \includegraphics[width=\linewidth]{}
%     \caption{Conceptual picture of weak-to-strong phases. By fixing all other scalings and increasing the number of unlabeled datapoints $m$, the performance of the strong model supervised with different labels changes. For technical reasons, our results do not extend all the way into the rightmost regime. We expect once $m$ is large enough, the strong model becomes underparameterized relative to the extra data and imitates the weak labels.}
%     \label{fig:conceptual}
% \end{figure}
To be specific, we study the generalization of interpolating linear models with and without weak supervision. 
Although our results can be generalized to any weak teacher which produces logits using a linear head, we assume for the sake of concreteness that the weak teacher is also an interpolating linear model which can be fully expressed by the strong student.
We discuss how our theoretical results connect to realistic supervised finetuning scenarios in \Cref{sec:discussion}.

Our results strongly hinge on the tight analysis for benign overfitting in multiclass classification in \cite{WS24}, although the study of benign overfitting for regression and binary classification was already carried out by \cite{BLLT20,MNS+21}.
% Because we are studying overparameterized linear models, there are generically an infinite number of ways to perfectly fit the training data. 
% In overparameterized settings, several natural learning algorithms have been proven to asymptotically converge to the  minimum $\ell_2$ norm interpolating (MNI) solution.
% For example, with zero initialization, gradient descent with squared loss (for regression or classification problems) or cross-entropy loss (for classification problems) converges to the MNI solution \citep{ji2021characterizing,wang2021all,kaushik2023new,lai2023general}.

%!TEX root = ./main.tex

\section{Preliminaries and setup}\label{sec:setup}
Below, we set up the weak-to-strong learning task in \Cref{sec:w2s-setup}, along with the data assumptions in \Cref{sec:data-model}, and finally specify the concrete end-to-end learning algorithm we study in \Cref{sec:learning-alg}.

\subsection{Weak-to-strong setup}\label{sec:w2s-setup}
To study weak-to-strong generalization, 
we will consider a simple setup which encapsulates the dynamics of standard supervised finetuning as well as training linear probes on intermediate activations. 
We will now give a high level description of the weak-to-strong setting, and in subsequent sections formally define the specific assumptions we make to theoretically study weak supervision.

Suppose we have $n$ labeled datapoints and $m = n^{u}$ unlabeled datapoints, where $u > 1$. 
This matches the modern ML paradigm where labeled data is scarce and unlabeled data is abundant.
% We assume that the true unweighted feature pool is $x_1, \ldots, x_d$, where $d = n^p$ for some $p > 1$, and $y = \sgn(x_1)$. 
% Then we also have the weak unweighted feature pool $z_1, \ldots, z_{d_{\weak}}$ where $d_{\weak} = n^{p_{\weak}}$ for some $p_{\weak} > 1$.

% The relationship between these feature pools is particular. 
% First, we assume that $z_1$ is $\rho$-correlated with $x_1$. 
% In particular, we assume $z_1 = \eta(\rho) x_1 + g$ where $g$ is a zero mean random variable independent of everything else and $\eta(\rho)$ is picked so that $\cor(z_1, x_1) = \rho$. 
% Also, we assume that the rest of the $z_i$'s are independent of all the $x_j$'s.

We assume that we have access to two sets of features extracted from the datapoints: the weak and strong features. 
Using these features, we will create a weak-to-strong setup with a weak model and a strong model, where the weak model is used to generate hard pseudolabels to train the strong model (see \Cref{def:w2s-training} for a formal definition).
The learning task at hand is binary classification, where each model uses its respective features obtained from the datapoints --- see \Cref{sec:discussion} for how our techniques apply to the multiclass setting.
% Intuitively, the weak features are worse representations of the latent concept direction for classification than the strong features.

Clearly, to get nontrivial learning guarantees, there needs to be some relationship between the weak and strong features. 
To specify this, we will assume that the weak and strong features come from an appropriate \emph{weak-to-strong ensemble} of features, which we formally define in \Cref{sec:data-model}. 
One way to interpret the weak-to-strong ensemble is that the true hidden direction is highlighted more in the strong features than the weak features; see \Cref{fig:subset-ensemble}.

In supervised finetuning and linear probing, the strong and weak features come from pretraining via the neural tangent features and intermediate activations, respectively. 
For example, in the GPT2 to GPT4 weak-to-strong setup of \citet{burns2023weak}, the weak features come from GPT2 pretraining, whereas the strong features come from GPT4 pretraining.

Broadly speaking, we study the case where the true labels are generated by a distinguished (but unknown) low-rank subspace hidden in very high dimensional space. 
% This reflects a simple linearized version of the manifold hypothesis, which is a common intuition for why o\todo{add cites here for probing}

To be concrete, we will study two different classifiers in this setup:
\begin{enumerate}[label=(\arabic*)]
    \item $\fweak$: train/finetune on $n$ datapoints using weak features in $\R^{d_{\weak}}$ and ground-truth labels.
    % \item $\fstrong \in \R^{d_{\strong}}$: train on $n$ datapoints using strong features and ground-truth labels.
    \item $\fwts$: train/finetune on $m \gg n$ datapoints using strong features in $\R^{d}$ and hard pseudolabels generated from $\fweak$. 
\end{enumerate}
We study a scheme where $\fweak$ and $\fwts$ are linear models trained by performing minimum $\ell_2$ interpolation (MNI) on their respective training sets; see \Cref{sec:learning-alg} for a formal definition of MNI.
\begin{remark}\label{rem:pseudolabels}
    Instead of training with hard (categorical) pseudolabels, one could use the real-valued scores from $\fweak$. This would only affect constants which are not crucial to any of our results.
\end{remark}
We will measure generalization via the test accuracy of these classifiers. In particular, let $\ell(\cdot)$ be the 0-1 loss function, and let $\E[\ell(\cdot)]$ be the expected test error over a fresh test sample.
We introduce the following desiderata to define weak-to-strong generalization for binary classification.

\begin{desiderata}\label{des:main}
The main desiderata are the following:
\begin{enumerate}[label=\normalfont{(\roman*)},ref=\normalfont{\arabic{desiderata}.\roman*}]
    \item \label[des]{item:success-strong} The strong model asymptotically generalizes\footnote{The natural extension to multiclass settings with $k$ different classes would require $\E[\ell(\fweak)] = 1 - \Theta(\frac{1}{k})$.} when trained on the $m$ weakly labeled datapoints: $\E[\ell(\fwts)] = o_n(1).$
    \item \label[des]{item:capability} The strong model can fully represent the weak model. 
    \item \label[des]{item:subopt-weak} The weak model asymptotically does not generalize: $\E[\ell(\fweak)] = \frac{1}{2} - o_n(1)$.
\end{enumerate}
\end{desiderata}
In \Cref{sec:main-results}, we show that the above desiderata are achievable in a simple toy model; see \Cref{thm:achieve} for a formal statement. 
This paints a rather striking picture: there are situations where the weak labels asymptotically look like random guessing (\Cref{item:subopt-weak}), the strong model can perfectly imitate the weak labels (\Cref{item:capability}), yet the strong model still asymptotically generalizes by extracting enough signal out of the plentiful weak labels (\Cref{item:success-strong}).\footnote{In particular, \Cref{item:capability} allows the failure mode of the strong model imitating the weak model and also rules out more trivial sources of weak labels, such as independent label noise. 
If the strong model cannot represent the noise in the weak labels, then weak-to-strong generalization is intuitively much simpler.}

We also include some bonus desiderata, which paint a comparison to natural alternative training methods.
We can also provably achieve these bonus desiderata in certain regimes; see \Cref{rem:bonus}.
\begin{desiderata}\label{des:bonus}
The extra desiderata are the following:
\begin{enumerate}[label=\normalfont{(\roman*)},ref=\normalfont{\arabic{desiderata}.\roman*}]
    \item \label[des]{item:spike-recovery} PCA cannot recover the low-rank structure from $n+m$ observations of the strong features.
    \item \label[des]{item:subopt-strong} Let $\fstrong$ be the strong model when trained on $n$ datapoints using strong features in $\R^{d}$ and ground-truth labels.  Then $\fstrong$ asymptotically fails: $\E[\ell(\fstrong)] = \frac{1}{2} - o_n(1)$. 
\end{enumerate}
\end{desiderata}
% Here, \cref{item:spike-recovery,item:subopt-strong} establishes that the weak pseudolabels are better at boosting the strong model's performance compared to natural baselines. 
% Indeed, if one could learn a classifier on the strong features in an unsupervised manner, then it is less clear how much weak supervision helped. 
% detect the low-dimensional structure using e.g. PCA, one could hope to first perform dimensionality reduction and then train the strong model on the $n$ labeled datapoints. \david{maybe this is a bad point}
% Although one encounters datasets where PCA does work in practice, the point is to establish that meaningful weak-to-strong generalization can occur using naive weak supervision. 

\subsection{Data model}\label{sec:data-model}
Throughout, we will consider data with zero mean Gaussian covariates, which can be viewed as an idealized version of the pretraining features or representations. These covariates are generated from an ambient standard Gaussian vector $\vg = (g_1, \ldots, g_D) \in \R^D$. 
We make Gaussianity assumptions for the sake of theoretical tractability; in \Cref{sec:discussion} we discuss potential extensions to other settings. 

% In one sample, a learner observes $d$ Gaussian features defined by a basis of $d$ unknown vectors $\bv_1, \ldots, \bv_d \in \R^D$. 
% In particular, for $i \in [d]$, the $i$th feature is given by $x_i = \ev{\vg, \bv_i}$. 
% Hence, the feature vector $\vx$ is a $D$-dimensional Gaussian living in a $d$-dimensional subspace, $\vx \sim N(0, \sum_{i=1}^{d} \bv_i \bv_i^\top)$. 
% We will often abuse notation and restrict to the subspace, and write $\vx = (x_1, \ldots, x_d) \sim N(0, \Sigma)$ where $\Sigma_{ij} = \ev{\bv_i, \bv_j}$ for each $i, j \in [d]$.
\vskip3pt
\textbf{Covariates.}
A learner observes iid features $\bx_i \sim N(0, \Sigma)$, where we emphasize that the feature covariances $\Sigma \in \R^{d \times d}$ are unknown and different for each learner.
The different $\Sigma$'s capture how weak or strong the features are for each model; we make this precise below.
Note that $\bx_i$ is a linear transformation of the underlying randomness $\vg_i \sim N(0, I_D)$.
% We allow $d$ to change from learner to learner, but for simplicity of exposition we will suppress this dependence for now.
We will often refer to the eigendecomposition $\Sigma = U \Lambda U^\top$, where $U \in \R^{d \times d}$ is orthogonal and $\Lambda \in \R^{d \times d}$ is diagonal. 
% Although we allow the learner to know the eigenvalues $\Lambda$, we assume that $U$ is unknown; it can either be picked adversarially or randomly. 
\vskip3pt
\textbf{Labels.}
For binary classification, we consider hard labels generated by the signs of Gaussians, so that $y = \sgn(\ev{\vg, \bv_*})$, where $\bv_*$ is an unknown unit-norm direction. 
% Formally, if there are $n$ labeled examples, for $i \in [n]$ we assume that the $i$th label is $y_i = \sgn(\ev{\vg_i, \bv_*})$, where $\bv_*$ be an unknown unit-norm direction. 
To analyze the generalization of various learners, we will allow ourselves to study labels generated by various directions $\bv$, not just the true $\bv_*$, all of which we assume are unknown.\footnote{To study weak supervision, we take $\bv = \bw_{\weak}$, the direction that the weak model learns.}
For multiclass classification, we assume the labels are generated via $y = \argmax_{j \in [k]} \langle \bg, \bv_*^{(j)}\rangle$, where $\bv_*^{(1)}, \ldots, \bv_*^{(k)}$ are all unknown.

Following \citet{MNS+21,subramanian2022generalization,WS24}, we will make the following assumption on the true label directions to simplify the analysis. 
However, to study weak-to-strong generalization, we will eventually have to analyze a weak label direction which is \emph{not} 1-sparse. 
One of our main contributions is showing how to analyze this case.
\begin{assumption}[1-sparse assumption]\label{assump:1-sparse}
    We say that the labels satisfy the $1$-sparse assumption relative to covariance $\Sigma$ if the following holds. 
    The label defining direction $\bv_*$ (directions $\bv_*^{(1)}, \ldots, \bv_*^{(k)}$ for multiclass) is aligned with a top eigenvector (top-$k$ eigenvectors for multiclass) of $\Sigma$ such that, in an eigenbasis where $\bv_* = \be_1$ and $\bv_*^{(i)} = \be_i$, respectively, we have 
    \begin{align*}
        &y = \sgn(x_1) \tag{Binary} \\
        &y = \argmax_{j \in [k]} x_j \tag{Multiclass}
    \end{align*}
\end{assumption}
For example, if the $1$-sparse assumption holds for the strong covariance, then in a strong eigenbasis where the strong features are independent, the labels are generated by axis-aligned directions.
\vskip3pt
\textbf{Bi-level ensemble and weak-to-strong ensemble.}
To simplify the analysis, we follow \citet{MNS+21,subramanian2022generalization,WS24} and assume the eigenvalues $\Lambda$ of the covariance are parameterized by the bi-level ensemble defined shortly.
The bi-level ensemble is a simple overparameterized version of the well-known spiked covariance model for PCA. 
\begin{definition}[Bi-level ensemble]\label{def:bilevel}
    Let $\bx \sim N(0, \Sigma)$, where the covariance $\Sigma = U \Lambda U^\top$. 
    The bi-level ensemble parameterizes $\Lambda = \Lambda(p,q,r)$, where $p > 1$, $0 \leq r < 1$, and $0 < q < (p - r)$. 
    % Here, $p$ controls the extent of overparameterization, $r$ determines the number of favored features, and $q$ controls the weights on favored features.
    The number of features ($d$), number of spiked directions ($s$), and degree of favoring ($a$)  all scale with the number of training points ($n$) as follows:
    \begin{align}
        d = \lfloor n^p \rfloor, s = \lfloor n^r \rfloor, a = n^{-q}\mper \label{eq:bilevelparamscaling}
    \end{align}
    Then $\Lambda = \diag(\lambda_i)_{i \in [d]}$, where 
    \begin{align}
    \lambda_j = \begin{cases} \frac{ad}{s}, & 1 \leq j \leq s \\
    \frac{(1-a)d}{d-s}, & \mathrm{otherwise}\end{cases} . \label{eq:bilevellambdascaling}
    \end{align}
    If the above holds, we refer to $\bx$, $\Lambda$, or $\Sigma$ as being drawn from the bi-level ensemble.
    We use the shorthands $\lambda_F \triangleq \frac{ad}{s}$ and $\lambda_U \triangleq \frac{(1-a)d}{d-s}$ to denote the favored and unfavored eigenvalues, respectively.
\end{definition}
In particular, the parameterization controls the total number of features, $d = n^p \gg n$, as well as the dimension of the low-rank subspace $s = n^r\ll n$.
For multiclass classification, we allow the number of classes to also scale with the number of datapoints $n$.
\begin{definition}[Scaling for multiclass]\label{def:scaling-k}
    For multiclass classification with $k$ classes, we have $k = c_k\floor{n^t}$, where $0 \le t < r$ and $c_k$ is a positive integer.
\end{definition}

We now pin down a concrete weak-to-strong ensemble, which boils down to specifying the joint distribution of the weak and strong features. 
We impose a subset relationship between the weak and strong features in the strong basis; see \Cref{fig:subset-ensemble} for a diagram. 
% To simplify notation, we will first rotate to the strong eigenbasis $U$ for the strong covariance $\Sigma$.
% First, let us describe the marginal distributions of the features after rotating to $U$. 
% By definition of the bi-level ensemble, the marginal distribution of the strong features is $\bx_{\strong} \sim N(0, \Lambda)$, where $\Lambda = \Lambda(p, q, r)$ follow the bi-level ensemble. 
% We then assume that the marginal distribution of the weak features in this basis is $\bx_{\weak} \sim N(0, \Lambda_{\weak})$, where $\Lambda_{\weak} = \Lambda(p_{\weak}, q_{\weak}, r_{\weak})$ follows a weak bi-level ensemble.

% Having specified the marginals, we now fully define the joint distribution, which imposes a subset relationship between the weak and strong features in the strong basis; see \Cref{fig:subset-ensemble} for a diagram. 
\begin{assumption}[Weak-to-strong subset ensemble]\label{assump:w2s-ensemble}
    Let $\Lambda = \Lambda(p, q, r) \in \R^{d \times d}$ denote the strong eigenvalues and $\Lambda_{\weak} = \Lambda(p_{\weak}, q_{\weak}, r_{\weak}) \in \R^{d_{\weak} \times d_{\weak}}$ denote the weak eigenvalues, both drawn from the bi-level ensemble. 
    Let $\lambda_{F, \weak} \triangleq \frac{a_{\weak}d_{\weak}}{s_{\weak}}$ and $\lambda_{U, \weak} \triangleq \frac{(1-a_{\weak})d_{\weak}}{d_{\weak}-s_{\weak}}$ denote the weak favored and unfavored eigenvalues, respectively.
    Let $U$ be any distinguished eigenbasis of $\Sigma$ where $\bv_* = \be_1$ (\Cref{assump:1-sparse}).
    The weak and strong features in the basis $U$ are related as follows.
    \begin{enumerate}[label=\normalfont{(\arabic*)}]
        \item $\bx_{\strong} \sim N(0, \Lambda)$, where $\Lambda = \lambda_F I_{[s]} + \lambda_U I_{[d] \setminus [s]}$.
        \item There exists subsets of coordinates $S \subseteq [s], T \subseteq [d] \setminus [s]$, with $1 \in S$ and $\abs{S} = s_{\weak}$, such that
        \[
            \bx_{\weak} = (\sqrt{\tfrac{\lambda_{F,\weak}}{\lambda_{F}}} \Pi_S + \sqrt{\tfrac{\lambda_{U,\weak}}{\lambda_{U}}} \Pi_{T}) \bx_{\strong}\stackrel{d}{=} N(0, \lambda_{F, \weak} I_{S} + \lambda_{U, \weak} I_{T})\mper
        \]
        Here, $\Pi_S$ denotes the projection onto the axis-aligned subspace indexed by $S$.
    \end{enumerate}
    We will often abuse notation and restrict $\bx_{\weak} \in \R^d$ to the coordinates in $S \cup T$, viewing $\bx_{\weak} \in \R^{d_{\weak}}$.
    Hence, the weak favored features are a subset of the strong favored features, the weak unfavored features are a subset of the strong unfavored features, with different bi-level scalings, and furthermore the $1$-sparse assumption holds for both $\Sigma$ and $\Sigma_{\weak}$.
\end{assumption}
Clearly, under \Cref{assump:w2s-ensemble}, \cref{item:capability} is satisfied, and the subset ensemble is essentially the simplest relationship between the features one could impose to achieve the desideratum.
In \Cref{sec:discussion}, we discuss how we expect the results to change if we relax these data modeling assumptions.
\begin{figure}
    \centering
    \includegraphics[width=0.4\linewidth]{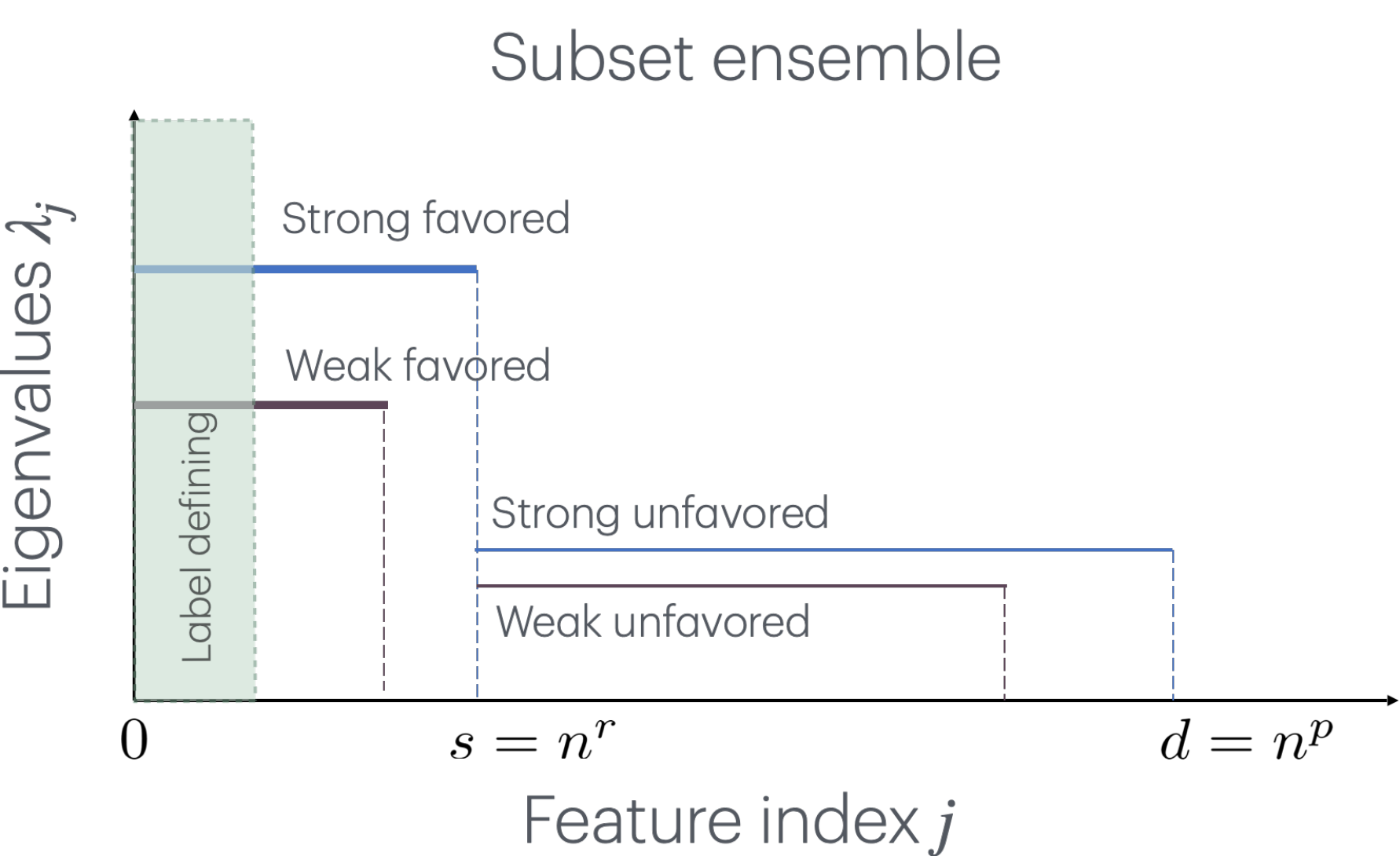}
    \caption{Visualization of subset ensemble (\Cref{assump:w2s-ensemble}) relating weak and strong features. 
    Notice the decreased favoring for the weak features, and how the weak features are a subset of the strong features in their respective category. 
    Hence, a linear model on the strong features can simulate one on the weak features (\Cref{item:capability}).
    Moreover, the label defining directions, represented by the green shaded box, are in the span of both the weak and strong features.}
    \label{fig:subset-ensemble}
\end{figure}
% \begin{definition}[Tri-level ensemble with strong and weak features]\label{def:w2s-ensemble}
%     Let $\rho \in [0, 1]$ be a correlation parameter, which can depend with $n$. 
%     We have $x \sim N(0, \Sigma)$, where $\Sigma = U \Lambda U^\top$. 
%     Suppose that $\fweak = \argmax_{i \in [k]} \xweak[i]$ in the multiclass case and $\fweak = \sgn(\xweak)$ in the binary case. 
%     Let $\lambda_F \triangleq \frac{ad}{s}$ and $\lambda_W \triangleq \frac{a_{\weak}d}{s}$.
%     For the strong learner, in the basis given by $U$ we have 
%     \begin{align*}
%         x_1 &= \sqrt{\lambda_F} (\rho x_{*} + \sqrt{1-\rho^2} h_1) \\
%         x_i &= \sqrt{\lambda_W} x_{\weak}[i-1] \quad  2 \le i \le k+1 \\
%         x_i &= \sqrt{\lambda_F} h_i \quad k+2 \le i \le s \\
%         x_j &= \sqrt{\lambda_U} h_j \quad j > s.
%     \end{align*}
%     where $x_{*} = \ev{\vg, \bv_*}$ and $x_{\weak} = \ev{\vg, \bv_{\weak}}$ are standard Gaussians, and $h_i$ are iid standard Gaussians independent of $x_*$ and $x_{\weak}$.
% \end{definition}

% \vskip3pt
% \textbf{Learning algorithm}
\subsection{Learning algorithm}\label{sec:learning-alg}

Our models are all trained using minimum $\ell_2$-norm interpolation (MNI), which corresponds to the asymptotic behavior of gradient descent with zero initialization \citep{ji2021characterizing}.
Define the data matrix $\mX \in \R^{n \times d}$ by $\mX^\top \triangleq \mqty[ \vx_1 & \cdots & \vx_n]$.
Let $\vy^{(i)} \in \R^n$ be the (centered) one-hot vector for class $i \in [k]$. 
In MNI, we learn a linear score function $\vf^{(i)}$ via the optimization problem
\begin{align*}
    \vf^{(i)} &= \arg \min_\vf \| \vf \|_2 \quad\quad \text{s.t.}\ \mX \vf = \vy^{(i)}. \tag{MNI}
\end{align*}
The coefficients of $\vf^{(i)}$ can be easily computed to be 
    $\vf^{(i)} = \mX^\top (\mX \mX^\top)^{-1} \vy^{(i)} \triangleq \mX^\top \Ainv \vy^{(i)},$
where the matrix $\mA \triangleq \mX \mX^\top \in \R^{n \times n}$ is the unnormalized Gram matrix and invertible almost surely.
% Since $\mA$ is invertible almost surely, the above expression is well-defined.

At test time, given a fresh sample $\vx_{\mathsf{test}} \sim N(0, \Sigma)$, in the binary case we predict $\wh{y} = \sgn(\ev{\vf, \vx_{\test}})$, and for multiclass we predict the class with the highest score: $\wh{y} = \argmax_{i \in [k]} \ev{\vf^{(i)}, \vx_{\mathsf{test}}}.$ To be very explicit, let us formally define the end-to-end traning procedure for weak-to-strong binary classification (cf. the description before \Cref{rem:pseudolabels}), with an obvious extension to multiclass settings.
\begin{procedure}[Weak-to-strong training]\label{def:w2s-training}
    The weak learner observes an initial dataset of $n$ datapoints $(\wt{\bx}_{i,\weak}, y_i)_{i \in [n]}$, where $\wt{\bx}_{i,\weak}$ are the weak features for the $i$th datapoint and $y_i = \sgn(\ev{\bg_i, \bv_*})$ is the corresponding clean hard label. 
    We train $\fweak \in \R^{d_{\weak}}$ using MNI on these $n$ clean datapoints.
    
    Then, both learners observe $m$ extra unlabeled datapoints, where the weak model sees weak features $(\bx_{j,\weak})_{j \in [m]}$ and the strong model sees the corresponding strong features $(\bx_{j,\strong})_{j \in [m]}$.
    % Hence, $\bx_{j,\weak}$ and $\bx_{j,\strong}$ correspond to the weak and strong features, respectively, for the $j$th datapoint.
    Generate $m$ hard pseudolabels via $\wh{y}_{j,\weak} = \sgn(\ev{\fweak, \bx_{j,\weak}})$, and use MNI to train $\fwts \in \R^d$ on $(\bx_{j,\strong}, \wh{y}_{j, \weak})_{j \in [m]}$.
\end{procedure}

%!TEX root = ./main.tex

\section{Main results}\label{sec:main-results}
In this section, we state our main results for binary classification (\Cref{thm:achieve}) and multilabel classification (\Cref{thm:achieve-multilabel}). 
We focus on the regime $q+r > u$ and $q_{\weak} + r_{\weak} > 1$, since \citet[Theorem 13]{MNS+21} implies that these are the only nontrivial regimes for binary classification under the bi-level ensemble. 

% In view of the desiderata, we focus on the regime of the bi-level ensemble where the spiked subspace in the population covariance is not recoverable through PCA; by \citep{shen2013surprising,fan2015asymptotics} this corresponds to $q+r > u$ and $q_{\weak} + r_{\weak} > 1$, but we expect the analysis to carry through in the complementary regimes.

In this regime, we also tighten the previous error rates for binary and multiclass classification in the bi-level ensemble from \cite[Theorem 3.2]{WS24}; the proof can be found in \Cref{sec:lower-bound}.
To that end, we establish a new concentration inequality for the lower tail of the maximum of correlated Gaussians, which may be of independent interest; we defer its statement to the end of this section. 
% \begin{theorem}[Regimes with clean labels]\label{thm:regimes-clean}
%     Suppose the strong features have bi-level covariance $\Sigma = \Sigma(p, q, r)$ (\cref{def:bilevel}), where $q+r>1$, the true multiclass labels are $1$-sparse (\Cref{assump:1-sparse}), and the number of classes $k = n^t$ (\Cref{def:scaling-k}). 
%     Let $\tau_{\strong} \triangleq p+1 - 2(q+r)$.
%     Then the test error for $\fstrong$ trained with MNI with $n$ clean multiclass labels satisfies
%     \begin{align}
%         \E[\ell(\fstrong)] & = \begin{cases} o_n(1) ,& \ \mathrm{if} \  k \ll \min\qty{1-r, \tau_{\strong}} \\
%          1-\Theta\qty(\frac{1}{k}),& \ \mathrm{if} \ t > \min\qty{1-r, \tau_{\strong}}
%          \end{cases}.
%          \label{eq:multiclass-regimes}
%    \end{align}
%    Furthermore, for binary classification (i.e. $k=2$), the explicit error rates are given by   
%    \begin{align}
%        \E[\ell(\fstrong)] = \frac{1}{2} - \frac{1}{\pi}\arctan(\Theta(n^{\tau_{\strong}})).
%    \end{align}
% \end{theorem}
\begin{theorem}[{Regimes with clean labels}]\label{thm:regimes-clean}
    Suppose the strong features have bi-level covariance $\Sigma = \Sigma(p, q, r)$ (\cref{def:bilevel}), where $q+r>1$, the true multiclass labels are $1$-sparse (\Cref{assump:1-sparse}), and the number of classes $k = \floor{n^t}$ (\Cref{def:scaling-k}). 
    Then the test error for $\fstrong$ MNI-trained with $n$ clean multiclass labels satisfies
    \begin{align}
        \E[\ell(\fstrong)] & = \begin{cases} o_n(1) ,& \ \mathrm{if} \  t < \min\qty{1-r, \tau_{\strong}} \\
         1-\Theta\qty(\frac{1}{k}),& \ \mathrm{if} \ t > \min\qty{1-r, \tau_{\strong}}
         \end{cases},
         \label{eq:multiclass-regimes}
   \end{align}
   where $\tau_{\strong} \triangleq p+1 - 2(q+r)$.
   Furthermore, for binary classification (i.e. $k=2$), the explicit error rates are given by   
   \begin{align}
       \E[\ell(\fstrong)] = \frac{1}{2} - \frac{1}{\pi}\arctan(\Theta(n^{\tau_{\strong}})).
   \end{align}
\end{theorem}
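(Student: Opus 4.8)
The plan is to run the \emph{survival--contamination} analysis of \cite{BLLT20,MNS+21,WS24}, sharpened to two-sided estimates so that exact rates fall out. Writing the MNI estimator for class $i$ as $\vf^{(i)} = \mX^\top\mA^{-1}\vy^{(i)}$ and expanding it in the distinguished eigenbasis of \Cref{assump:1-sparse}, the score of a fresh test point under class $i$ splits, conditionally on $\mX$ and the labels, as
\begin{align*}
\ev{\vf^{(i)},\vx_{\test}} \;=\; \underbrace{f^{(i)}_i\,(\vx_{\test})_i}_{\text{survival}} \;+\; \underbrace{\sum_{\ell\neq i} f^{(i)}_\ell\,(\vx_{\test})_\ell}_{\text{contamination}},
\end{align*}
where $\mathrm{SU}_i \triangleq f^{(i)}_i$ is the survival of the true direction $\be_i$ and, conditionally, the contamination is Gaussian with variance $\mathrm{CN}_i^2 \triangleq \sum_{\ell\neq i}(f^{(i)}_\ell)^2\lambda_\ell$. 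Thus, after conditioning, correctly predicting a test point's label is exactly the event that a (near-)Gaussian vector over $[k]$ with $i$-th entry $\mathrm{SU}_i(\vx_{\test})_i$ and noise level $\mathrm{CN}_i$ is maximized at the true class; by the permutation symmetry of \Cref{assump:1-sparse} among the top-$k$ coordinates the laws of $\mathrm{SU}_i,\mathrm{CN}_i$ are independent of $i$, and the problem reduces to (a) two-sided control of $\mathrm{SU},\mathrm{CN}$ and (b) a maximum-of-correlated-Gaussians comparison.

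For (a) I would use Gram-matrix concentration. The crucial structural fact, which holds \emph{because} $q+r>1$, is that the unfavored block of $\mA$ acts like an effective ridge $\mu \triangleq \lambda_U(d-s) \asymp \lambda_U d$ that dwarfs the favored block (whose contribution to $\mA$ has operator norm $\asymp\lambda_F n = o(\mu)$), so $\mA^{-1}$ behaves like $\mu^{-1}I_n$ on the vectors in play. Isolating the true direction by Sherman--Morrison (removing the $i$-th feature column $\mathbf{z}_i$ of $\mX$) gives $\mathrm{SU}_i = (1+o(1))\,\mu^{-1}\,\mathbf{z}_i^\top\vy^{(i)}$, whose expectation I would compute via the order statistics of $k$ i.i.d.\ Gaussians (yielding a $\mathrm{polylog}(n)/k$ shrinkage of $\mathrm{SU}_i$ relative to the binary case, since only $\approx n/k$ samples carry class-$i$ signal); summing $\E[(f^{(i)}_\ell)^2\lambda_\ell]$ over the $d-s$ unfavored coordinates (which dominate; the favored cross-terms are handled exactly as in \cite{WS24}) pins down $\mathrm{CN}_i \asymp \sqrt{n/(kd)}$. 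Inserting the bi-level scalings of \Cref{def:bilevel}, the ratio $\rho\triangleq\mathrm{SU}\sqrt{\lambda_F}/\mathrm{CN}$ is $\Theta(n^{\tau_{\strong}})$ in the binary case and shrinks with $k$ in the $k$-class case; weighing that shrinkage against the $\sqrt{\log k}$ incurred by competing with $k-1$ other classes and the $\asymp 1/\sqrt{\log k}$ spread between the top two order statistics of $k$ Gaussians yields precisely the threshold $t<\tau_{\strong}$.

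Step (b) then closes both cases. For binary classification the test score is $\mathrm{SU}\sqrt{\lambda_F}\,\tilde g + \mathrm{CN}\,\xi$ with $\tilde g,\xi$ independent standard Gaussians and true label $\sgn(\tilde g)$, so the error equals $\tfrac1\pi\arccos(\rho/\sqrt{1+\rho^2}) = \tfrac12 - \tfrac1\pi\arctan(\rho)$ up to the lower-order slack above, i.e.\ $\tfrac12 - \tfrac1\pi\arctan(\Theta(n^{\tau_{\strong}}))$, which in particular reproduces the $o_n(1)$ versus $\tfrac12-o_n(1)$ dichotomy. For multiclass, conditioning on the true class owning the largest favored coordinate, I would compare its score to the maximum of the $k-1$ competing scores (Gaussians with pairwise correlations of order $1/k$, sharing the favored coordinates): when $t<\min\{1-r,\tau_{\strong}\}$ the true signal coordinate, of order $\sqrt{\lambda_F\log k}$, beats all competitors with probability $1-o_n(1)$ --- the delicate point being the \emph{lower} tail of the winning margin, which is exactly where our new lower-tail inequality for the maximum of correlated Gaussians enters --- so $\E[\ell(\fstrong)]=o_n(1)$; when $t$ exceeds the threshold the contamination overwhelms the signal and the true class is the argmax with probability only $\Theta(1/k)$, giving $\E[\ell(\fstrong)]=1-\Theta(1/k)$. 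The other branch $1-r$ of the minimum (binding when $1-r<\tau_{\strong}$) reflects a separate constraint --- heuristically $ks<n$ --- needed for the favored features to carry all $k$ axis-aligned class directions at once; once it fails the survival collapses for a different reason.

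The main obstacle is this lower-tail control in the multiclass success regime: certifying $\E[\ell(\fstrong)]=o_n(1)$ forces the true class to beat \emph{every} one of $k$ competitors simultaneously, which rests on a sharp lower bound for the maximum of many correlated Gaussians. The usual machinery (Borell--TIS / Gaussian concentration, Sudakov--Fernique and Slepian, the Gaussian Poincar\'e inequality) controls the mean or the \emph{upper} tail of such a maximum, whereas the binding estimate here is a matching \emph{lower}-tail bound that is sensitive to the correlation structure; establishing it (stated separately at the end of this section) is the crux. The remaining ingredients --- Gram-matrix concentration, the leave-one-out survival computation, and the bi-level bookkeeping --- are by now routine after \cite{BLLT20,MNS+21,WS24}, but sharpening them to the two-sided precision needed for the exact $\arctan$ rate, rather than just the coarse dichotomy, still takes some care.
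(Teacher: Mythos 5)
Your overall architecture --- the survival/contamination decomposition, the $\tfrac12-\tfrac1\pi\arctan(\su/\cn)$ formula for the binary rate, and the reduction of the multiclass error to a comparison with the maximum of correlated Gaussians --- matches the paper, which in fact delegates the $o_n(1)$-vs-failure dichotomy and the threshold $\min\{1-r,\tau_{\strong}\}$ entirely to \citet[Theorem 3.2]{WS24} and only supplies a new argument for the tightened rate $1-\Theta(1/k)$ in the failure regime (\Cref{thm:tight-rate}). However, your plan for that new part contains a genuine error in the correlation structure. The competing class scores' noise terms $g_i$ do \emph{not} have pairwise correlations of order $1/k$: because classification is decided by pairwise score differences against the true class, the relevant Gaussians satisfy $\E[g_ig_j]\approx\tfrac12$ (formally $\le\tfrac12+n^{-\delta}$, \citet[Proposition F.1]{WS24}; think $\mathrm{Corr}(a-b,a-c)=\tfrac12$ for exchangeable contaminations $a,b,c$). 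This is not a cosmetic point: the $\Theta(1/k)$ rate is precisely the evaluation of the lower-tail bound $\Pr[\max_{i\in[N]}g_i\le t_N]\asymp N^{1-1/\rho_0}(\log N)^{\cdots}$ at $\rho_0=\tfrac12$. If the correlations really were $o(1)$, the probability that the max of $k-1$ competitors falls below the polynomially small signal threshold would be roughly $2^{-k}$, and you would be proving an error rate of $1-\Theta(2^{-k})$, contradicting the theorem.

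Relatedly, you have placed the new lower-tail inequality in the wrong regime. In the success regime ($t<\min\{1-r,\tau_{\strong}\}$) the binding estimate is a standard \emph{upper} tail on $\max_i g_i$ together with the lower bound on the top order statistic of the signal coordinates; WS24 already covers this and no new inequality is needed there. The new inequality (\Cref{thm:correlated-gaussians}) is used in the \emph{failure} regime: there $\su/\cn\le n^{-u}$ and $\max_j|x_j|=O(\sqrt{\log k})$, so being accidentally correct requires $\max_ig_i\le n^{-u}\sqrt{\log k}$, and one needs $\Pr[\max_{i}g_i\le t_N]\le O(1/k)$ for a threshold $t_N$ barely above zero --- exactly the deep lower-tail deviation the theorem controls (applied with $\rho_0=\tfrac12+n^{-\delta}$, $\delta_0=1/\log k$). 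Your plan also omits the matching lower bound $\Pr[\text{correct}]\ge\Omega(1/k)$, which the paper obtains via Slepian's lemma and the known asymptotics for $\Pr[\max_i\underline{g_i}\le0]$ from \citet[Theorem 2.1]{PSZ21}; without it you only get $1-O(1/k)$, not $1-\Theta(1/k)$.
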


By generalizing other results from \cite{WS24} to handle imperfect labels, we formally establish weak-to-strong generalization for binary classification in the subset ensemble. We give a brief technical overview of the proof in \Cref{sec:tech-overview} and prove it formally in \Cref{sec:subset-proof}. 
\begin{restatable}[Weak-to-strong generalization for subset ensemble]{theorem}{subsetwts}\label{thm:weak-to-strong-simple}
    Consider the setup in \Cref{def:w2s-training} where the weak model $\fweak$ is MNI-trained on $n$ correctly labeled examples and the strong model $\fwts$ is MNI-trained on $m = n^u$ weakly pseudolabeled examples, where $q+r > u$ and $q_{\weak} + r_{\weak} > 1$.
    In addition, we make the following data assumptions:
    \begin{enumerate}[label=\normalfont{(\arabic*)}]
        \item The true binary labels are $1$-sparse  (\Cref{assump:1-sparse}) for the strong covariance.
        \item The weak and strong features follow the subset ensemble (\Cref{assump:w2s-ensemble}) with bi-level eigenvalues $\Lambda(p, q, r)$ and $\Lambda(p_{\weak}, q_{\weak}, r_{\weak})$, respectively, scaled relative to $n$.
        \item There are not too many weakly labeled examples: 
        $u < \frac{p+1+q+r-(q_{\weak}+r_{\weak})}{2}$.
    \end{enumerate}
    Recall $\tau_{\strong} \triangleq p+1-2(q+r)$. 
    Then, the resulting test error for $\fwts$ satisfies
    \begin{align}
        \E[\ell(\fwts)] & = \begin{cases} o_n(1) ,& \ \mathrm{if} \ u > q_{\weak} + r_{\weak} - \min\qty{1-r, \tau_{\strong}}\\
        \frac{1}{2} - o_n(1) ,& \ \mathrm{if} \ u < q_{\weak} + r_{\weak} - \min\qty{1-r, \tau_{\strong}}.
    \end{cases}
\end{align}
\end{restatable}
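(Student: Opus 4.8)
The plan is to reduce the weak-to-strong question to two applications of a clean-label analysis---one for the weak model and one for the strong model---connected by a careful understanding of what the weak pseudolabels ``look like'' in the strong eigenbasis. First I would analyze $\fweak$: since the weak features satisfy the $1$-sparse assumption and $q_{\weak}+r_{\weak}>1$, \Cref{thm:regimes-clean} (with $k=2$, applied in the weak ensemble) tells us $\fweak$ asymptotically does not generalize, and moreover I would extract the \emph{direction} $\bw_{\weak}$ that $\fweak$ learns, expressed in the strong basis via the embedding of \Cref{assump:w2s-ensemble}. The key structural fact is that $\bw_{\weak}$ has a small but nonzero component along $\be_1$ (the true direction, which lies in $S$) together with a large ``contamination'' spread across the other weak coordinates $S\cup T$; crucially, because of the subset ensemble, $\bw_{\weak}$ lies entirely within the strong feature space, so the strong model can represent it exactly (\Cref{item:capability}).

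Next I would treat the pseudolabels $\wh y_{j,\weak} = \sgn(\ev{\fweak, \bx_{j,\weak}})$ as labels generated by the direction $\bv = \bw_{\weak}$ acting on the strong features, and reuse the binary-classification machinery of \cite{WS24} (as invoked in \Cref{thm:regimes-clean}) but now with a label direction that is \emph{not} $1$-sparse---this is exactly the case the paper flags as a main contribution. The strong model $\fwts$ is MNI-trained on $m=n^u$ such examples. I would decompose the generalization of $\fwts$ into a ``signal'' term (alignment of $\fwts$ with $\be_1$, coming from the $\be_1$-component of $\bw_{\weak}$, which is consistent across all $m$ pseudolabels) and a ``survival/contamination'' analysis governing how much of the true signal the MNI solution retains versus how much noise from the non-$1$-sparse part of $\bw_{\weak}$ bleeds through. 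The two thresholds in the statement---$1-r$ and $\tau_{\strong}=p+1-2(q+r)$---should emerge respectively from (a) the requirement that $m$ be small enough that the favored subspace isn't ``saturated'' (the $1-r$ term, analogous to the $t<1-r$ condition in \Cref{thm:regimes-clean} with $m$ playing the role of the effective sample size and the pseudolabel noise playing the role of class multiplicity) and (b) the survival-of-signal condition for benign overfitting in the bi-level ensemble (the $\tau_{\strong}$ term). Assumption (3), $u < \tfrac{p+1+q+r-(q_{\weak}+r_{\weak})}{2}$, should be precisely what ensures the $\be_1$-signal in $\bw_{\weak}$ (which scales like a negative power of $n$ determined by the weak analysis, roughly $n^{-(q_{\weak}+r_{\weak}-1)/?}$ times the weak favoring) is still strong enough relative to $m$ for the strong model to pick it up rather than be swamped.

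Concretely, the steps in order: (i) run \Cref{thm:regimes-clean} on the weak ensemble to get $\E[\ell(\fweak)]=\tfrac12-o_n(1)$, establishing \Cref{item:subopt-weak}; (ii) write $\bw_{\weak}=\mX_{\weak}^\top \Ainv_{\weak}\vy$ explicitly and push it through the isometric embedding of \Cref{assump:w2s-ensemble} to get its coordinates in the strong basis, isolating the $\be_1$-coefficient $c_1$ and bounding the norm of the orthogonal part; (iii) observe the pseudolabel vector $\wh\vy_{\weak}\in\{\pm1\}^m$ can be written as $\sgn$ of strong-feature inner products with $\bw_{\weak}$, and split $\wh y_{j,\weak} = y_j + (\text{sign flips})$ where the flip probability is $\tfrac12-o_n(1)$ but---and this is the subtle point---the flips are \emph{not} independent of the strong features, they are a deterministic function of the $S\cup T$ coordinates; (iv) feed this into the strong MNI generalization bound, tracking survival of the $\be_1$ direction and contamination, adapting the \cite{WS24} binary analysis to the non-sparse label direction; (v) identify the phase transition at $u = q_{\weak}+r_{\weak}-\min\{1-r,\tau_{\strong}\}$, checking both directions. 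The main obstacle I anticipate is step (iv) combined with the dependence structure noted in (iii): the standard benign-overfitting arguments assume the label noise is independent of (or at least nicely behaved with respect to) the covariates, but here the pseudolabel errors are measurable with respect to a \emph{subset} of the strong coordinates, so the contamination term picks up correlations between the noise and the MNI coefficients along exactly those coordinates. Handling this will likely require conditioning on the weak coordinates, analyzing the strong MNI solution with those coordinates frozen, and showing that the induced ``structured'' noise still averages out over the $m$ samples as long as $u$ exceeds the stated threshold---essentially showing that even adversarially-structured-but-$1/2$-biased pseudolabels are enough, because the tiny consistent signal $c_1$ accumulates linearly in $m$ while the structured noise only grows like $\sqrt{m}$ in the relevant directions.
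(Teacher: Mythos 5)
Your proposal follows essentially the same route as the paper: it computes the coordinates of the learned weak direction $\bw_{\weak}$ in the strong eigenbasis, reinterprets the pseudolabels as labels generated by that (non-$1$-sparse) direction, and then runs a survival-versus-contamination analysis for the strong MNI solution, with your "condition on the weak coordinates / frozen coordinates" idea corresponding precisely to the paper's device of writing $\bw_{\weak}$ as $3$-sparse in a distinguished strong eigenbasis and decorrelating the Gram matrix from those directions via Woodbury plus Hanson--Wright. The plan is sound and the anticipated obstacle (pseudolabel noise measurable with respect to a subset of the strong coordinates) is exactly the one the paper resolves this way.
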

In other words, under the stated parameter regimes,  weak-to-strong generalization occurs when the strong model is trained on sufficiently many weak binary labels.\footnote{We expect weak imitation to occur as soon as $m \gg d$ if the models are trained using gradient descent, i.e. the classic underparameterized regime with more weakly labeled datapoints than features.}
\Cref{fig:regimes} demonstrates that the conditions for weak-to-strong generalization are not vacuous.
We make the third enumerated assumption for technical reasons, but we believe this condition is essentially tight. 

By combining \Cref{thm:regimes-clean,thm:weak-to-strong-simple}, we obtain the exact conditions where the main desiderata from \Cref{sec:setup} are satisfied in the bi-level ensemble. 
The regimes where our theorem applies are depicted in \Cref{fig:pu-regime,fig:pu-regime-2}, and we validated our theory with numerical simulations of MNI with $n=50$ in \Cref{fig:w2s-mni-1-main,fig:w2s-mni-1-fail-main}; see \Cref{sec:experiments} for more details on the experiments.
\begin{theorem}[Main result]\label{thm:achieve}
    Under the same setup as \Cref{thm:weak-to-strong-simple}, all the core weak-to-strong desiderata \Cref{item:subopt-weak,item:capability,item:success-strong} are satisfied under the following conditions:
        \begin{align*}
            u &> q_{\weak} + r_{\weak} - \min\qty{1-r, \tau_{\strong}} \tag{Weak-to-strong generalization} \\
            0 &> \tau_{\weak}, \tag{Weak model does not generalize} 
        \end{align*}
    where $\tau_{\weak} \triangleq p_{\weak} + 1 - 2(q_{\weak} + r_{\weak})$ and $\tau_{\strong} \triangleq p + 1 - 2(q+r)$.
    In particular, the conditions on the number of weak examples $m = n^u$ are non-vacuous as long as 
    \begin{align*}
        \tau_{\mathsf{w2s}} \triangleq p+1 - (q+r+q_{\weak}+r_{\weak}) > 0.
    \end{align*}
\end{theorem}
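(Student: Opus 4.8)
The plan is to derive \Cref{thm:achieve} as a bookkeeping consequence of \Cref{thm:regimes-clean} and \Cref{thm:weak-to-strong-simple}, verifying the three core desiderata one at a time and then checking that the stated arithmetic makes the admissible range of $u$ nonempty. \Cref{item:capability} needs no new argument: under the subset ensemble (\Cref{assump:w2s-ensemble}), in the distinguished basis $U$ the map $\bx_{\strong}\mapsto\bx_{\weak}$ is a coordinate projection composed with a diagonal rescaling, so any linear score on $\bx_{\weak}$---in particular $\fweak$---is reproduced exactly by a linear score on $\bx_{\strong}$, as already noted after \Cref{assump:w2s-ensemble}. For \Cref{item:subopt-weak}, I apply the binary case of \Cref{thm:regimes-clean} to the weak learner: $\fweak$ is MNI-trained on $n$ correctly labeled binary examples whose features have bi-level covariance $\Sigma(p_{\weak},q_{\weak},r_{\weak})$, the labels are $1$-sparse relative to $\Sigma_{\weak}$ by \Cref{assump:w2s-ensemble}, and $q_{\weak}+r_{\weak}>1$ by hypothesis, so the theorem applies with the substitution $(p,q,r)\mapsto(p_{\weak},q_{\weak},r_{\weak})$ and gives $\E[\ell(\fweak)] = \tfrac12 - \tfrac1\pi\arctan(\Theta(n^{\tau_{\weak}}))$. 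When $\tau_{\weak}<0$ we have $n^{\tau_{\weak}}\to 0$, hence $\arctan(\Theta(n^{\tau_{\weak}}))\to 0$ and $\E[\ell(\fweak)] = \tfrac12 - o_n(1)$, which is exactly \Cref{item:subopt-weak}.

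Next, \Cref{item:success-strong} is immediate from \Cref{thm:weak-to-strong-simple}: its hypotheses---true binary labels $1$-sparse for $\Sigma$, the subset ensemble, $q+r>u$, $q_{\weak}+r_{\weak}>1$, and the cap $u < \tfrac12(p+1+q+r-(q_{\weak}+r_{\weak}))$---are inherited verbatim, so its first branch yields $\E[\ell(\fwts)] = o_n(1)$ under $u > q_{\weak}+r_{\weak}-\min\{1-r,\tau_{\strong}\}$, which is precisely the first displayed condition. It remains to check non-vacuousness, i.e.\ that some $u$ satisfies $1 < u < q+r$, $u < \tfrac12(p+1+q+r-(q_{\weak}+r_{\weak}))$, and $u > q_{\weak}+r_{\weak}-\min\{1-r,\tau_{\strong}\}$ simultaneously. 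The crux is the single inequality $q_{\weak}+r_{\weak}-\tau_{\strong} < \tfrac12(p+1+q+r-(q_{\weak}+r_{\weak}))$: substituting $\tau_{\strong}=p+1-2(q+r)$ and clearing the factor $\tfrac12$, it collapses to $q+r+q_{\weak}+r_{\weak} < p+1$, i.e.\ precisely $\tau_{\mathsf{w2s}}>0$; the same substitution shows $q_{\weak}+r_{\weak}-\tau_{\strong} < q+r$ under $\tau_{\mathsf{w2s}}>0$, and $q+r>1$ together with $\tau_{\mathsf{w2s}}>0$ pushes both upper bounds strictly above $1$, so in the regime $\min\{1-r,\tau_{\strong}\}=\tau_{\strong}$ the window is a genuine open interval; the complementary regime $\min\{1-r,\tau_{\strong}\}=1-r$ is dispatched by the same elementary rearrangement.

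I expect no real analytic obstacle---all the heavy lifting already lives in \Cref{thm:regimes-clean} and \Cref{thm:weak-to-strong-simple}. The points that need care are (i) confirming that the weak-learner instance literally meets the hypotheses of the clean-label theorem, especially the $1$-sparseness relative to $\Sigma_{\weak}$ (which \Cref{assump:w2s-ensemble} supplies), (ii) the elementary limit $\arctan(\Theta(n^{\tau_{\weak}}))\to 0$ when $\tau_{\weak}<0$, and (iii) keeping the case split in the $\min\{1-r,\tau_{\strong}\}$ term consistent between the generalization threshold and the feasibility check for $u$.
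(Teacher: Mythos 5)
Your overall route is the same as the paper's: \Cref{thm:achieve} is treated as a direct corollary of \Cref{thm:regimes-clean} (applied to the weak learner with $(p,q,r)\mapsto(p_{\weak},q_{\weak},r_{\weak})$, giving \Cref{item:subopt-weak} when $\tau_{\weak}<0$) and \Cref{thm:weak-to-strong-simple} (giving \Cref{item:success-strong}), with \Cref{item:capability} immediate from the subset ensemble. All of that is correct and matches what the paper actually argues.

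The one genuine gap is your last clause, that the regime $\min\qty{1-r,\tau_{\strong}}=1-r$ is ``dispatched by the same elementary rearrangement.'' It is not. When $1-r\le\tau_{\strong}$ the lower bound on $u$ is $q_{\weak}+r_{\weak}-(1-r)$, which is \emph{larger} than $q_{\weak}+r_{\weak}-\tau_{\strong}$, so the case-1 computation gives no information; compatibility with the cap $u<q+r$ now requires $q_{\weak}+r_{\weak}<q+1$, and compatibility with $u<\tfrac12(p+1+q+r-(q_{\weak}+r_{\weak}))$ requires $3(q_{\weak}+r_{\weak})<p+3+q-r$, neither of which is equivalent to, or implied by, $\tau_{\mathsf{w2s}}>0$. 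Concretely, take $(p,q,r)=(10,0.6,0.5)$ and $(p_{\weak},q_{\weak},r_{\weak})=(2.9,1.2,0.8)$: all standing hypotheses hold, $\tau_{\weak}=-0.1<0$, $\tau_{\mathsf{w2s}}=7.9>0$, yet the generalization condition demands $u>2-0.5=1.5$ while the regime condition demands $u<q+r=1.1$, so the window is empty. To be fair, the paper asserts the non-vacuousness claim without proof and appears to share this issue (its clean derivation covers only the $\min=\tau_{\strong}$ branch), so your write-up is no weaker than the source on this point --- but the case split you flag in your own caveat (iii) genuinely does not close as written, and an honest proof should either restrict the non-vacuousness claim to the $\tau_{\strong}\le 1-r$ regime or add the extra inequalities above as hypotheses.
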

Note that under \Cref{assump:w2s-ensemble}, \Cref{item:capability} is immediately satisfied since the weak features are spanned by the strong features.
Also, as a sanity check, we confirm that whenever weak-to-strong generalization occurs with $m$ weak labels, the strong model would generalize with $m$ clean labels. 
By \Cref{thm:regimes-clean}, if the strong model was instead trained on $m = n^u$ clean examples, it would generalize whenever $u > 2(q+r) - p$. 
By expanding the definition of $\tau_{\strong}$ and using $q_{\weak} + r_{\weak} > 1$,  
\[
    u > q_{\weak} + r_{\weak} - \tau_{\strong} = 2(q+r) - p + (q_{\weak} + r_{\weak} - 1) > 2(q+r) - p,
\]
which recovers the condition with $m$ clean labels stated above.
\begin{remark}[Bonus desiderata]\label{rem:bonus}
    If $q+r>u$, as in the setup of \Cref{thm:achieve}, PCA fails to recover the spiked subspace \citep{shen2013surprising,fan2015asymptotics}, so the bonus \Cref{item:spike-recovery} automatically holds.
    Additionally, by using the condition for where the strong model fails with $n$ clean examples from \Cref{thm:regimes-clean} ($\tau_{\strong} < 0$), one can construct parameter regimes where the bonus \Cref{item:subopt-strong} also holds. 
    The theorem also implies the strong model cannot bootstrap its own performance.
    Indeed, some basic algebra shows $\tau_{\mathsf{w2s}} = \tau_{\strong} = \tau_{\weak}$ if $(p,q,r) = (p_{\weak},q_{\weak},r_{\weak})$.
\end{remark}

We can extend the above result to the \emph{multilabel} setting, which is a variant of multiclass classification where a datapoint can have multiple positive labels. 
The basic idea is that multilabel training can be approximated as several independent binary classification problems, and then we can apply the binary analysis.
Since this requires additional setup, we defer the formal details to \Cref{sec:multilabel}. 
\begin{theorem}[Informal, see \Cref{thm:weak-to-strong-multilabel}]\label{thm:achieve-multilabel}
    Under the analogous assumptions as \Cref{thm:achieve} for multilabel classification, there is weak-to-strong generalization for $\fwts$ trained on $m$ weakly hard multilabeled examples from $\fweak$ in the exact same regimes.
\end{theorem}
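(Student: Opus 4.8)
\textbf{Proof proposal for \Cref{thm:achieve-multilabel} (informal multilabel version).}

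The plan is to reduce the multilabel problem to a collection of binary problems and then invoke \Cref{thm:weak-to-strong-simple} essentially verbatim on each coordinate. First I would set up the multilabel model so that the label vector $\by \in \{0,1\}^k$ (or its centered version) has components $y^{(j)} = \sgn(\ev{\bg, \bv_*^{(j)}})$ for $k$ distinct label-defining directions $\bv_*^{(1)}, \dots, \bv_*^{(k)}$, and I would take each $\bv_*^{(j)}$ to be $1$-sparse in the strong eigenbasis, aligned with a distinct top eigenvector (so $\bv_*^{(j)} = \be_j$), which is the multilabel analogue of \Cref{assump:1-sparse}; this is consistent with \Cref{def:scaling-k}'s scaling $k = c_k \lfloor n^t \rfloor$ with $t < r$, so all $k$ directions fit inside the $s = \lfloor n^r \rfloor$-dimensional favored subspace. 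The key structural observation is that MNI is \emph{linear in the label vector}: $\vf^{(j)} = \mX^\top \mA^{-1} \vy^{(j)}$, so the strong student's score for label $j$ depends only on the $j$th column of pseudolabels, and likewise the weak teacher's $j$th head $\fweak^{(j)} = \mX_{\weak}^\top \mA_{\weak}^{-1} \vy^{(j)}$ is trained only on the $j$th clean binary label. Hence the end-to-end pipeline of \Cref{def:w2s-training} decomposes into $k$ independent instances of the binary weak-to-strong procedure, each sharing the \emph{same} random design matrices $\mX_{\weak}, \mX$ but with independent labels/pseudolabels across $j$ (since the $\bv_*^{(j)}$ point along orthogonal directions and the Gaussian coordinates are independent).

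Next I would apply \Cref{thm:weak-to-strong-simple} to each coordinate $j \in [k]$: under the stated parameter regimes ($q+r>u$, $q_{\weak}+r_{\weak}>1$, and the technical bound $u < \tfrac{p+1+q+r-(q_{\weak}+r_{\weak})}{2}$), each per-label strong classifier $\fwts^{(j)}$ has binary test error $o_n(1)$ when $u > q_{\weak}+r_{\weak} - \min\{1-r,\tau_{\strong}\}$. The multilabel $0$-$1$ loss (say, misclassifying \emph{any} label, or Hamming loss) is then controlled by a union bound over the $k = \mathrm{poly}(n)$ coordinates: I would track the per-coordinate error as $o_n(1)$ with enough room that $k$ times it is still $o_n(1)$ — here I need the sharpened rates, which is exactly why \Cref{thm:regimes-clean} upgrades the earlier bounds from \cite{WS24}; the explicit $\tfrac12 - \tfrac1\pi \arctan(\Theta(n^{\tau_{\strong}}))$-type rate gives a polynomial-in-$n$ decay that survives multiplication by $n^t$. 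For Hamming loss this is immediate since expectation is linear; for the ``exact match'' loss the union bound suffices. I would also verify the multilabel analogues of the other desiderata: \Cref{item:capability} still holds because each weak head lives in the span of the strong features (subset ensemble, \Cref{assump:w2s-ensemble}), and the weak model's per-label accuracy is $\tfrac12 - o_n(1)$ by the $\tau_{\weak} < 0$ branch of \Cref{thm:regimes-clean}.

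The main obstacle I anticipate is not the reduction itself but making the \emph{independence across labels} fully rigorous in the presence of the shared Gram matrix $\mA = \mX\mX^\top$: although the clean labels $\vy^{(j)}$ depend only on coordinates $1,\dots,k$ of the Gaussian vectors and are mutually independent, the matrix $\mA^{-1}$ couples everything, so the per-coordinate error events are \emph{not} independent — I can only union-bound their probabilities, not multiply them. This is fine for upper-bounding the expected Hamming/match loss, but to get a matching \emph{lower} bound of $\tfrac12 - o_n(1)$ in the failure regime I would need to argue that \emph{many} coordinates simultaneously fail, which requires a second-moment or anti-concentration argument on the correlated per-label scores; this is precisely where the new lower-tail inequality for the maximum of correlated Gaussians (promised at the end of \Cref{sec:main-results}) does the heavy lifting, by showing the relevant Gaussian vector of scores is not pathologically well-aligned with the true directions. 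A secondary technical point is ensuring the ``approximate several independent binary problems'' claim is exact rather than approximate under \Cref{assump:1-sparse} — with exact $1$-sparsity and orthogonal label directions it should be exactly a product structure at the level of the labels, so the only coupling is through the design, as noted above. Modulo these points, the proof is a faithful ``tensorization'' of \Cref{thm:achieve} over the $k$ labels, which is why the regimes are identical.
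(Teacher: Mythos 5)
Your proposal follows essentially the same route as the paper's proof in \Cref{sec:multilabel}: decompose the multilabel problem into $k$ binary problems coupled only through the shared design matrices, apply the binary weak-to-strong analysis (\Cref{thm:weak-to-strong-simple}) per coordinate, and union-bound over $k = \mathrm{poly}(n)$ classes, using the fact that the underlying concentration bounds degrade only polylogarithmically in the number of events. The one small divergence is in the failure regime: the paper does not invoke the correlated-Gaussian lower-tail inequality there (that tool is used for the clean multiclass rates in \Cref{thm:regimes-clean}); instead it settles for the crude $\Omega(1)$ lower bound obtained from a single failing coordinate and separately sketches a decorrelation argument (replacing the per-class noise Gaussians by a common $\wt{g}$ up to polynomially small perturbations) to get the sharper $1 - O(2^{-k})$ rate.
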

The above result also suggests an interesting result for true multiclass problems with one-hot labels.
In particular, one can use the multilabel scheme to generate weak multilabels for a \emph{multiclass} weak-to-strong classifier $\fwts$.
One can argue that whenever multilabel weak-to-strong generalization occurs, so too does multiclass weak-to-strong generalization (see \Cref{sec:multilabel-for-multiclass} for more justification).
By comparing to the regimes for clean multiclass labels due to \Cref{thm:regimes-clean}, there exists regimes where the strong model would fail if trained using the same number of \emph{clean multiclass} labels (i.e. only one positive label per example).
At a high level, this can happen because there are too many classes, which sparsifies the label vectors. 
In contrast, the weak multilabels do not suffer from the sparsity issue, even though the underlying signal is noisier.
This appears to be related to the strategy of using soft labels or logits in the pseudolabeling and knowledge distillation literature.
\begin{figure*}[!ht]
  \centering
  \begin{subfigure}[b]{0.48\textwidth}
         \centering
         \includegraphics[width=0.7\textwidth]{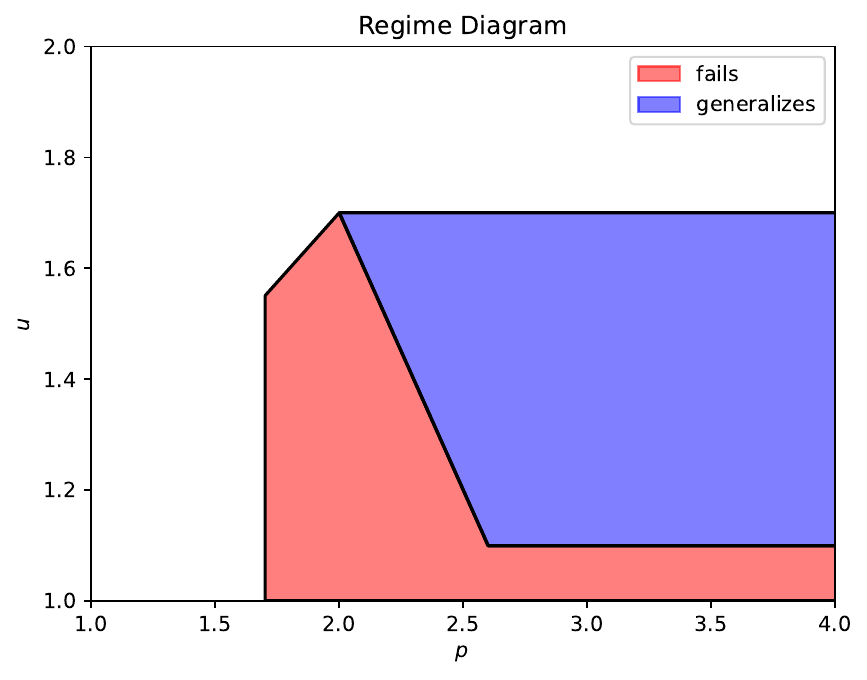}
         % \caption{$q=0.9, r=0.8$ and $p_{\weak}=1.4, q_{\weak}=0.9, r_{\weak}=0.4$}
         \caption{$(q, r) = (0.9, 0.8), (p_{\weak}, q_{\weak}, r_{\weak}) = (1.4, 0.9, 0.4)$.} 
         \label{fig:pu-regime}
     \end{subfigure}
     \hfill
     \begin{subfigure}[b]{0.48\textwidth}
         \centering
         \includegraphics[width=0.7\textwidth]{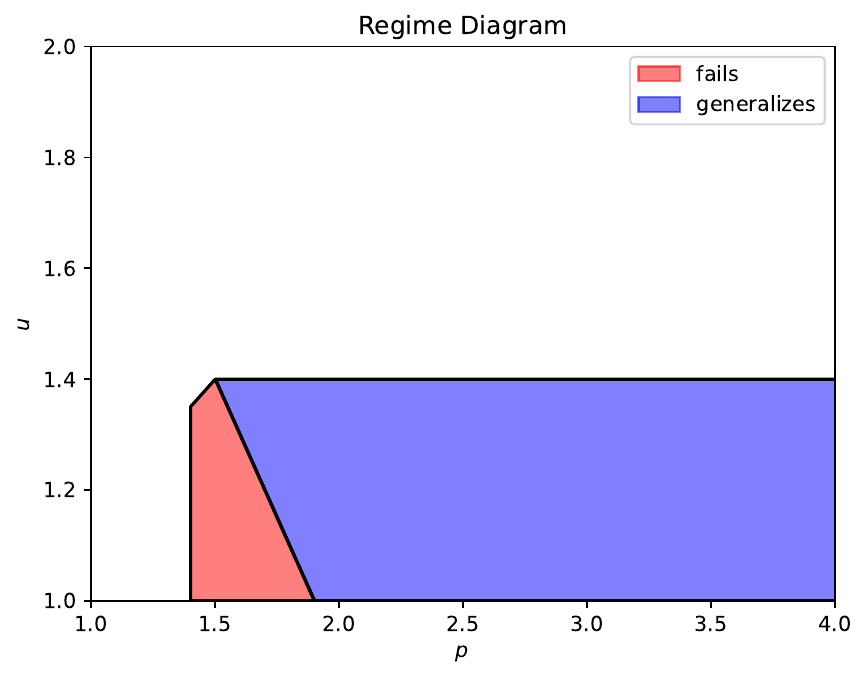}
         % \caption{$q=0.9, r=0.5$ and $p_{\weak}=1.1, q_{\weak}=0.9, r_{\weak}=0.2$}
         \caption{$(q,r) = (0.9, 0.5)$, $(p_{\weak}, q_{\weak}, r_{\weak}) = (1.1, 0.9, 0.2)$}
         \label{fig:pu-regime-2}
     \end{subfigure}
     \hfill  \\
    \begin{subfigure}[b]{0.48\textwidth}
         \centering
        \includegraphics[width=0.7\textwidth]{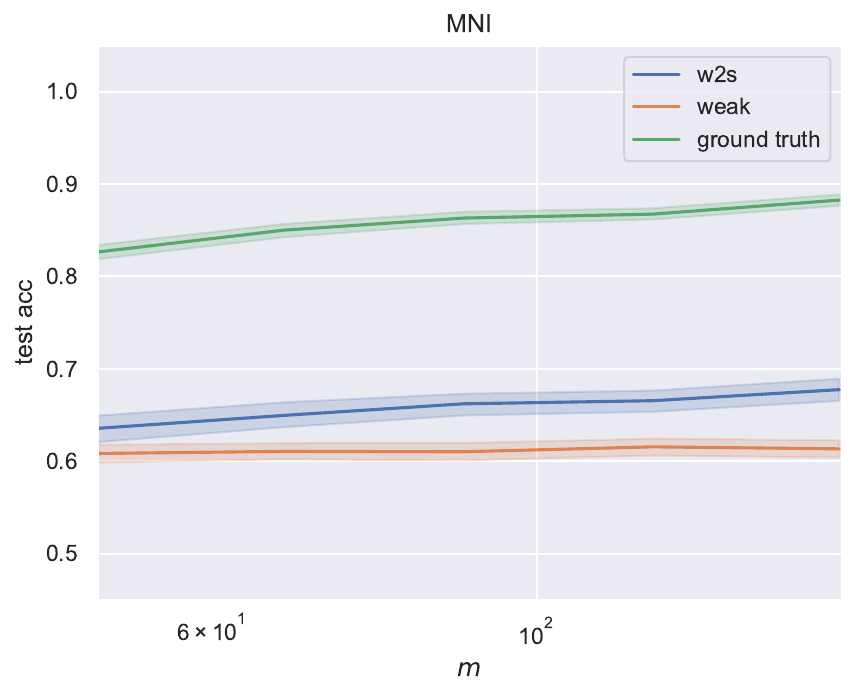}
         \caption{$(p, q, r) = (2, 0.6, 0.6), (p_{\weak}, q_{\weak}, r_{\weak}) = (1.4, 0.9, 0.5)$.}
         \label{fig:w2s-mni-1-main}
     \end{subfigure} \hfill 
     \begin{subfigure}[b]{0.48\textwidth}
         \centering
         \includegraphics[width=0.7\textwidth]{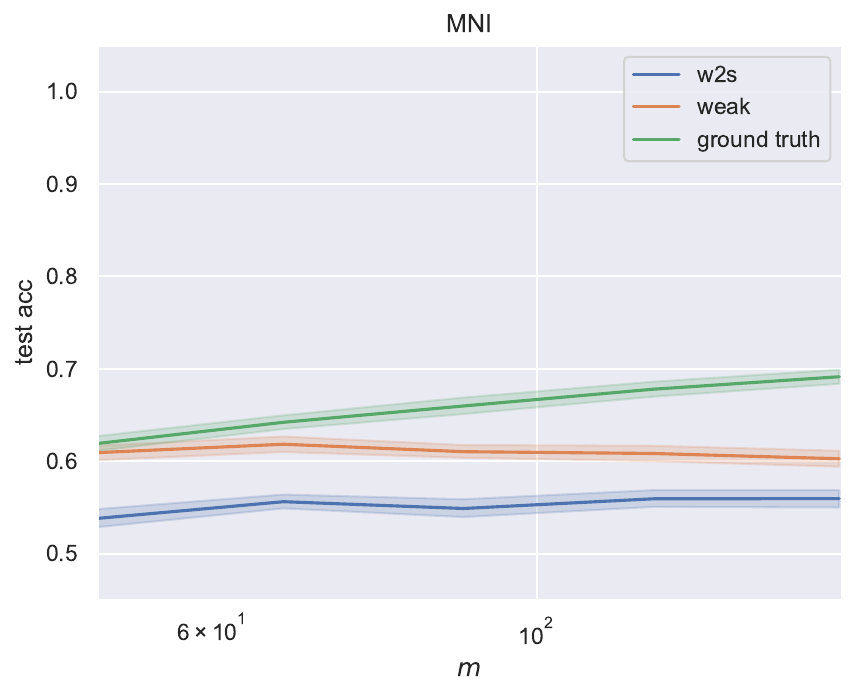}
         \caption{$(p, q, r) = (1.5, 0.6, 0.8), (p_{\weak}, q_{\weak}, r_{\weak}) = (1.4, 0.9, 0.5)$.} 
         \label{fig:w2s-mni-1-fail-main}
     \end{subfigure}
  \caption{\textbf{(Top)}: Regime plots for weak-to-strong generalization based on \Cref{thm:achieve}.
  The blue region is the successful w2s regime, and the red region is where w2s training fails.
  The white region corresponds to regimes where either the hypotheses of the theorem fail to hold, or invalid settings of parameters for the bi-level ensemble. 
  \textbf{(Bottom)}: Comparison of simulations of MNI test accuracies for two different regimes where $n=50$. Observe how the weak accuracy is close to random guessing and how the weak-to-strong accuracy increases as $m$ increases. 
  As corroborated by the plots, \Cref{thm:achieve} predicts w2s success in \Cref{fig:w2s-mni-1-main} and failure in \Cref{fig:w2s-mni-1-fail-main}.
  % On the left, weak-to-strong generalization never succeeds with just $m = n$ weakly-labeled examples, whereas it can on the right if $p$ is large enough.
  }
  \label{fig:regimes}
\end{figure*}

Finally, we state the aforementioned concentration inequality for the maximum of correlated Gaussians, which may be of independent interest.
Our result complements the sharp bound of \citet{LY22} for more moderate deviations; the proof can be found in \Cref{sec:lower-bound}.
\begin{restatable}[Lower tail for correlated Gaussians]{theorem}{lowertail}\label{thm:correlated-gaussians}
    Let $\rho_0 \in (0, 1)$ be a parameter bounded away from $0$ and $1$, and let $(g_i)_{i \in [N]}$ be jointly Gaussian with zero mean and unit variance.
    Suppose $\E[g_i g_j] \le \rho_0$ for all distinct $i, j \in [N]$.
    For any $0 \le t_N = \delta_0\sqrt{2(1-\rho_0)\log N}$ where $\delta_0 \in [0, 1)$ is bounded away from $1$,  there is a constant $C > 0$ depending only on $\rho_0$ such that 
    \[
        \Pr\qty[\max_{i \in [N]} g_i \le t_N] \le C \cdot N^{(1-\delta_0)^2(1 - \frac{1}{\rho_0})} (\log N)^{\frac{1-\rho_0(2-\delta_0)-\delta_0}{2\rho_0}}.
    \]
    In particular, one can take $C = \sqrt{\frac{\rho_0}{1-\rho_0}}$.
    If we further have $\E[g_i g_j] = \rho_0$ for all distinct $i, j$ and $t_N = O\qty(\tfrac{\log\log N}{\sqrt{\log N}})$, then $\Pr\qty[\max_{i \in [N]} g_i \le t_N] = \Theta\qty(N^{1 - \frac{1}{\rho_0}} (\log N)^{\frac{1}{2\rho_0} - 1}).$
    \end{restatable}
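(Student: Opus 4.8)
The plan is to reduce the problem to a one-dimensional Laplace integral in two easy steps, and then carry out a saddle-point analysis of that integral; the whole difficulty lives in the last part. Write $\Phi$ for the standard normal CDF, $\phi$ for its density, and $\overline\Phi := 1-\Phi$. First, a Gaussian comparison: since $\E[g_ig_j]\le\rho_0$ for all $i\ne j$ and the equicorrelated matrix $(1-\rho_0)I_N+\rho_0\mathbf 1_N\mathbf 1_N^\top$ is positive definite for $\rho_0\in(0,1)$, Slepian's lemma gives $\Pr[\max_{i\in[N]}g_i\le t_N]\le\Pr[\max_{i\in[N]}h_i\le t_N]$, where $(h_i)_{i\in[N]}$ is centered Gaussian with unit variances and all pairwise correlations \emph{equal} to $\rho_0$; for the two-sided $\Theta$ claim we are given $\E[g_ig_j]=\rho_0$, so $g\stackrel{d}{=}h$ and the comparison is an identity. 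Second, use the representation $h_i=\sqrt{\rho_0}\,Z+\sqrt{1-\rho_0}\,\xi_i$ with $Z,\xi_1,\dots,\xi_N$ i.i.d.\ $N(0,1)$; setting $a(z):=(t_N-\sqrt{\rho_0}\,z)/\sqrt{1-\rho_0}$ and conditioning on $Z$ yields the exact identity
\[
    \Pr\Big[\max_{i\in[N]}h_i\le t_N\Big]=\int_{-\infty}^{\infty}\Phi\big(a(z)\big)^N\,\phi(z)\,dz=\frac{1}{\sqrt{2\pi}}\int_{-\infty}^{\infty}\exp\!\Big(N\log\Phi\big(a(z)\big)-\tfrac{z^2}{2}\Big)\,dz.
\]

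Next I would analyze this integral by Laplace's method. On $\{z:a(z)\le\sqrt2\}$ we have $\Phi(a(z))\le\Phi(\sqrt2)<1$, so that part contributes at most $e^{-\Omega(N)}$ and is negligible. On the complement, $\log(1-x)\le-x$ and the Mills-ratio estimates $\tfrac{a}{a^2+1}\phi(a)\le\overline\Phi(a)\le\tfrac1a\phi(a)$ let me treat the exponent as $F(z):=-\tfrac{z^2}{2}-N\overline\Phi(a(z))$ up to controlled corrections. Solving $F'(z)=0$ gives the saddle relations $z^\star=-\sqrt{\tfrac{\rho_0}{1-\rho_0}}\,N\phi(a^\star)$ and $z^\star=\tfrac{t_N}{\sqrt{\rho_0}}-\sqrt{\tfrac{1-\rho_0}{\rho_0}}\,a^\star$, equivalently the transcendental equation $N\phi(a^\star)=\big((1-\rho_0)a^\star-\sqrt{1-\rho_0}\,t_N\big)/\rho_0$, from which $a^\star=\sqrt{2\log N}-\Theta\!\big(\tfrac{\log\log N}{\sqrt{\log N}}\big)$ and $z^\star=-(1-\delta_0)\sqrt{\tfrac{1-\rho_0}{\rho_0}}\sqrt{2\log N}+\Theta\!\big(\tfrac{\log\log N}{\sqrt{\log N}}\big)$. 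Three contributions then combine to the stated rate: (i) the leading term $-\tfrac{(z^\star)^2}{2}$ produces $N^{(1-\delta_0)^2(1-1/\rho_0)}$; (ii) the shift of $a^\star$ below $\sqrt{2\log N}$, which to leading order is $\tfrac{1}{2\sqrt{2\log N}}\log(2\pi\log N)$, enters the $\log\log N$-linear terms of the expansion of $(z^\star)^2=(t_N/\sqrt{\rho_0}-\sqrt{(1-\rho_0)/\rho_0}\,a^\star)^2$ and feeds a power of $\log N$; (iii) the Gaussian width of the integral near $z^\star$, governed by $|F''(z^\star)|=\Theta((1-\delta_0)\log N)$, contributes a further $(\log N)^{-1/2}$. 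Adding up the exponents of $\log N$ gives exactly $(\log N)^{(1-\rho_0(2-\delta_0)-\delta_0)/(2\rho_0)}$ (one checks this equals $\tfrac{(1-\delta_0)(1-\rho_0)}{2\rho_0}-\tfrac12$), while $N\overline\Phi(a^\star)=\Theta(1)$ and the Mills-ratio prefactors collapse into a constant depending only on $\rho_0$, which can be tracked to equal $\sqrt{\rho_0/(1-\rho_0)}$; here the hypotheses that $\rho_0$ is bounded away from $0,1$ and $\delta_0$ away from $1$ are what keep these $\Theta$'s nondegenerate.

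To make this an honest \emph{upper} bound I would split the $z$-integral into a window $|z-z^\star|\le R$ with $R=(\log N)^{-c}$ for a suitable $c>0$, where $F(z)\le F(z^\star)+\tfrac12F''(z^\star)(z-z^\star)^2+O(R^3\sup|F'''|)$ with uniform third-derivative control, plus two far tails handled by monotonicity of $a(\cdot)$ together with $\overline\Phi(a)\ge c'/a$ (making $e^{-N\overline\Phi(a)}$ super-polynomially small while $a\lesssim\sqrt{2\log N}$) and Gaussian decay of $\phi(z)$ for very negative $z$. For the matching lower bound in the exactly equicorrelated case I would instead use $\log(1-x)\ge-x-x^2$ and $\overline\Phi(a)\le\phi(a)/a$ to get $N\log\Phi(a)\ge-(1+o(1))N\phi(a)/a$ on $a=\Theta(\sqrt{\log N})$, then restrict the integral to the $\Theta(1/\sqrt{\log N})$-wide window around $z^\star$ on which the integrand is at least $\tfrac12e^{F(z^\star)}$. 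The restriction $t_N=O(\log\log N/\sqrt{\log N})$ for this sharper claim is exactly so that the effect of $t_N$ on $z^\star$ (hence on $(z^\star)^2$) is small enough to be absorbed into the $\Theta(\cdot)$ constants, pinning the answer to its value at $t_N\approx0$, namely $\Theta(N^{1-1/\rho_0}(\log N)^{1/(2\rho_0)-1})$; for larger $t_N$ one gets only the general $\delta_0$-dependent upper bound of the first part. The main obstacle throughout is this saddle-point analysis: locating $a^\star$ to additive precision $o(1/\sqrt{\log N})$ in the transcendental equation, controlling all error terms uniformly enough that they survive multiplication by $N$, and carrying the multiplicative constants through to obtain $C=\sqrt{\rho_0/(1-\rho_0)}$; the Slepian comparison and the conditioning-on-$Z$ reduction are routine by comparison.
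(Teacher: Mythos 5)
Your reduction is exactly the paper's: Slepian's lemma to pass to the equicorrelated case, the decomposition $g_i=\sqrt{\rho_0}\,Z+\sqrt{1-\rho_0}\,\xi_i$, the resulting integral $\int\phi(z)\Phi^N(a(z))\,dz$, and Mills-ratio control of $\Phi$. Where you diverge is in estimating that integral. The paper does not locate a saddle point or compute $F''$; it splits the line at two explicit cutoffs $-c_N$ and $-d_N$ (both equal to your $z^\star$ up to $\Theta(\log\log N/\log N)$ corrections in the argument of $s_N(\cdot)$), bounds the left piece by discarding $\Phi^N$ entirely so that it is just $\Phi(-c_N)$, bounds the right piece by discarding $\phi$ so that it is $\Phi^N(f_N(-d_N))$, which is superpolynomially small by the multiplicative Mills estimate, and handles the middle piece by a reparameterized Riemann sum with $O(1)$ subintervals on which one of the two factors is small (the paper also notes one can instead quote the corresponding computation in \citet{LY22}); the matching lower bound is outsourced to \citet{LY22} wholesale, whereas you sketch a self-contained one. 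Both routes identify the same dominant location and the same three sources of the exponent, and your bookkeeping of the powers of $N$ and $\log N$ is correct (including the identity $\frac{1-\rho_0(2-\delta_0)-\delta_0}{2\rho_0}=\frac{(1-\delta_0)(1-\rho_0)}{2\rho_0}-\frac12$). One point in your plan needs repair: at the saddle one has $|F'''(z^\star)|\asymp(\log N)^{3/2}\asymp|F''(z^\star)|^{3/2}$, so the error term $O(R^3\sup|F'''|)$ in a window of curvature width $R\asymp|F''|^{-1/2}$ is $\Theta(1)$, not $o(1)$ --- the integrand is not asymptotically Gaussian at that scale, and a genuine second-order Laplace expansion will not close. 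This does not threaten the $\Theta(\cdot)$ claims (the cruder argument you also describe --- a window of width $\Theta(1/\sqrt{\log N})$ on which the integrand is within a constant of its maximum, plus concavity of $F$ on $\{a(z)>0\}$ and the $\Phi(-c_N)$-type Mills bound for the far left tail --- suffices for two-sided bounds of the right order), but it does mean the specific constant $\sqrt{\rho_0/(1-\rho_0)}$ has to be extracted the way the paper does, from the Mills bound on $\Phi(-c_N)$, rather than from a Gaussian-width computation at the saddle. The trade-off overall: your version is more systematic and makes the mechanism (where the mass sits and why each factor of $\log N$ appears) transparent, while the paper's region-splitting avoids any second-derivative analysis at the cost of looking more ad hoc.
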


%!TEX root = ./main.tex
\subsection{Technical overview}\label{sec:tech-overview}
In this section, we illustrate the main ideas of the proof of \Cref{thm:weak-to-strong-simple}. 
% Consider the event that $\fwts$ achieves perfect accuracy on the
Recall that the true label is $\sgn(x_*)$, where $x_* = \ev{\vg_{\test}, \bv_*}$. 
Thus, studying weak-to-strong generalization amounts to analyzing 
\begin{align}
    \Pr[\sgn(\ev{\fwts, \bx_{\test}}) = \sgn(x_*)],\label{eq:w2s-decomp}
\end{align}
where the fresh test point $\bx_{\mathsf{test}} \sim N(0, \Sigma)$. 
Observe that conditioned on the training data $\mX$, the true Gaussian $x_*$ and $\ev{\fwts, \bx_{\test}}$ are jointly Gaussian random variables, so there exist  $\lambda \in \R$ and $g \sim N(0, 1)$ independent of $x_*$ such that 
\begin{align}
    \ev{\fwts, \bx_{\test}} = \lambda x_* + g. \label{eq:gaussian-decomposition}
\end{align}
The exact value of $\lambda$ depends on exactly how much of $x_*$'s signal survives through the weak-to-strong training process.
Intuitively, the larger $\lambda$ is, the higher the accuracy.
Indeed, it is well-known from the classical study of noise stability that the probability in \Cref{eq:w2s-decomp} is precisely $\frac{1}{2} + \frac{1}{\pi}\arctan(\lambda)$.
% Note also that this decomposition holds even if $x_*$ is not in the span of the strong learner's features.

% Hence, we need to control the direction of $\fwts$, which reduces to getting a handle on $\fweak$. 
% This boils down to understanding the Gaussian $\bx_{\weak} = \ev{\bg_{\test}, \bv_{\weak}}$, where $\bv_{\weak} = \fweak / \norm{\fweak}$ is the unit-norm direction learned by the weak model.

To compute $\lambda$, we use Gram-Schmidt to decompose $\fwts$ to obtain the signal and noise with respect to $x_*$.
We term these quantities the survival and contamination of $x_*$, respectively. 
Without loss of generality, we can rotate to the basis $U$ where the strong features are drawn iid from $N(0, \Lambda)$, where $\Lambda = \diag(\lambda_1, \ldots, \lambda_d)$. After applying the basis change, one easily computes $\ev{\fwts, \bx_{\test}} \stackrel{d}{=} N(0, \norm{\Lambda \fwts}_2^2)$.
The survival and contamination are then defined as follows.
\begin{definition}[Survival and contamination]\label{def:su-cn}
    Let $\bv \in \R^D$ be a unit vector and let $\vf \in \R^d$ be a linear classifier for features drawn from $N(0, \sum_{i\in[d]} \lambda_i \bv_i \bv_i^\top)$, where $\bv_1, \ldots, \bv_d \in \R^D$ are orthonormal. The survival and contamination of $\bv$ in $\vf$ is defined as
    \begin{align*}
        \su(\bv) \triangleq \sum_{i \in [d]} \sqrt{\lambda_i} \vf[i]\ev{\bv_i, \bv}, \quad\quad\quad 
        \cn(\bv) \triangleq \sqrt{\norm{\Lambda \vf}_2^2 - \su(\bv)^2}.
    \end{align*}
    % Similarly, when $\vf$ is learned from label vector $\vy$, we write $\cn(\bv | \vy)$.
    If $\vf$ is trained by MNI on $\by = \sgn(\ev{\bg, \bw})$, we write $\su_n(\bv | \bw)$ for the survival of $\bv$ given $n$ labels generated from $\bw$, and similarly for $\cn_n(\bv| \bw)$. If $n$ or $\bw$ is clear from context, we omit them.
\end{definition}
To interpret these notions, consider the simple case where $\bv = e_1$ and $\bv_i = e_i$, which is the 1-sparse setting studied by previous papers. 
There, the survival is simply just $\sqrt{\lambda_1}\vf[1]$ and the contamination is $\sqrt{\sum_{i > 1} \lambda_i\vf[i]^2}$; this was used to analyze binary classification in \citep{MNS+21}.

In the weak-to-strong setting, we have $\ev{\fwts, \bx_{\test}} = \su_m(\bv_* | \bv_{\weak}) x_* + \cn_m(\bv_* | \bv_{\weak}) g$, where $\bv_{\weak} = \fweak / \norm{\fweak}$ is the unit-norm direction learned by the weak model, and $g$ is a standard Gaussian independent of $x_*$. Plugging in $x_* = \ev{\vg, \bv_*}$ into \eqref{eq:gaussian-decomposition} yields 
\begin{align}
    \Pr[\sgn(\su_m(\bv_* | \bv_{\weak}) x_* + \cn_m(\bv_* | \bv_{\weak}) g) = \sgn(x_*)], \label{eq:strong-gen-event}
\end{align}
from which we can read off $\lambda = \frac{\su_m(\bv_* | \bv_{\weak})}{\cn_m(\bv_* | \bv_{\weak})}$. 
Thus, the explicit formula for the probability yields that $\fwts$ approaches perfect accuracy if and only if $\frac{\su_m(\bv_* | \bv_{\weak})}{\cn_m(\bv_* | \bv_{\weak})} = \omega(1)$, whereas its accuracy approaches random guessing if and only if $\frac{\su_m(\bv_* | \bv_{\weak})}{\cn_m(\bv_* | \bv_{\weak})} = o(1)$. 

We now turn to the strategy for controlling the survival and contamination. 
Using the explicit MNI formula for $\fweak$ and $\fwts$, this analysis reduces to controlling certain random bilinear forms.
Applying standard eigenvalue concentration results and a version of the Hanson-Wright concentration inequality for bilinear forms developed by \citet{WS24}, we can precisely control the coordinates of $\fweak$, and hence $\bv_{\weak}$. 
Once we have control over $\bv_{\weak}$, we can then study $\fwts$.

One technical difficulty is that the weak labels are no longer $1$-sparse, even if the true labels are. 
This marks a departure from previous analyses of benign overfitting in classification, which all assumed 1-sparse labels  \citep{MNS+21,multi_class_theory:Wang21,WS24}. 
To overcome this obstacle, we develop tools to analyze MNI beyond the 1-sparse regime, namely by using the Woodbury inversion formula and carefully bounding error terms using Hanson-Wright.
Under the subset ensemble, the explicit parameterizations for the survival and contamination then yield the stated conditions for weak-to-strong generalization.

% The same logic also applies to studying weak supervision; here, one studies 
% \begin{align}
% \Pr[\sgn(\su(\bv_{\weak}) x_{\weak} + \cn(\bv_{\weak}) g) = \sgn(x_{\weak})].\label{eq:weak-mimic}
% \end{align}
% Evidently, the crucial objects of study are the \emph{survival} and \emph{contamination} of various Gaussian features, when the label vectors are generated by other Gaussian features.

%!TEX root = ./main.tex
\section{Discussion}\label{sec:discussion}
We now discuss further extensions which suggest interesting new weak-to-strong phenomena.

\vskip3pt
\textbf{Modeling assumptions.}
In this paper, we assumed the features were Gaussian for the sake of theoretical tractability. 
One could relax this to vector-subgaussian, as this notion is preserved under rotation.
On the other hand, the bi-level and subset ensembles were chosen as a minimal set of tractable theoretical conditions to study weak-to-strong generalization. 
In principle, our techniques could be extended to more complicated feature ensembles, but it would likely be quite involved.

As corroborated by our experiments in \cref{fig:w2s-mni-1-main,fig:w2s-mni-1-fail-main}, in practice one does not observe sharp transitions in test error from $\frac{1}{2}$ to $0$ using weak supervision --- such a sharp transition corresponds to a Performance Gap Recovered (PGR) equal to 1 in \citet{burns2023weak}
This phenomenon can nevertheless be justified by our theory.
First, the transition we prove is asymptotic; the rate of convergence matters to predict the PGR for finite $n$. 
Another factor is the $1$-sparse assumption for the true labels. 
We argue in \Cref{sec:heuristic} that this is essentially necessary to get this sharp transition in asymptotic test error.
In practice, the true directions are rarely aligned with the eigenvectors of the covariance, so the transition would instead between different \emph{constant} levels of test error. 

\vskip3pt
\textbf{Weak imitation.} Our main result \Cref{thm:weak-to-strong-simple} uses the subset ensemble \Cref{assump:w2s-ensemble} to establish two distinct phases of weak-to-strong generalization. 
By adding a third intermediate eigenvalue level --- thus relaxing the bi-level ensemble to a \emph{tri-level} ensemble --- and changing the relationship between the strong and weak basis, we expect a third distinct regime of weak imitation to occur in an overparameterized setting. 
At a high level, this can happen if the weak classifier puts most of its mass in its own unfavored direction $\bv$, but in the strong feature space $\bv$ has an intermediate level of favoring. 
Given enough weakly labeled datapoints, the strong model will eventually learn to imitate the weak model.
Nevertheless, it is still possible for weak-to-strong generalization to occur, with the regime for weak-to-strong generalization widening as the intermediate favoring level decreases.  
\vskip3pt
\textbf{Multiclass classification.}
By leveraging \Cref{thm:regimes-clean} and the techniques developed for binary and multilabel classification in the appendix, we expect our result to extend to the multiclass setting. 
The main technical challenge is controlling the expected weak-to-strong training behavior, which is significantly more complicated. 
However, multilabel weak supervision overcomes this challenge and is more tractable to analyze, as  discussed after \Cref{thm:achieve-multilabel}.

\vskip3pt
\textbf{Relevance to realistic settings.}
One connection to more realistic settings comes through the NTK perspective, as touched upon in the introduction. 
To reiterate, if finetuning remains in the lazy training phase, then the supervised finetuning dynamics are captured by an appropriate generalized linear model defined by the NTK approximation.
The strong and weak features are therefore learned from the pretraining phase.
Another relevant setting is training linear probes on intermediate activations to interpret large models \citep{alain2016understanding,marks2023geometry,nanda2023emergent,zou2023representation}. 
Hence, under appropriate overparameterization of the intermediate activations, our results also apply there.
To go beyond classification problems, one could hope to use some recent implicit bias results for next token prediction \citep{thrampoulidis2024implicit} to study language models.

\section*{Acknowledgments}
We would like to acknowledge support from an OpenAI Superalignment grant and NSF AST-2132700.

\addcontentsline{toc}{section}{References}
\bibliographystyle{plainnat}
\bibliography{refs}

\appendix

%!TEX root = ./main.tex
\section{Preliminaries}\label{sec:mni-analysis}

In this section, we lay out some preliminaries for the rigorous justification of our main results, and give a technical overview in \Cref{sec:tech-overview} which motivates the study of the survival and contamination of the signal, defined formally in \Cref{def:su-cn}.
In \Cref{sec:su-cn-proofs}, we will introduce tools from high dimensional probability which can rigorously control the survival and contamination. 
In \Cref{sec:subset-proof}, we will then use these tools to prove the main results of the paper for binary classification.
Then, in \Cref{sec:multilabel}, we will discuss how the same exact proofs extend to the multilabel setting, and in \Cref{sec:lower-bound} we will prove a tight lower tail inequality for correlated Gaussians which yields \Cref{thm:regimes-clean}. In \Cref{sec:experiments} we include some simulations validating in our theory, and finally in \Cref{sec:heuristic} we give a heuristic calculation which arrives at the same predictions for the weak-to-strong regimes which should lead to rigorous proofs even without the $1$-sparse assumption.

\paragraph{Notation.}
For positive integers $n$, we use the shorthand $[n] \triangleq \qty{1, \ldots, n}$. For a vector $\vv \in \R^n$, $\norm{\vv}_2$ always denotes the Euclidean norm. We index entries by using square brackets or subscripts, whichever is clearer, so $\vv[j]$ and $v_j$ both denote the $j$th entry of $\vv$. For any matrix $\mM \in \R^{m \times n}$, we denote its $ij$th entry by $m_{ij}$, $\norm{\mM}_2$ denotes the spectral norm, and $\norm{\mM}_F = \Tr(\mM^\top \mM)$ denotes the Frobenius norm. 
If $\mM \in \R^{n \times n}$ is symmetric, we write $\mu_1(\mM) \ge \mu_2(\mM) \ge \ldots \ge \mu_n(\mM)$\label{def:eigs} to denote the ordered eigenvalues of $\mM$. 

We make extensive use of asymptotic notation. We say $f(n) \asymp g(n)$ if $f(n) = \Theta(g(n))$, $f(n) \lesssim g(n)$ if $f(n) = O(g(n))$, and $f(n) = \wt{O}(g(n))$ if $f(n) = O(g(n)\log(n)^c)$ for some constant $c \ge 0$. When we write $f(n) \gg g(n)$, we mean that $\frac{f(n)}{g(n)} \ge n^\eps$ for some constant $\eps > 0$. 
An event $\calE$ is said to hold with high probability if it holds with probability $1 - 1/n^c$ for some constant $c > 0$, and very high probability if it holds with probability $1 - \exp(-n^c)$ for some constant $c > 0$.
For us, high probability statements will be taken with respect to $n$, the number of datapoints we scale the weak and strong features with respect to, but under our terminology it makes no difference whether we use $n$ or $m$ because they are polynomially related.

In the weak-to-strong ensemble analysis, many of the expressions will depend on $q+r$, but the number of datapoints involved will change.
In an effort to reduce confusion, we introduce the following notation.
\begin{definition}[Bi-level prefactor]\label{def:bilevel-mun}
    If the covariance $\Sigma = \Sigma(p, q, r)$ is drawn from the bi-level ensemble scaled with respect to $n$, then we define 
    \begin{align}
        \mu_n \triangleq \frac{an}{s} = n^{1-q-r}.
    \end{align}
    In particular, if $q+r>1$ then $\mu_n \ll 1$.
    During weak supervision, the strong learner has bi-level features with covariance $\Sigma(p, q, r)$ scaled with respect to $n$ and receives $m = n^u$ weakly labeled examples. Hence, we define
    \begin{align}
        \mu_m \triangleq \frac{am}{s} = n^{u-q-r}.
    \end{align}
\end{definition}
\begin{remark}
    Note that once $a, s$ are fixed, $\mu_n$ is linear in $n$. 
\end{remark}

\section{Tools for analyzing survival and contamination}\label{sec:su-cn-proofs}
In this section, we introduce the basic machinery which can justify the heuristic calculations carried out in \Cref{sec:mni-analysis}.
We will heavily rely on the analysis of \citet{WS24}, so in an attempt to not replicate effort we will comment on how the proofs change in the weak-to-strong setting.

\paragraph{Basis change.}
Since we have bi-level data and the true direction $\bv_*$ is $1$-sparse (\Cref{assump:1-sparse}), there are convenient basis changes which will greatly simplify the calculations.
In particular, let $S$ be the spiked subspace of $\Sigma$, and $\bw$ be any unit vector.
Then any vectors in $S$ are eigenvectors of $\Sigma$, so we can pick an eigenbasis $U_*$ so that $\bx_{\strong}$ has independent coordinates, $\bv_*$ is 1-sparse, and $\bw$ is 3-sparse. 
We state this more formally in the following definition.
\begin{definition}[Strong eigenbasis]\label{def:strong-basis}
    Given $\Sigma$, $\bv_*$, and $\bw$ as above, let $U_*$ be the distinguished eigenbasis of $\Sigma$ such that after rotating to $U_*$, we have $\bx_{\strong} \sim N(0, \Lambda)$, $\bv_* = \be_1$, and $\bw = w_1 \be_1 + w_2 \be_2 + w_{s+1} \be_{s+1}$. 
\end{definition}

\paragraph{Hanson Wright inequality.}
Now, we recall a few versions of the Hanson-Wright inequality which handles bilinear forms between subgaussian vectors.

For the multiclass case, we also need to handle potentially sparse vectors; this version was proved in \citet{WS24}.
\begin{restatable}[Hanson-Wright for bilinear forms with soft sparsity]{theorem}{hansonwright}\label{thm:bounded-hanson-wright-bilinear}
    Let $\vx = (X_1, \ldots, X_n) \in \R^n$ and $\vy \in (Y_1, \ldots, Y_n) \in \R^n$ be random vectors such that $(X_i, Y_i)$ are independent pairs of (possibly correlated) centered random 
    variables such that $\norm{X_i}_{\psi_2} \le K$ and $Y_i$ has soft sparsity at level $\sigma$, i.e. $\abs{Y_i} \le 1$ almost surely, and $\EE[Y_i^2] \le \sigma$. Assume that \emph{conditioned on $Y_j$}, $\norm{X_j}_{\psi_2} \le K$. Then there exists an absolute constant $c > 0$ such that for all $\mM \in \R^{n \times n}$ and $\epsilon \ge 0$ we have 
        \begin{align}
        &\Pr\qty[|\vx^\top \mM \vy - \EE[\vx^\top \mM \vy ]| > \epsilon] \le 2\exp(-c\min\qty{\frac{\epsilon^2}{K^2\sigma\norm{\mM}_F^2}, \frac{\epsilon}{K\norm{\mM}_2}}).\label{eq:hansonwright-sparse-bilinear}
        \end{align}
\end{restatable}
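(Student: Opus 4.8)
The plan is to run the classical two-part decomposition behind the bilinear Hanson--Wright inequality --- a diagonal sum controlled by Bernstein's inequality plus an off-diagonal decoupled chaos --- while carrying the soft-sparsity level $\sigma$ through every second-moment estimate, so that $\sigma$ multiplies the Frobenius term but, because $\abs{Y_i}\le 1$ already governs the ``range,'' leaves the operator-norm term unchanged. Since the pairs $(X_i,Y_i)$ are independent across $i$ and centered, $\EE[X_iY_j]=0$ whenever $i\ne j$, so $\EE[\vx^\top\mM\vy]=\sum_i m_{ii}\EE[X_iY_i]$ and
\[
    \vx^\top\mM\vy-\EE[\vx^\top\mM\vy]=\underbrace{\sum_i m_{ii}\qty(X_iY_i-\EE[X_iY_i])}_{Z_{\mathrm{diag}}}+\underbrace{\sum_{i\ne j}m_{ij}X_iY_j}_{Z_{\mathrm{off}}}.
\]
I would prove the claimed tail for $Z_{\mathrm{diag}}$ and $Z_{\mathrm{off}}$ separately. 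Two consequences of the hypotheses get used repeatedly: (a) conditionally on the full vector $\vy$ the $X_i$ are independent with $\norm{X_i\mid\vy}_{\psi_2}=\norm{X_i\mid Y_i}_{\psi_2}\le K$ --- exactly what the ``conditioned on $Y_j$'' assumption provides; and (b) $\norm{X_i\mid Y_i}_{\psi_2}\le K$ forces $\EE[X_i^2\mid Y_i]\lesssim K^2$, so $\EE[X_i^2Y_i^2]\lesssim K^2\EE[Y_i^2]\le K^2\sigma$ and $\abs{\EE[X_i\mid Y_i]}\lesssim K$.

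\emph{Diagonal term.} $Z_{\mathrm{diag}}$ is a sum of independent centered variables; by (b) its $i$-th summand has variance $\lesssim m_{ii}^2K^2\sigma$, while $\abs{Y_i}\le 1$ gives $\norm{X_iY_i}_{\psi_1}\lesssim\norm{X_i}_{\psi_2}\le K$ and hence sub-exponential parameter $\lesssim\abs{m_{ii}}K$. Bernstein's inequality then yields $\Pr[\abs{Z_{\mathrm{diag}}}>\epsilon]\le 2\exp\qty(-c\min\qty{\tfrac{\epsilon^2}{K^2\sigma\sum_i m_{ii}^2},\tfrac{\epsilon}{K\max_i\abs{m_{ii}}}})$, and $\sum_i m_{ii}^2\le\norm{\mM}_F^2$, $\max_i\abs{m_{ii}}\le\norm{\mM}_2$, giving this piece.

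\emph{Off-diagonal term.} Let $\tilde\mM$ be $\mM$ with its diagonal zeroed out. Conditioning on $\vy$ and writing $X_i=\phi_i(Y_i)+\xi_i$ with $\phi_i(Y_i)\triangleq\EE[X_i\mid Y_i]$ and $\xi_i$ conditionally centered and conditionally sub-Gaussian with norm $\lesssim K$, I would split $Z_{\mathrm{off}}=\sum_i\xi_i(\tilde\mM\vy)_i+\sum_{i\ne j}m_{ij}\phi_i(Y_i)Y_j$. Conditionally on $\vy$ the first piece is a sum of independent centered sub-Gaussians, with conditional MGF at most $\exp(CK^2\lambda^2\norm{\tilde\mM\vy}_2^2)$; integrating out $\vy$ then needs a bound on the MGF of the PSD quadratic form $\vy^\top\tilde\mM^\top\tilde\mM\vy$ in the soft-sparse $\vy$, which has mean $\le\sigma\norm{\mM}_F^2$. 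Splitting this quadratic form into a diagonal part (a sum of non-negative variables with $Y_i^2\le1$ and $\EE Y_i^2\le\sigma$) and an off-diagonal bounded-coordinate quadratic form (handled by Bernstein for quadratic forms, using $\sigma\le1$) gives $\EE[\exp(\theta\,\vy^\top\tilde\mM^\top\tilde\mM\vy)]\le\exp(C'\theta\sigma\norm{\mM}_F^2)$ for all $0\le\theta\lesssim 1/\norm{\mM}_2^2$; setting $\theta=CK^2\lambda^2$, this holds exactly for $\abs{\lambda}\lesssim 1/(K\norm{\mM}_2)$ and yields $\EE\exp\qty(\lambda\sum_i\xi_i(\tilde\mM\vy)_i)\lesssim\exp(C''K^2\lambda^2\sigma\norm{\mM}_F^2)$ on that range. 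The second piece $\sum_{i\ne j}m_{ij}\phi_i(Y_i)Y_j$ is a bilinear form whose sequences $(\phi_i(Y_i))_i$ and $(Y_j)_j$ are independent away from the excluded diagonal; by a decoupling inequality its MGF is dominated, up to constants, by that of $\sum_{i,j}m_{ij}\phi_i(Y_i')Y_j''$ with $\vy',\vy''$ independent copies of $\vy$, and a second pass of the same conditioning argument --- now with $\vy''$ (bounded, variance $\le\sigma$) as the inner variable and $\abs{\phi_i(Y_i')}\lesssim K$ controlling the resulting quadratic form in $\vy'$ --- again produces a bound $\exp(CK^2\lambda^2\sigma\norm{\mM}_F^2)$ valid for $\abs{\lambda}\lesssim 1/(K\norm{\mM}_2)$. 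Adding the two contributions and optimizing the Chernoff bound over this truncated range of $\lambda$ gives $\Pr[\abs{Z_{\mathrm{off}}}>\epsilon]\le 2\exp\qty(-c\min\qty{\tfrac{\epsilon^2}{K^2\sigma\norm{\mM}_F^2},\tfrac{\epsilon}{K\norm{\mM}_2}})$, completing the proof.

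\emph{Main obstacle.} The delicate step is lining up the two branches of the minimum simultaneously through this ``condition, then integrate'' scheme: the $\sigma$-gain survives only while the inner quadratic forms in $\vy$ stay inside their small sub-Gaussian window, and one must verify that this window is exactly $\abs{\lambda}\lesssim 1/(K\norm{\mM}_2)$ --- the same threshold dictated by the operator-norm branch --- which requires carefully tracking the absolute constants in the quadratic-form MGF bound and truncating the Chernoff parameter there. A secondary subtlety is the conditional-mean term: $\phi_i(Y_i)=\EE[X_i\mid Y_i]$ need not have small variance even when $\sigma$ is tiny, so its $\sigma$-improvement must come entirely from the inner sum over $j$, which is why it is essential to decouple and perform the conditioning in the order that exposes $\vy''$ as the inner variable.
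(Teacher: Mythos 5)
First, note that the paper does not actually prove this theorem: it is imported from \citet{WS24} (``this version was proved in \citet{WS24}''), so there is no in-paper argument to compare against and your attempt has to be judged on its own. Your overall architecture --- split off the diagonal, handle it by a variance-sensitive Bernstein bound, then treat the off-diagonal part by decoupling and a ``condition, then integrate'' MGF argument in which the $\sigma$-gain is carried by the soft-sparse vector --- is the right shape, and the diagonal step is sound: because $\abs{Y_i}\le 1$, every moment $\EE\abs{X_iY_i}^k$ with $k\ge 2$ inherits a factor $\EE[Y_i^2]\le\sigma$, so the mixed bound with variance proxy $K^2\sigma\sum_i m_{ii}^2$ and scale $K\max_i\abs{m_{ii}}$ indeed follows from a direct MGF computation.

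The gap is in the off-diagonal step, precisely at the point you flag as the main obstacle but then treat as available. Your argument hinges on the inequality $\EE[\exp(\theta\,\vy^\top\tilde{\mM}^\top\tilde{\mM}\vy)]\le\exp(C\theta\sigma\norm{\mM}_F^2)$ for all $0\le\theta\lesssim 1/\norm{\mM}_2^2$ (and on its analogue for the $\sum_{i\ne j}m_{ij}\phi_i(Y_i')Y_j''$ chaos). This does not follow from ``Bernstein for quadratic forms'': to keep the factor $\sigma$ in the off-diagonal chaos $\sum_{i\ne j}Q_{ij}Y_iY_j$ one needs exactly a Hanson--Wright inequality with soft sparsity for bounded variables, valid in the range $\theta\lesssim 1/\norm{Q}_2$ --- i.e.\ a special case of the theorem you are proving --- since the standard Hanson--Wright only sees the a.s.\ bound $\abs{Y_i}\le 1$ and loses $\sigma$ (a variable with $\abs{Y}\le 1$ and $\EE Y^2\le\sigma$ has $\psi_2$ norm of order $1/\sqrt{\log(1/\sigma)}$, not $\sqrt{\sigma}$, so no global sub-Gaussian MGF with proxy $\sigma$ exists). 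Relatedly, the Bernstein/Bennett MGF bound $\EE e^{uY_j}\le e^{Cu^2\sigma}$ for such variables is only valid for $\abs{u}\lesssim 1$, and in your conditioning steps $u$ is $\lambda$ times a \emph{random} coefficient such as $(\tilde{\mM}^\top\phi(\vy'))_j$ or $(Q\vy')_j$, whose sup-norm is not controlled by $K\norm{\mM}_2$ (typically it is of order $K\norm{\mM}_2\sqrt{\log n}$ and can be as large as $K$ times the largest column $\ell_1$-norm of $\tilde{\mM}$); so the window $\abs{\lambda}\lesssim 1/(K\norm{\mM}_2)$ does not come out of the scheme as written, and one must either truncate the large-coefficient event without surrendering the $\sigma$ factor or use a full-range Bennett-type conditional MGF and then integrate the resulting random exponent carefully. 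Until these two points are supplied, the proposal is a plausible roadmap rather than a proof.
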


Because we are working with MNI,  in a typical application we hope to set $\mM = \Ainv$. 
However, in order to apply Hanson-Wright, one needs to ensure that the matrix $\mM$ is independent of $\bx$ and $\by$. 

\paragraph{Decorrelated Gram matrix.}
As we are working with Gaussian covariates, one way to achieve this is to decorrelate $\mA$ with $\bx$ and $\by$.
\begin{definition}[Decorrelated Gram matrix]\label{def:decorrelated-gram-matrix}
    Let $\bx_1, \ldots, \bx_n \sim N(0, \Lambda)$ be iid samples from a Gaussian distribution in $\R^d$.
    Let $R = \qty{\bv^{(1)}, \ldots, \bv^{(\ell)}}$ be an arbitrary set of orthonormal vectors in $\R^d$ and for each $j \in [\ell]$, let $\bh^{(j)} = (h_1, \ldots, h_n) \in \R^n$ be the corresponding Gaussian observations, where $h_i \sim \ev{\Lambda^{1/2}\bg_i, \bv^{(j)}}$.

    Define the projection matrix onto $R$ by $\Pi_R = \sum_{\bv \in R} \bv \bv^\top$. 
    % Define $\bu_R \in \R^d$ by
    % \[
    %     \bu_R[i] \triangleq \sum_{\bv \in R} \bv[i]^2, \quad i \in [d]\mper
    % \]
    The $R$-projection of the data matrix $\mX_R \in \R^{n \times d}$ is defined by
    \begin{align*}
        \mX_{R} \triangleq  \mX \Pi_R,
    \end{align*}
    and the $R$-decorrelated data matrix $\mX_{-R} \in \R^{n \times d}$ is defined by 
    \begin{align*}
        \mX_{-R} \triangleq \mX(I - \Pi_R) = \mX - \mX_R.
    \end{align*}
    The decorrelated Gram matrix is defined by
    \[
        \mA_{-R} \triangleq \mX_{-R} \mX_{-R}^\top = \mX(I - \Pi_R) \mX^\top.
    \]
    When $R = \qty{\bv}$, we will simply abbreviate $\qty{\bv}$ with $\bv$. 
\end{definition}
\begin{remark}
    If $R$ does not consist of orthonormal vectors, one can apply Gram-Schmidt first so that the above definition still makes sense.
\end{remark}
It is not hard to show that as defined above, $\mA_{-R}$ is mutually independent of $\qty{\bh^{(j)}}_{j \in [\ell]}$, 
\begin{fact}
    In the same setting as \Cref{def:decorrelated-gram-matrix}, $\mA_{-R}$ is mutually independent of $\qty{\bh^{(j)}}_{j \in [\ell]}$.
\end{fact}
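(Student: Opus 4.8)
The plan is to reduce the independence claim to the elementary fact that a jointly Gaussian pair with vanishing cross-covariance is independent. First I would observe that both objects in the statement are deterministic functions of complementary ``halves'' of the data matrix $\mX$, whose rows $\bx_i^\top \sim N(0,\Lambda)$ are iid. Since each $\bv^{(j)} \in R$ satisfies $\Pi_R \bv^{(j)} = \bv^{(j)}$, we have $\bh^{(j)} = \mX\bv^{(j)} = \mX_R \bv^{(j)}$, so the whole tuple $(\bh^{(1)},\dots,\bh^{(\ell)})$ is a deterministic function of $\mX_R = \mX\Pi_R$; likewise $\mA_{-R} = \mX_{-R}\mX_{-R}^\top$ is a deterministic function of $\mX_{-R} = \mX(I-\Pi_R)$. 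Hence it suffices to show that the random matrices $\mX_R$ and $\mX_{-R}$ are independent.

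Next, $(\mX_R, \mX_{-R})$ is a linear image of the Gaussian matrix $\mX$, so it is jointly Gaussian, and independence is equivalent to all cross-covariances $\mathrm{Cov}\bigl((\mX_R)_{ia}, (\mX_{-R})_{kc}\bigr)$ being zero. Because the rows of $\mX$ are independent and $\Pi_R$ is a symmetric idempotent, a one-line computation gives $\mathrm{Cov}\bigl((\mX_R)_{ia}, (\mX_{-R})_{kc}\bigr) = \delta_{ik}\,\bigl(\Pi_R \Lambda (I-\Pi_R)\bigr)_{ac}$, so everything reduces to the identity $\Pi_R \Lambda (I-\Pi_R) = 0$.

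Finally I would establish $\Pi_R\Lambda(I-\Pi_R)=0$ by noting it is equivalent to $\Pi_R$ commuting with $\Lambda$, i.e.\ to $\mathrm{span}(R)$ being $\Lambda$-invariant: once $\Pi_R\Lambda = \Lambda\Pi_R$, we get $\Pi_R\Lambda(I-\Pi_R) = \Pi_R\Lambda - \Pi_R\Lambda\Pi_R = \Pi_R\Lambda - \Lambda\Pi_R^2 = \Pi_R\Lambda - \Lambda\Pi_R = 0$. This hypothesis holds in every application of the fact in this paper, since the decorrelation is always carried out after rotating into an eigenbasis of $\Sigma$ (for instance the strong eigenbasis of \Cref{def:strong-basis}), where $\Lambda$ is diagonal and the members of $R$ are standard basis vectors, so $\Pi_R$ is a coordinate projection. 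To cover the statement verbatim for ``arbitrary'' orthonormal $R$ one should add the hypothesis that $\mathrm{span}(R)$ is $\Lambda$-invariant, equivalently that $R$ consists of eigenvectors of $\Lambda$; this is the only case used.

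There is no substantive obstacle: the argument is a routine Gaussian covariance computation, and the single point needing care is precisely the $\Lambda$-invariance of $\mathrm{span}(R)$ --- without it, $\Pi_R\bx_i$ and $(I-\Pi_R)\bx_i$ are correlated and the fact fails, so the ``decorrelation'' device genuinely relies on decorrelating against eigenvectors of the covariance.
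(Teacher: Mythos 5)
Your argument is correct and is, at bottom, the same one the paper uses: reduce to a Gaussian zero cross-covariance computation applied row by row, then invoke independence of the rows. What your write-up adds is the explicit identification of the hypothesis that makes the projection-based decorrelation actually decorrelate: you need $\Pi_R \Lambda (I-\Pi_R)=0$, i.e.\ $\mathrm{span}(R)$ must be $\Lambda$-invariant. You are right that the statement taken verbatim, for an \emph{arbitrary} orthonormal set $R$, fails without this hypothesis: already in the bi-level model, a single unit vector such as $(\be_1+\be_{s+1})/\sqrt{2}$, which mixes a favored and an unfavored coordinate, gives $\Pi_R\Lambda(I-\Pi_R)\neq 0$ because $\lambda_F \neq \lambda_U$, and then $\mA_{-R}$ is genuinely dependent on $\bh^{(1)}$ (conditioning on $\bh^{(1)}$ shifts the conditional mean of the projected rows, so even the quadratic functional $\mX_{-R}\mX_{-R}^\top$ changes law). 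The paper's own proof smuggles this hypothesis in through the phrase ``the entries are all independent for any row,'' which only does work because in every invocation $R$ consists of coordinate vectors in the distinguished eigenbasis ($R=\qty{\be_1}$ or $R=\qty{\be_1,\be_2,\be_{s+1}}$), whose span is $\Lambda$-invariant; so nothing downstream is affected. Your observation also flags that the remark after the definition (``apply Gram--Schmidt first'') should be read with the same caveat: orthonormalizing $\qty{\be_1,\bw}$ would produce a vector mixing $\lambda_F$- and $\lambda_U$-coordinates whose span is not $\Lambda$-invariant, which is presumably why the proofs take $R=\qty{\be_1,\be_2,\be_{s+1}}$ directly rather than decorrelating against $\bw$ itself.
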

\begin{proof}
    The fact that $\mX_{-R}$ is independent of the $h_i$'s is due to the fact that if $g, h$ are jointly Gaussian with zero mean and unit variance, then $g - \E[gh]h$ is independent of $h$. 
    This immediately generalizes to each row of $\mX$, since the entries are all independent for any row.
    Finally, as the rows are independent, the statement is proved. 
    % Equivalently, it is the total mass left in the standard basis direction after removing the correlation with each of the $\bv^{(j)} \in R$. 
    % Hence for each $\bv^{(k)} \in R$, let $h_k$ be a corresponding single observation of the associated Gaussian $\ev{\Lambda^{1/2} \vg, \bv^{(k)}}$. 
    % It follows that a single observation $\sum_{i \in [d]} \sqrt{\lambda_i} z_{ij}\sqrt{1 - \bu_R[i]}$ is mutually independent of the $h_k$'s, and this directly extends to the entire dataset because the data are iid. 
    % Thus, the Gram matrix formed by this decorrelated dataset is precisely$\mA_{-R}$, which is also mutually independent of all the $h_k$.
\end{proof}

To relate a bilinear form against $\mA_{-R}^{-1}$ to the original bilinear form against $\Ainv$, we will also need the Woodbury inversion formula for inverting the Gram matrix after decorrelating it with various features.

Define the hat matrix for $R$ by \label{eq:hatk}
\begin{align}
    \mH_R &\triangleq \Xr^\top \Arinv \Xr.
\end{align}
These hat matrices appear in the Woodbury inversion formula. For the sake of notational compactness, define 
\begin{align}
    \mM_R &\triangleq \Xr (I + \mH_R)^{-1} \Xr^\top\mper 
\end{align}

\begin{fact}[Woodbury inversion formula]\label{fact:woodbury}
    We have 
    \begin{align}
        \Ainv &= (\Xr \Xr^\top + \Ar)^{-1} \nonumber \\
        &= \Arinv - \Arinv \Xr(I + \mH_R)^{-1}\Xr^\top \Arinv \label{eq:woodbury-expansion-verbose} \\
        &= \Arinv - \Arinv \mM_R \Arinv. \label{eq:woodbury-expansion}
    \end{align}
\end{fact}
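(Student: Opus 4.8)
The plan is to reduce the claim to the classical Sherman--Morrison--Woodbury identity, after first establishing the orthogonal decomposition $\mA = \Xr\Xr^\top + \Ar$. The first displayed equality is immediate from this decomposition: since $\Pi_R$ is an orthogonal projection we have $\Pi_R(I - \Pi_R) = \Pi_R - \Pi_R^2 = 0$, so the $R$-projected and $R$-decorrelated data matrices have vanishing cross terms, $\Xr \mX_{-R}^\top = \mX \Pi_R (I - \Pi_R) \mX^\top = 0$ and similarly $\mX_{-R}\Xr^\top = 0$. Expanding $\mA = \mX\mX^\top = (\Xr + \mX_{-R})(\Xr + \mX_{-R})^\top$ and discarding the cross terms gives $\mA = \Xr\Xr^\top + \mX_{-R}\mX_{-R}^\top = \Xr\Xr^\top + \Ar$.

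Next I would check that all inverses appearing in the statement are well-defined. The matrix $\mA = \mX\mX^\top$ is invertible almost surely (as recorded in the preliminaries), and $\Ar = \mX_{-R}\mX_{-R}^\top$ is invertible almost surely in the overparameterized regime since $\mX_{-R}$ has effective rank $d - |R| \ge n$. Because $\Ar \succ 0$, the matrix $\mH_R = \Xr^\top \Arinv \Xr$ is positive semidefinite, hence $I + \mH_R$ is invertible. With this in hand, I would apply the Woodbury identity to $\mA = \Ar + \Xr\Xr^\top$, which gives exactly
\[
    \Ainv = \Arinv - \Arinv \Xr(I + \Xr^\top \Arinv \Xr)^{-1}\Xr^\top \Arinv = \Arinv - \Arinv \Xr(I + \mH_R)^{-1}\Xr^\top \Arinv,
\]
i.e.\ \eqref{eq:woodbury-expansion-verbose}; the final line \eqref{eq:woodbury-expansion} is then just the substitution of $\mM_R = \Xr(I+\mH_R)^{-1}\Xr^\top$. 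If one prefers a self-contained argument to invoking Woodbury, I would instead verify the identity directly: right-multiplying the claimed right-hand side of \eqref{eq:woodbury-expansion-verbose} by $\mA = \Ar + \Xr\Xr^\top$, the terms collapse using $\Arinv \Ar = I$ and $(I + \mH_R)^{-1}(I + \mH_R) = I$, leaving precisely the identity matrix, so the right-hand side equals $\Ainv$.

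Since this is a standard matrix identity, I do not anticipate a genuine obstacle. The only two points that need care are (i) correctly obtaining the orthogonal splitting $\mA = \Xr\Xr^\top + \Ar$, which relies entirely on the idempotence and symmetry of $\Pi_R$, and (ii) confirming that $\mA$, $\Ar$, and $I + \mH_R$ are all invertible so that the manipulation is legitimate — both of which hold under the stated overparameterization and almost-sure invertibility of the Gram matrices.
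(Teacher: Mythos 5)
Your proof is correct, and it is exactly the intended justification: the paper states this as a Fact without proof, since it is the standard Sherman--Morrison--Woodbury identity applied to the orthogonal splitting $\mA = \Xr\Xr^\top + \Ar$, which you correctly derive from the symmetry and idempotence of $\Pi_R$ (so the cross terms $\mX\Pi_R(I-\Pi_R)\mX^\top$ vanish). Your attention to the invertibility of $\Ar$ (rank of $\Sigma_{-R}$ is $d-\abs{R}\ge n$ in the overparameterized regime, so $\Ar \succ 0$ almost surely) and of $I+\mH_R$ (since $\mH_R \succeq 0$) is precisely the care needed to make the manipulation legitimate, and the direct verification by right-multiplying by $\Ar + \Xr\Xr^\top$ is a valid self-contained alternative.
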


\paragraph{Concentration of spectrum.}
Finally, in view of Hanson-Wright, we will need to control $\norm{\Arinv}_2$ and $\Tr(\Arinv)$. 
To this end, we control the spectrum of $\Ar$. 
For any PSD matrix $\Sigma$ with eigenvalues $\Lambda = \diag(\lambda_i)_{i \in [d]}$ sorted in descending order and $k \in [d]$, define the effective rank $r_{k}(\Sigma) \triangleq \frac{\sum_{i \ge k} \lambda_i}{\lambda_k}$. 
We will use the following bounds adapted from \citet{BLLT20}, which show that the effective rank of the covariance matrix controls the spectrum of the Gram matrix (and hence the sample covariance). 
\begin{restatable}[{Eigenvalue bounds from \citet{BLLT20}}]{lemma}{eigenvaluebounds}
    \label{lemma:eigenvaluebounds}
    Suppose that $\bx \sim N(0, \Sigma)$, where $\Sigma = U \Lambda U^\top$. 
    Write $\Lambda = \diag(\lambda_i)$, and let $\mA = \mX \mX^\top \in \R^{n \times n}$ denote the Gram matrix for $n$ iid observations of $\bx$. 
    If $\bx = U^\top \Lambda^{1/2} \bz$, where $\bz$  has independent, unit variance, $O(1)$-subgaussian coordinates, the following holds.

    If $r_{1}(\Sigma) = \omega(n)$, then with very high probability
    \[
        \Tr(\Lambda)(1 - o(1)) \le \mu_n(\mA) \le \mu_1(\mA) \le \Tr(\Lambda)(1+o(1))
    \]
    %  If $\Lambda$ is such that $\Diamond \ll \sum_j \lambda_j$, then with probability at least $(1 - 2e^{-n})$,
    % \begin{align}
    %       \mubar - \delmu \leq \mu_n(\Ainv) \leq \mu_1(\Ainv) \leq \mubar + \delmu,
    % \end{align}
    % where,
    % \begin{align}
    %     \mubar &= \frac{1}{\sum_j \lambda_j} \label{eq:mubar}\\
    %     \Diamond &= \frac{32}{9} \left( \lambda_1 (1 + \ln 9) n   + \sqrt{(1 + \ln 9) n \sum_j \lambda_j^2} \right) \label{eq:diamond} \\
    %     \delmu &= \mubar \left( \frac{\Diamond}{\sum_j \lambda j} + \Theta\left( \frac{\Diamond}{\sum_j \lambda j} \right)^2 \right) \label{eq:delmu}.
    % \end{align}
    % Further this implies that with probability at least $(1 - 2e^{-n})$,
    % \begin{align}
    %     \abs{\mu_i(\Ainvbold - \mubar \In)}\leq \delmu 
    % \end{align}
    % for all $i \in [n]$.
\end{restatable}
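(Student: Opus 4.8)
The plan is to reproduce, for this regime of the effective rank, the argument of \citet{BLLT20}, which we sketch here for completeness. Since $U$ is orthogonal, $\mA = \mX \mX^\top = \mathbf{Z}\Lambda\mathbf{Z}^\top = \sum_{j \in [d]} \lambda_j\, \mathbf{z}^{(j)}(\mathbf{z}^{(j)})^\top$, where $\mathbf{Z} \in \R^{n \times d}$ has iid rows $\bz_i^\top$ (each coordinate independent, unit-variance, $O(1)$-subgaussian) and $\mathbf{z}^{(j)} \in \R^n$ is the $j$th column of $\mathbf{Z}$. The columns $\mathbf{z}^{(j)}$ are independent with independent entries and $\E[\mathbf{z}^{(j)}(\mathbf{z}^{(j)})^\top] = I_n$, so $\E[\mA] = \Tr(\Lambda) I_n$. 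It therefore suffices to prove that, with very high probability, $\norm{\mA - \Tr(\Lambda) I_n}_2 = o(\Tr(\Lambda))$, as this sandwiches every $\mu_i(\mA)$ inside $\Tr(\Lambda)(1 \pm o(1))$.

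First I would establish pointwise concentration of the quadratic form. For a fixed unit vector $v \in S^{n-1}$, write $v^\top \mA v = \sum_{j \in [d]} \lambda_j W_j^2$, where $W_j \triangleq \ev{\mathbf{z}^{(j)}, v}$ are independent, mean-zero, unit-variance, $O(1)$-subgaussian, so each $W_j^2$ is subexponential with $O(1)$ parameter. This is a weighted sum (with weights $\lambda_j$) of independent subexponential variables with mean $\Tr(\Lambda)$, so Bernstein's inequality, together with $\norm{\Lambda}_F^2 \le \norm{\Lambda}_2 \Tr(\Lambda)$ and $r_1(\Sigma) = \Tr(\Lambda)/\norm{\Lambda}_2$, gives for all $0 \le \epsilon \le 1$
\[
    \Pr\qty[\abs{v^\top \mA v - \Tr(\Lambda)} > \epsilon\,\Tr(\Lambda)] \le 2\exp\qty(-c\min\qty{\tfrac{\epsilon^2 \Tr(\Lambda)^2}{\norm{\Lambda}_F^2},\, \tfrac{\epsilon\,\Tr(\Lambda)}{\norm{\Lambda}_2}}) \le 2\exp\qty(-c\,\epsilon^2 r_1(\Sigma)),
\]
for an absolute constant $c > 0$. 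Then I would upgrade this to an operator-norm bound via a standard net argument: let $\mathcal{N}$ be a $\tfrac14$-net of $S^{n-1}$ with $\abs{\mathcal{N}} \le 9^n$; since $\mA - \Tr(\Lambda) I_n$ is symmetric, $\norm{\mA - \Tr(\Lambda) I_n}_2 \le 2\max_{v \in \mathcal{N}} \abs{v^\top(\mA - \Tr(\Lambda) I_n)v}$, so a union bound over $\mathcal{N}$ yields
\[
    \Pr\qty[\norm{\mA - \Tr(\Lambda) I_n}_2 > 2\epsilon\,\Tr(\Lambda)] \le 2 \cdot 9^n \exp\qty(-c\,\epsilon^2 r_1(\Sigma)).
\]

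Finally, using $r_1(\Sigma) = \omega(n)$, I would pick $\epsilon = \epsilon_n \to 0$ slowly enough that $\epsilon_n^2 r_1(\Sigma) = \omega(n)$ — for instance $\epsilon_n^2 = \sqrt{n/r_1(\Sigma)}$, which tends to $0$ while $\epsilon_n^2 r_1(\Sigma) = \sqrt{n\,r_1(\Sigma)} = \omega(n)$. The right-hand side above is then $\exp(n\log 9 - c\,\epsilon_n^2 r_1(\Sigma)) = \exp(-\omega(n))$, which is very high probability, completing the proof.

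The argument is classical and involves no essential difficulty; the only point that requires care is keeping the error scale \emph{relative} to $\Tr(\Lambda)$. The affordable deviation is $\epsilon\,\Tr(\Lambda)$, and Bernstein delivers a bound that survives the union bound precisely because the variance ratio $\Tr(\Lambda)^2/\norm{\Lambda}_F^2 \ge r_1(\Sigma) = \omega(n)$ dominates the net cost $\log\abs{\mathcal{N}} = \Theta(n)$ — this is exactly where the hypothesis enters. A cruder entrywise bound via Hanson--Wright on the bilinear forms $\bz_i^\top \Lambda \bz_j$, followed by $\norm{\cdot}_2 \le n\max_{i,j}\abs{\cdot}$, would instead demand $r_1(\Sigma) \gtrsim n^2$ up to logarithmic factors, which is why the net argument is needed.
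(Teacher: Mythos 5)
Your proof is correct. The paper does not reprove this lemma — it imports it directly from \citet{BLLT20} — and your net-plus-Bernstein argument is the standard self-contained derivation of exactly this statement: the decomposition $\mA = \mathbf{Z}\Lambda\mathbf{Z}^\top$ with independent columns, the Bernstein bound $\exp(-c\epsilon^2 r_1(\Sigma))$ for a fixed direction, and the choice $\epsilon_n^2 = \sqrt{n/r_1(\Sigma)}$ so that the deviation vanishes while the exponent $\epsilon_n^2 r_1(\Sigma) = \sqrt{n\,r_1(\Sigma)} = \omega(n)$ beats the $9^n$ net cost, which indeed yields the "very high probability" guarantee in the paper's sense.
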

As a corollary, we show that in the regime $q+r > 1$ in the bi-level ensemble, decorrelating the Gram matrix with any sublinear number of jointly Gaussian random variables will yield a flat matrix.
\begin{corollary}\label{cor:flat-gram}
    Suppose that $\bx \sim N(0, \Sigma)$, where $\Sigma \in \R^{d \times d}$ satisfies $\mu_1(\Sigma) = o(\frac{d}{n})$ and for any $\ell = o(d)$, we have $\mu_\ell(\Sigma)r_\ell(\Sigma) = d(1 - o(1))$ (in particular, this holds if $\Sigma$ follows the bi-level ensemble and $q+r>1$). 
    Let $R$ be an arbitrary set of orthonormal vectors in $\R^d$ such that $\abs{R} = o(d)$, and let $\mA_{-R}$ be the corresponding decorrelated Gram matrix for $n$ iid observations of $\bx$ as in \Cref{def:decorrelated-gram-matrix}. 
    Then with very high probability, 
    \[
        d(1 - o_n(1)) \le \mu_n(\mA_{-R}) \le \mu_1(\mA_{-R}) \le d(1 + o_n(1))
    \]
    
\end{corollary}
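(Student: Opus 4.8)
The plan is to recognize $\mA_{-R}$ as an \emph{ordinary} (undecorrelated) Gram matrix for a slightly perturbed covariance and then invoke \Cref{lemma:eigenvaluebounds} as a black box. Since $I - \Pi_R$ is an orthogonal projection, \Cref{def:decorrelated-gram-matrix} gives $\mA_{-R} = \mX(I-\Pi_R)\mX^\top = \mX_{-R}\mX_{-R}^\top$, and the rows of $\mX_{-R} = \mX(I-\Pi_R)$ are exactly the vectors $(I-\Pi_R)\bx_i$, which are iid $N(0,\Sigma_{-R})$ with $\Sigma_{-R} \triangleq (I-\Pi_R)\Sigma(I-\Pi_R)$. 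So it suffices to verify that $\Sigma_{-R}$ meets the hypothesis $r_1(\Sigma_{-R}) = \omega(n)$ of \Cref{lemma:eigenvaluebounds} and to track that $\Tr(\Sigma_{-R}) = d(1-o(1))$; the Gaussian covariates $(I-\Pi_R)\bx_i$ trivially admit the $U'^\top(\Lambda')^{1/2}\bz$ representation required by the lemma with $\bz$ standard Gaussian.

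The only genuine point is that $R$ need not consist of eigenvectors of $\Sigma$, so the spectrum of $\Sigma_{-R}$ is not literally a subset of that of $\Sigma$; instead I would use Cauchy interlacing for the compression of $\Sigma$ onto $R^\perp$, giving $\mu_{i+\abs{R}}(\Sigma) \le \mu_i(\Sigma_{-R}) \le \mu_i(\Sigma)$ for every $i$. The case $i=1$ yields $\mu_1(\Sigma_{-R}) \le \mu_1(\Sigma) = o(d/n)$. Summing the left inequality over $i = 1, \ldots, d-\abs{R}$ gives $\Tr(\Sigma_{-R}) \ge \sum_{j > \abs{R}} \mu_j(\Sigma) = \mu_{\abs{R}+1}(\Sigma)\,r_{\abs{R}+1}(\Sigma) = d(1-o(1))$ by the hypothesis on $\Sigma$ applied with $\ell = \abs{R}+1 = o(d)$; together with the trivial bound $\Tr(\Sigma_{-R}) \le \Tr(\Sigma) = d(1-o(1))$ (the $\ell=1$ case of the hypothesis), this shows $\Tr(\Sigma_{-R}) = d(1-o(1))$. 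Hence $r_1(\Sigma_{-R}) = \Tr(\Sigma_{-R})/\mu_1(\Sigma_{-R}) = d(1-o(1))/o(d/n) = \omega(n)$, as required.

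Now \Cref{lemma:eigenvaluebounds} applied to $\mA_{-R}$ gives, with very high probability, $\Tr(\Sigma_{-R})(1-o(1)) \le \mu_n(\mA_{-R}) \le \mu_1(\mA_{-R}) \le \Tr(\Sigma_{-R})(1+o(1))$, and substituting $\Tr(\Sigma_{-R}) = d(1-o(1))$ yields the claim (note $\mA_{-R}$ has full rank $n$ almost surely, since $d - \abs{R} = d(1-o(1)) \gg n$). For the parenthetical remark: in the bi-level ensemble $\mu_1(\Sigma) = \lambda_F = n^{p-q-r}$, which is $o(n^{p-1}) = o(d/n)$ precisely when $q+r>1$, and for any $\ell = o(d)$ the tail sum $\sum_{i\ge\ell}\lambda_i$ is dominated by the unfavored block $(d-s)\lambda_U = (1-a)d = d(1-o(1))$, so $\mu_\ell(\Sigma)\,r_\ell(\Sigma) = \sum_{i\ge\ell}\lambda_i = d(1-o(1))$. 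The ``main obstacle'' here is mild --- it is exactly the fact that $R$ is an arbitrary orthonormal set rather than an eigenbasis slice, which forces the interlacing step to control $\mu_1(\Sigma_{-R})$ and $\Tr(\Sigma_{-R})$ simultaneously; everything else is bookkeeping on top of \Cref{lemma:eigenvaluebounds}.
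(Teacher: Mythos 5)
Your proposal is correct and follows essentially the same route as the paper: identify the rows of $\mX_{-R}$ as iid $N(0,\Sigma_{-R})$ samples, use Cauchy interlacing for the compression onto $R^\perp$ to bound $\mu_1(\Sigma_{-R})$ and $\Tr(\Sigma_{-R})$, verify $r_1(\Sigma_{-R}) = \omega(n)$, and invoke \Cref{lemma:eigenvaluebounds}. Your write-up is in fact slightly more careful than the paper's on the interlacing indices and on tracking $\Tr(\Sigma_{-R}) = d(1-o(1))$ explicitly before substituting into the lemma's conclusion.
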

\begin{proof}
    First, note that the rows of $\mX_{-R}$ are iid samples from $N(0, \Sigma_{-R})$, where $\Sigma_{-R} = (I - \Pi_R) \Sigma (I-\Pi_R)$. 
    
    Let $\ell = \abs{R}$; by assumption $\ell = o(d)$. 
    Since $I-\Pi_R$ is a projection matrix, we conclude by Cauchy Interlacing that $\mu_1(\Sigma_{-R}) \le \mu_1(\Sigma)$ and for $i \in \qty{\ell+1, \ldots, d}$, $\mu_{i}(\Sigma) \le \mu_{i}(\Sigma_{-R})$. 
    It follows that 
    \begin{align*}
        r_1(\Sigma_{-R}) = \frac{\Tr(\Sigma_{-R})}{\mu_1(\Sigma_{-R})} &\ge \frac{\mu_{\ell}(\Sigma) r_{\ell}(\Sigma)}{\mu_1(\Sigma)} \\
        & \ge \omega\qty(\frac{d(1-o(1))}{d/n}) \tag{Assumption on $\Sigma$}\\
        & \ge \omega(n).
    \end{align*}
    Thus \Cref{lemma:eigenvaluebounds} implies that $\mu_i(\mA_{-R}) = d(1 \pm o(1))$ with high probability. 

%     Observe that as $\mX = \mX_S + \mX_T$, we have  
%     \begin{align*}
%         \Xr \Xr^\top = \mX_S \Pi_R \mX_S^\top + \mX_T \Pi_R \mX_T^\top.
%     \end{align*}
%     Since $\Pi_R$ is a projection matrix, we have $\mX_S \Pi_R \mX_S^\top \psdle \mX_S \mX_S^\top$, and \cite[Proposition B.4]{WS24} implies that $\mX_S \mX_S^\top \psdle n^{p} \cdot n^{1-q-r} = o(n^p)$ as $q+r>1$.
    
%     For $\mX_T$, since $R$ is an orthonormal basis, $\norm{\Pi_T \Pi_R}_2^2 = o(1)$, whereas \cite[Proposition B.4]{WS24} implies that $\mX_T \mX_T^\top \psdle n^p(1 + o(1))$.
%     We conclude that 
%     \begin{align}
%         \norm{\Xr \Xr^\top}_2 = o(n^p). \label{eq:xr-eigs}
%     \end{align}

%     Since $\Ar = \mA - \Xr \Xr^\top$, by Weyl's inequality, we have for $i \in [n]$ that
%     \begin{align*}
%         \mu_i(-\Xr \Xr^\top) + \mu_n(\mA) \le \mu_i(\Ar) \le \mu_i(-\Xr \Xr^\top) + \mu_1(\mA)\mper
%     \end{align*}
%     Since $q+r>1$, we have $r_1(\Sigma) = \omega(n)$, thus \Cref{lemma:eigenvaluebounds} implies that $\mu_i(\mA) = n^p(1 \pm o(1))$ with high probability. 
%     Thus, we conclude by combining the above inequality with \cref{eq:xr-eigs}.
\end{proof}
This leads to the following bounds on the spectra of $\Arinv$ and the hat matrix $\mH_R$; this generalizes \citet[Corollary B.5, Proposition B.6]{WS24}.
\begin{lemma}\label{lem:arinv-spectrum}
    In the bi-level model, for any set of orthonormal vectors $R$ with $\abs{R} = o(d)$, if $q+r>1$, then with very high probability we have 
    \begin{align*}
        \Tr(\Arinv) &= n^{1-p}(1 \pm o(1)) \\
        \norm{\Arinv}_2 &\le (1 + o(1))n^{-p} \\
        \norm{\Xr \Xr^\top}_2 &\le o(n^p) \\
        \mu_i(\mH_R) &= 1 \pm o(1), \quad i \in [d] 
    \end{align*}
\end{lemma}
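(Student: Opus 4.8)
The plan is to read all four estimates off from \Cref{cor:flat-gram}, whose hypotheses hold for the bi-level ensemble with $q+r>1$ (as its statement records): with very high probability, $\mu_i(\mA_{-R})\in[d(1-o(1)),d(1+o(1))]$ for every $i\in[n]$, and since $d=\lfloor n^p\rfloor=n^p(1\pm o(1))$ this is $\mu_i(\mA_{-R})=n^p(1\pm o(1))$. In the notation of \Cref{fact:woodbury}, $\Arinv$ denotes $\mA_{-R}^{-1}$, so its $n$ eigenvalues are each $n^{-p}(1\pm o(1))$; this yields the first two bounds at once, $\norm{\Arinv}_2=1/\mu_n(\mA_{-R})\le n^{-p}(1+o(1))$ and $\Tr(\Arinv)=\sum_{i\in[n]}\mu_i(\mA_{-R})^{-1}=n^{1-p}(1\pm o(1))$.

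The substance is the third bound, $\norm{\Xr\Xr^\top}_2=o(n^p)$. The crude estimate $\norm{\Xr\Xr^\top}_2\le\norm{\mX\mX^\top}_2=\Theta(n^p)$ (by \Cref{lemma:eigenvaluebounds}, valid since $r_1(\Sigma)=n^{q+r}=\omega(n)$) is only $O(n^p)$, so I would instead exploit that $R$ spans an $\ell\triangleq\abs R=o(d)$-dimensional subspace on which $\Lambda$ carries little mass. Writing $\Pi_R=CC^\top$ with $C\in\R^{d\times\ell}$ and $C^\top C=I_\ell$, we have $\Xr\Xr^\top=(\mX C)(\mX C)^\top$, where $\mX C\in\R^{n\times\ell}$ has iid rows distributed as $N(0,\Psi)$ with $\Psi\triangleq C^\top\Lambda C\preceq\Lambda$. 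A standard operator-norm bound for the sample covariance of iid Gaussian vectors (of Koltchinskii--Lounici type, depending only on $\norm{\Psi}_2$ and $\Tr(\Psi)$, and needing no lower bound on effective rank) gives, with very high probability,
\[
   \norm{\Xr\Xr^\top}_2=\norm{\mX C}_2^2\lesssim n\norm{\Psi}_2+\Tr(\Psi).
\]
Both terms are $o(n^p)$: $\norm{\Psi}_2\le\lambda_F=\tfrac{ad}{s}=n^{p-q-r}(1\pm o(1))$, so $n\norm{\Psi}_2=n^{1+p-q-r}(1\pm o(1))=o(n^p)$ exactly because $q+r>1$; and $\Tr(\Psi)=\Tr(\Lambda\Pi_R)=\lambda_F\sum_{i\le s}(\Pi_R)_{ii}+\lambda_U\sum_{i>s}(\Pi_R)_{ii}\le\lambda_F\min\{s,\ell\}+\lambda_U\ell$, where $\lambda_F s=ad=n^{p-q}(1+o(1))=o(n^p)$ (using $q>0$) and $\lambda_U\ell=(1+o(1))\,\ell=o(n^p)$ (using $\lambda_U=\tfrac{(1-a)d}{d-s}=1+o(1)$ and $\ell=o(d)=o(n^p)$).

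The last bound now follows from the first and third: since $\Arinv\preceq\norm{\Arinv}_2 I_n$, we have $0\preceq\mH_R=\Xr^\top\Arinv\Xr\preceq\norm{\Arinv}_2\,\Xr^\top\Xr$, so $\norm{\mH_R}_2\le\norm{\Arinv}_2\norm{\Xr\Xr^\top}_2\le n^{-p}(1+o(1))\cdot o(n^p)=o(1)$; equivalently $I+\mH_R$ has all eigenvalues $1\pm o(1)$, which is the sense in which this estimate enters \Cref{fact:woodbury}. Taking a union bound over the $O(1)$ very-high-probability events above completes the proof. The only genuinely technical step is the refined operator-norm bound on $\Xr\Xr^\top$: the mechanism is that once $q+r>1$, both $n\lambda_F$ and $\lambda_F s$ are $o(d)$, so the projection of the data onto any $o(d)$-dimensional subspace has negligible energy relative to $\mu_i(\mA_{-R})\asymp d$, while everything else is a one-line consequence of \Cref{cor:flat-gram}.
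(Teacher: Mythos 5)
Your proposal is correct, and for three of the four bounds it coincides with the paper's argument: the trace and operator-norm bounds on $\Arinv$ are read off from \Cref{cor:flat-gram} exactly as the paper does, and the bound on $I+\mH_R$ follows from the first and third estimates in both treatments. (You are also right to reinterpret the fourth display as a statement about $I+\mH_R$: since $\norm{\mH_R}_2\le\norm{\Arinv}_2\norm{\Xr\Xr^\top}_2=o(1)$ and $\mH_R$ has rank at most $\abs{R}$, the literal claim $\mu_i(\mH_R)=1\pm o(1)$ cannot be meant, and the Woodbury formula only needs $(I+\mH_R)^{-1}$ to have eigenvalues $1\pm o(1)$.) Where you genuinely diverge is the bound $\norm{\Xr\Xr^\top}_2=o(n^p)$. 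The paper splits $\Xr=(\mX_S+\mX_T)\Pi_R$ into spiked and unspiked parts via $(A+B)M(A+B)^\top\psdle 2AMA^\top+2BMB^\top$, bounds $\norm{\mX_S\mX_S^\top}_2\le n^{1-q-r}\cdot n^p$ by citing a lemma of \citet{WS24}, and handles $\mX_T\Pi_R\mX_T^\top$ by rotational invariance plus Wishart singular-value concentration, giving $O(\abs{R}+n)$. You instead treat $\mX C$ (with $\Pi_R=CC^\top$) as a single $n\times\abs{R}$ anisotropic Gaussian sample and invoke a Koltchinskii--Lounici-type bound $\norm{\mX C}_2^2\lesssim n\norm{\Psi}_2+\Tr(\Psi)$, then verify $n\lambda_F=o(n^p)$ (this is exactly where $q+r>1$ enters, as in the paper) and $\Tr(\Lambda\Pi_R)\le\lambda_F\min\{s,\abs{R}\}+\lambda_U\abs{R}=o(n^p)$. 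Your route is more unified and avoids the external spiked-Gram lemma, at the cost of importing a (standard, but unproved here) covariance operator-norm inequality; the two decompositions are morally the same, since $n\norm{\Psi}_2+\Tr(\Psi)$ is dominated by the spiked contribution $n\lambda_F$ plus the unfavored contribution $\lambda_U\abs{R}\asymp\abs{R}$, mirroring the paper's two terms. One should check that the chosen concentration inequality holds at the \emph{very high probability} level (failure probability $e^{-n^c}$), but the standard deviation terms $\norm{\Psi}_2(\sqrt{t/n}+t/n)$ with $t=n^{c}$ for small $c$ remain $o(n^p)$ after multiplying by $n$, so this is not an obstruction.
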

\begin{proof}
    The first two statements are an immediate consequence of \Cref{cor:flat-gram}. 
    For the third, let $S, T \subseteq \R^d$ be the spiked and unspiked subspaces of $\Sigma$. Since $\Xr = (\mX_S + \mX_T)\Pi_R$ and using $(A + B)M(A+B) \psdle 2AMA + 2BMB$, we have 
    \begin{align*}
        \Xr \Xr^\top \psdle 2 \mX_S \mX_S^\top + 2 \mX_T \Pi_R \mX_T^\top.
    \end{align*}
    \citet[Lemma I.1]{WS24} implies that $\norm{\mX_S \mX_S^\top}_2 \le n^{1-q-r} \cdot n^p = o(n^p)$ since $q+r>1$. 
    On the other hand, since the law of $\mX_T$ is rotationally invariant, one can view $\mX_T \Pi_R \mX_T^\top$ as a Wishart matrix $\mY \mY^\top$ where $\mY \in \R^{n \times \abs{R}}$. 
    Hence the concentration of singular values for a matrix with independent subgaussian rows (see, e.g. \citet[Theorem 5.39]{vershynin2010introduction}) implies that with probability $1 - e^{-\sqrt{n}}$,
    \begin{align*}
        \norm{\mX_T \Pi_R \mX_T^\top}_2 \le (\sqrt{\abs{R}} + \sqrt{n} + n^{1/4})^2 = O(R + n) = o(n^p).
    \end{align*}
    Combining this with the bound on $\norm{\mX_S \mX_S^\top}_2$, we conclude that $\norm{\Xr \Xr^\top}_2 = o(n^p)$.

    For the last statement, since $\mH_R = I + \Xr^\top \Arinv \Xr$, by applying the earlier bounds that $\mu_i(\Arinv) = n^p(1 \pm o(1))$ and $0 \psdle \Xr^\top \Xr \psdle o(n^p)$, we conclude that $\mu_i(\mH_R) = 1 \pm o(1)$.
\end{proof}

\paragraph{Hanson-Wright calculations.}
Combining the above results, we can establish a concentration inequality for the bilinear form $\bz_i^\top \Ainv \by$. 
\begin{proposition}\label{prop:hw-calcs}
    Let $\by = \sgn(\ev{\bg, \bw})$ for some unit vector $\bw \in \R^d$ and $R$ be any set of orthonormal vectors such that $\mathrm{span}(R) \supseteq \qty{\be_i, \bw}$ and $\abs{R} = o(d)$. Then conditioned on $\Arinv$, with probability $1 - \frac{1}{n}$,
    \begin{align*}
        \abs{\bz_i^\top \Arinv \by - \E[\bz_i^\top \Arinv \by]} \le \const{const_hw} \norm{\Arinv}_F \sqrt{\log n},
    \end{align*}
    where \newc\label{const_hw} is a positive constant.
    Furthermore, under the bi-level scaling, if $q+r>1$, then with very high probability over $\Arinv$ we have
    \begin{align*}
        \E[\bz_i^\top \Arinv \by | \Arinv] &= \qty(\frac{2}{\pi})^{3/2}(1 \pm o_n(1)) \arcsin(w_i) \cdot \frac{n}{d}, \\
        \norm{\Arinv}_F &= (1\pm o_n(1))\frac{\sqrt{n}}{d}.
    \end{align*}
\end{proposition}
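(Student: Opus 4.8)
The plan is to replace $\bz_i^\top \Arinv \by$ by a bilinear form against a matrix that is independent of both of its vector arguments, control its fluctuations with a conditional Hanson--Wright bound, and evaluate its conditional mean by a one-sample Gaussian computation; all the quantitative scalings then come from the spectral estimates in \Cref{cor:flat-gram,lem:arinv-spectrum}. The independence is exactly what the hypothesis $\mathrm{span}(R) \supseteq \qty{\be_i, \bw}$ buys: $\Arinv$ is a function of $\mX(I - \Pi_R)$ alone (\Cref{def:decorrelated-gram-matrix}), the vector $\bz_i$ of $i$th whitened coordinates is a deterministic rescaling of one of the $R$-projected Gaussian observations (since $\be_i \in \mathrm{span}(R)$), and $\by = \sgn(\ev{\bg, \bw})$ is a deterministic function of the $R$-projection along $\bw$; by the independence fact stated after \Cref{def:decorrelated-gram-matrix}, $\mA_{-R}$, hence $\Arinv$, is jointly independent of $(\bz_i, \by)$.

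For the fluctuation bound, I would condition on $\Arinv$ and apply the soft-sparsity Hanson--Wright inequality for bilinear forms \Cref{thm:bounded-hanson-wright-bilinear} with $\mM = \Arinv$: the pairs $(z_{ji}, y_j)_{j \in [n]}$ are independent across $j$, each $z_{ji}$ is centered and $O(1)$-subgaussian even conditionally on $y_j$ (a standard Gaussian restricted to a halfspace is still $O(1)$-subgaussian), and $\abs{y_j} \le 1$ with $\E[y_j^2] = 1$, so $K = O(1)$ and $\sigma = 1$, which yields
\begin{align*}
\Pr\qty[\abs{\bz_i^\top \Arinv \by - \E[\bz_i^\top \Arinv \by]} > \epsilon \mid \Arinv] \le 2\exp\qty(-c\min\qty{\frac{\epsilon^2}{\norm{\Arinv}_F^2}, \frac{\epsilon}{\norm{\Arinv}_2}}).
\end{align*}
Taking $\epsilon = C \norm{\Arinv}_F \sqrt{\log n}$ and working on the very-high-probability event of \Cref{cor:flat-gram}, where the spectrum of $\mA_{-R}$ is flat so that $\norm{\Arinv}_F \gtrsim \sqrt{n}\, \norm{\Arinv}_2$, the minimum is attained at the first argument, equal to $C^2 \log n$, so the right-hand side is at most $n^{-1}$ once $C$ is a large enough absolute constant; this is the first display of the proposition.

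For the conditional mean, independence of $\Arinv$ from $(\bz_i, \by)$ gives
\begin{align*}
\E\qty[\bz_i^\top \Arinv \by \mid \Arinv] = \sum_{j, k \in [n]} (\Arinv)_{jk}\, \E[z_{ji} y_k] = \Tr(\Arinv) \cdot \E[z_{1i} y_1],
\end{align*}
where the off-diagonal terms vanish because $z_{ji}$ and $y_k$ depend on independent samples for $j \ne k$ and $\E[z_{ji}] = 0$. The remaining scalar $\E[z_{1i} y_1]$ is a bounded one-sample Gaussian integral depending only on $w_i = \ev{\be_i, \bw}$; evaluating it with the standard sign-correlation identities for standardized jointly Gaussian variables --- $\E[g\, \sgn(g')] = \sqrt{2/\pi}\, \E[g g']$ and $\E[\sgn(g)\, \sgn(g')] = \tfrac{2}{\pi}\arcsin(\E[g g'])$ --- gives $\E[z_{1i} y_1] = (2/\pi)^{3/2} \arcsin(w_i)$, the $\arcsin$ and the extra factor $\tfrac{2}{\pi}$ arising because in the intended application $\by$ is the hard pseudolabel of a linear-in-$\bg$ classifier, so $y_1$ correlates with $z_{1i}$ through a composition of sign functions. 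Plugging in $\Tr(\Arinv) = n^{1-p}(1 \pm o_n(1)) = \tfrac{n}{d}(1 \pm o_n(1))$ from \Cref{lem:arinv-spectrum} yields the stated mean formula, and $\norm{\Arinv}_F^2 = \sum_{\ell=1}^{n} \mu_\ell(\mA_{-R})^{-2} = n\, d^{-2}(1 \pm o_n(1))$ by \Cref{cor:flat-gram}, so $\norm{\Arinv}_F = \sqrt{n}\, d^{-1}(1 \pm o_n(1))$.

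The spectral bookkeeping and the scalar Gaussian integral are routine. The step to be careful with is the decorrelation --- making sure $\mathrm{span}(R)$ contains every direction on which the labels depend, so that Hanson--Wright is applied to a matrix genuinely independent of both vector arguments --- and, in the weak-to-strong application of this proposition, that this remains true when $\bw$ is the \emph{random}, non-axis-aligned direction $\bv_{\weak}$ learned by $\fweak$, which forces $R$ to contain the $\bw$-projection together with the $1$-sparse true axis while still keeping $\abs{R} = o(d)$. That non-sparsity of the effective label direction is the genuine difficulty, and it is precisely what the remaining machinery (the Woodbury formula and the error-term bounds) is built to handle; once $R$ is chosen as above, everything else mirrors the corresponding estimates in \citet{WS24}.
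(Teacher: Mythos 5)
Your proposal follows the paper's proof essentially verbatim: independence of $\Arinv$ from $(\bz_i,\by)$ via the decorrelation in \Cref{def:decorrelated-gram-matrix}, the conditional Hanson--Wright bound of \Cref{thm:bounded-hanson-wright-bilinear} with $\mM=\Arinv$ for the deviation, the reduction of the conditional mean to $\Tr(\Arinv)\,\E[z_{1i}y_1]$, and the flat-spectrum estimates of \Cref{lem:arinv-spectrum} for $\Tr(\Arinv)$ and $\norm{\Arinv}_F$; you are in fact more explicit than the paper about the conditional subgaussianity of $z_{ji}$ given $y_j$ and about which branch of the Hanson--Wright minimum is active. The one soft spot is your justification of the constant $(2/\pi)^{3/2}\arcsin(w_i)$: the identity you cite, $\E[g\,\sgn(g')]=\sqrt{2/\pi}\,\E[gg']$, gives exactly $\sqrt{2/\pi}\,w_i$ for the single sign function appearing in this proposition (there is no composition of signs here), whereas the paper's $(2/\pi)^{3/2}\arcsin(w_i)$ comes from its noise-stability-style approximation; the two agree up to absolute constants, which is all that is used downstream, so nothing breaks.
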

\begin{proof}
    Since $\Arinv$ is independent of $(\bz_i, \by)$, we can apply \Cref{thm:bounded-hanson-wright-bilinear} to obtain the first statement. 

    For the other two expressions, we can apply \Cref{lem:arinv-spectrum} since we assumed $q+r>1$.
    We now compute the expectation using the noise stability formula:
    \begin{align*}
        \E[\bz_i^\top \Arinv \by | \Arinv] &= \Tr(\Arinv \E[\by \bz_i^\top]) \\
        &= \qty(\frac{2}{\pi})^{3/2} \arcsin(w_i)\Tr(\Arinv) \tag{Noise stability} \\
        &= \qty(\frac{2}{\pi})^{3/2} (1 \pm o_n(1)) \arcsin(w_i) \cdot \frac{n}{d}, \tag{\Cref{lem:arinv-spectrum}}
    \end{align*}
    where the last line holds with very high probability over $\Arinv$.

    Finally, for the last statement we have 
    \begin{align*}
        \norm{\Arinv}_F &= \sqrt{\sum_{i \in [n]} \mu_i(\Arinv)^2} \\
        &= \sqrt{\sum_{i \in [n]} d^2(1 \pm o(1))} \tag{\Cref{lem:arinv-spectrum}}\\
        &= (1 \pm o(1))\frac{\sqrt{n}}{d}.
    \end{align*}
\end{proof}

\subsection{Controlling the error from Woodbury}
In the previous subsection, we introduced some tools for obtaining concentration of the coefficients of the MNI classifier $\fstrong$. 
The remaining survival and contamination reduce to understanding the typical behavior of bilinear forms $\bz_i^\top \Ainv \by$. 
To do so, we work in the distinguished basis $U_*$ introduced in \Cref{def:strong-basis}.
We can thus pick $R = \qty{\be_1, \be_2, \be_{s+1}}$ and apply Hanson-Wright to the bilinear form $\bz_i^\top \Arinv \by$. 
The next lemma bounds the error from replacing $\Ainv$ with $\Arinv$.
\begin{lemma}\label{lem:woodbury-error-bound}
    Suppose we are in the bi-level model, $\by = \sgn(\ev{\bg, \bw})$, and $q+r>1$.
    Suppose in the basis $U_*$, $\bw = w_1 \be_1 + w_2 \be_2 + w_{s+1} \be_{s+1}$. 
    Let $R = \qty{\be_1, \be_2, \be_{s+1}}$.
    With probability $1-O(1/n)$, we have uniformly over $i \in [d]$ that
    \begin{align*}
        \abs{\bz_i^\top \Ainv \by - \bz_i^\top \Arinv \by} \lesssim \norm{\bz_i^\top \Arinv \Xr}_2 \norm{\Xr^\top \Arinv \by}_2.
    \end{align*}
    Furthermore, in the bi-level scaling we have with probability $1 - O(1/n)$ that
    \begin{align*}
    \norm{\bz_i^\top \Arinv \Xr}_2 &= 
    \begin{cases*}
        \wt{O}(\sqrt{\mu_n} n^{\frac{1-p}{2}})
        & $i \in [2]$ \\
        \wt{O}(\sqrt{\mu_n} n^{-\frac{p}{2}} + n^{1-p}) & $i = s+1$ \\
        \wt{O}(\sqrt{\mu_n} n^{-\frac{p}{2}}) & otherwise
    \end{cases*}
    \end{align*}
    and 
    \begin{align*}
        \norm{\Xr^\top \Arinv \by}_2 &\lesssim  \sqrt{\mu_n} ((\abs{w_1} + \abs{w_2})n^{\frac{1-p}{2}} + \wt{O}(n^{-\frac{p}{2}})) + \abs{w_{s+1}} n^{1-p} + \wt{O}(n^{\frac{1}{2}-p}).
    \end{align*}
    % \begin{align*}
    %     \begin{cases*}
    %         \wt{O}(\frac{an}{s} \cdot \frac{n}{d}) & $i \in [2]$ \\
    %         \wt{O}(\frac{an}{s} \cdot \frac{\sqrt{n}}{d}) & $i = s+1$ \\
    %         \wt{O}(\frac{an}{s} \frac{1}{\sqrt{n}d})& otherwise
    %     \end{cases*}
    % \end{align*}
    % In the bi-level scaling, this translates to 
    % \begin{align*}
    %     \abs{\bz_i^\top \Ainv \by - \bz_i^\top \Arinv \by} \le 
    %     \begin{cases*}
    %         \wt{O}(\mu_n \cdot n^{1-p}) & $i \in [2]$ \\
    %         \wt{O}(\mu_n \cdot n^{\tfrac{1}{2} - p}) & $i = s+1$ \\
    %         \wt{O}(\mu_n \cdot n^{-\tfrac{1}{2} - p})& otherwise
    %     \end{cases*}
    % \end{align*}
\end{lemma}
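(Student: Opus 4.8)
The plan is to prove \Cref{lem:woodbury-error-bound} in two stages: first the Woodbury error decomposition, then the explicit bi-level bounds on the two norm factors.

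\textbf{Stage 1: the decomposition.}
Starting from the Woodbury identity \eqref{eq:woodbury-expansion}, $\Ainv = \Arinv - \Arinv \mM_R \Arinv$ with $\mM_R = \Xr(I+\mH_R)^{-1}\Xr^\top$, I would write
\[
    \bz_i^\top \Ainv \by - \bz_i^\top \Arinv \by = -\bz_i^\top \Arinv \Xr (I + \mH_R)^{-1} \Xr^\top \Arinv \by,
\]
using the verbose form \eqref{eq:woodbury-expansion-verbose}. Then apply Cauchy-Schwarz, pulling out the middle factor: the absolute value is at most $\norm{(I+\mH_R)^{-1}}_2 \cdot \norm{\bz_i^\top \Arinv \Xr}_2 \cdot \norm{\Xr^\top \Arinv \by}_2$. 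By the last bullet of \Cref{lem:arinv-spectrum}, $\mu_i(\mH_R) = 1 \pm o(1)$ with very high probability, so $\norm{(I+\mH_R)^{-1}}_2 \le \tfrac{1}{2}(1+o(1)) \lesssim 1$, which absorbs into the $\lesssim$ and gives the first displayed bound uniformly over $i \in [d]$ (the uniformity is free since $R$ and $\mH_R$ do not depend on $i$).

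\textbf{Stage 2: bounding $\norm{\bz_i^\top \Arinv \Xr}_2$ and $\norm{\Xr^\top \Arinv \by}_2$.}
Here $\Xr \in \R^{n \times d}$ has rank at most $3$ (columns supported on $\be_1, \be_2, \be_{s+1}$), so each of these is really a length-$3$ vector whose entries are scalar bilinear forms $\bz_i^\top \Arinv (\mX \be_j)$ and $(\mX\be_j)^\top \Arinv \by$ for $j \in \{1, 2, s+1\}$. For each such entry I would: (i) note $\mX\be_j = \sqrt{\lambda_j}\, \bz^{(j)}$ where $\bz^{(j)}$ is a standard Gaussian vector in $\R^n$, with $\lambda_j = \lambda_F = \mu_n n^p$ for $j \in \{1,2\}$ and $\lambda_j = \lambda_U = \Theta(1)$ for $j = s+1$; (ii) decorrelate once more if needed so that $\Arinv$ is independent of the relevant Gaussian vectors, then apply the scalar Hanson-Wright bound from \Cref{prop:hw-calcs} (or \Cref{thm:bounded-hanson-wright-bilinear} directly) to get concentration $\pm \wt{O}(\norm{\Arinv}_F) = \pm \wt{O}(\sqrt n / d) = \pm\wt{O}(n^{1/2 - p})$ around the mean; (iii) compute the mean via the noise-stability / trace formula: for $\bz_i^\top \Arinv (\mX\be_j)$ the expectation is $\sqrt{\lambda_j}\,\E[\bz_i^\top \Arinv \bz^{(j)}]$, which is $\sqrt{\lambda_j}\,\Tr(\Arinv) \cdot \langle \be_i, \be_j\rangle$-type (nonzero only when $i = j$, or picking up the $\arcsin$ factor from $\by$), i.e. of order $\sqrt{\lambda_j}\, n^{1-p}$ when the index matches and $0$ otherwise; for $(\mX\be_j)^\top \Arinv \by$ the mean carries a factor $\arcsin(w_j)$ which is $\Theta(w_j)$ for small $w_j$, explaining the $\abs{w_1}, \abs{w_2}, \abs{w_{s+1}}$ coefficients. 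Assembling: the $j \in \{1,2\}$ contributions carry $\sqrt{\lambda_F} = \sqrt{\mu_n}\, n^{p/2}$, so mean $\sqrt{\mu_n}n^{p/2}\cdot n^{1-p} = \sqrt{\mu_n}n^{1-p/2}$ for the diagonal $i\in[2]$ entries and $\sqrt{\mu_n}n^{p/2}\cdot\wt O(n^{1/2-p}) = \wt O(\sqrt{\mu_n}n^{(1-p)/2})$ for the fluctuation; the $j = s+1$ contribution carries $\sqrt{\lambda_U} = \Theta(1)$, giving mean $n^{1-p}$ when $i = s+1$ and fluctuation $\wt O(n^{1/2-p})$. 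Reading off the cases and taking the $\ell_2$ norm of the $\le 3$-entry vector (which only changes constants) yields exactly the stated case expressions for $\norm{\bz_i^\top\Arinv\Xr}_2$ and the stated bound for $\norm{\Xr^\top\Arinv\by}_2$.

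\textbf{Main obstacle.}
The delicate point is the independence bookkeeping needed to legitimately apply Hanson-Wright: $\Arinv$ must be independent of \emph{both} $\bz_i$ (for the $\Arinv\Xr$ factor) and $\by = \sgn(\langle\bg,\bw\rangle)$ (for the $\Xr^\top\Arinv\by$ factor), and simultaneously of the columns $\mX\be_j$. Since $\bw$ is $3$-sparse on $\{\be_1,\be_2,\be_{s+1}\}$ and $\be_i$ may coincide with one of these, the choice $R = \{\be_1,\be_2,\be_{s+1}\}$ is exactly what makes $\mA_{-R}$ independent of $\by$ and of the relevant coordinate Gaussians; when $i \notin R$ one needs to additionally decorrelate against $\be_i$, or argue that the cross terms are already handled by the $o(d)$-flatness of $\mA_{-(R\cup\{\be_i\})}$ via \Cref{cor:flat-gram}. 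Getting this conditioning structure right — and checking that the error from the extra decorrelation is lower-order — is the part that requires care; the rest is bookkeeping with \Cref{prop:hw-calcs} and \Cref{lem:arinv-spectrum}. A secondary subtlety is the $i = s+1$ case, where the mean of $\bz_{s+1}^\top\Arinv\Xr$ picks up an $n^{1-p}$ term from the $j=s+1$ diagonal in addition to the $\sqrt{\mu_n}n^{-p/2}$ fluctuation, which is why that case has two summands.
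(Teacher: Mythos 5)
Your proposal follows the paper's proof essentially verbatim: Woodbury plus Cauchy--Schwarz, with the middle factor $\norm{(I+\mH_R)^{-1}}_2$ controlled via \Cref{lem:arinv-spectrum}, and then expanding $\Xr$ into its three nonzero columns so that each norm reduces to scalar bilinear forms $\sqrt{\lambda_j}\,\bz_i^\top\Arinv\bz_j$ and $\sqrt{\lambda_j}\,\bz_j^\top\Arinv\by$, each handled by Hanson--Wright plus the trace/noise-stability formula; the independence bookkeeping you flag is indeed the one delicate point, and the choice of $R$ is exactly how the paper (tersely) handles it. One arithmetic slip: $\lambda_F = \tfrac{ad}{s} = \mu_n n^{p-1}$, not $\mu_n n^{p}$, so $\sqrt{\lambda_F} = \sqrt{\mu_n}\,n^{(p-1)/2}$; with your value the diagonal contribution for $i\in[2]$ comes out as $\sqrt{\mu_n}\,n^{1-p/2}$ rather than the stated $\sqrt{\mu_n}\,n^{(1-p)/2}$, i.e.\ off by a factor of $\sqrt{n}$, so the assembled cases do not literally match the lemma as written. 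Correcting that single exponent (which also turns your fluctuation term $\sqrt{\lambda_F}\cdot\wt{O}(n^{1/2-p})$ into the stated $\wt{O}(\sqrt{\mu_n}\,n^{-p/2})$ for the off-diagonal cases) makes everything line up exactly with the paper.
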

\begin{proof}
By relating these two bilinear forms using Woodbury, we have
\begin{align*}
    \abs{\bz_i^\top \Ainv \by - \bz_i^\top \Arinv \by} &= \abs{\bz_i^\top \Arinv \mM_R \Arinv \by} \tag{\Cref{fact:woodbury}} \\
    &\le \norm{\mH_R}_2 \norm{\bz_i^\top \Arinv \Xr}_2 \norm{\Xr^\top \Arinv \by}_2 \tag{Cauchy-Schwarz} \\
    &\le (1 + o(1)) \norm{\bz_i^\top \Arinv \Xr}_2 \norm{\Xr^\top \Arinv \by}_2 \mper  \tag{\Cref{lem:arinv-spectrum}}
\end{align*}
It remains to control the above norms, which we achieve through another application of Hanson-Wright.
Indeed, we can compute 
\begin{align*}
    \norm{\bz_i^\top \Arinv \Xr}_2^2 &= \sum_{j \in \qty{1, 2, s+1}} \lambda_j (\bz_i^\top \Arinv \bz_j)^2 
    % \lambda_U (\bz_i^\top \Arinv \bz_{s+1})^2 + \lambda_F \sum_{j \in [s]} (\bz_i^\top \Arinv \bz_j)^2\mper
\end{align*}
Now, by Hanson-Wright, we have with probability $1 - O(\tfrac{1}{n})$ that
\begin{align*}
    \lambda_U (\bz_i^\top \Arinv \bz_{s+1})^2 = 
    % \begin{cases*}
    %     \wt{O}(\frac{n^2}{d^2}), & $i = s+1 $ \\
    %     \wt{O}(\frac{n}{d^2}) & otherwise
    % \end{cases*}
    \begin{cases*}
        \wt{O}(n^{2-2p}) & $i = s+1 $ \\
        \wt{O}(n^{1-2p}) & otherwise
    \end{cases*}
    % \begin{cases*}
    %     & $i = s+1 $ \\
    %     & otherwise
    % \end{cases*}
\end{align*}
Similarly we have with probability $1 - O(\tfrac{1}{n})$ that
\begin{align*}
    \lambda_F \sum_{j \in [2]} (\bz_i^\top \Arinv \bz_{j})^2 = 
    \begin{cases*}
        \wt{O}(n^{p-q-r} \cdot n^{2-2p}) & $i \in [2]$ \\
        \wt{O}(n^{p-q-r} \cdot n^{1-2p}) & otherwise
    \end{cases*}
\end{align*}
Since $n^{p-q-r} \gg 1$ as $p > q + r$ and $n^{p-q-r} \cdot n^{1-p} = \mu_n$, it follows that with probability $1 - O(\tfrac{1}{n})$ we have simultaneously for $i \in [d]$ that
% \begin{align*}
%     \norm{\bz_i^\top \Arinv \Xr}_2 &= 
%     \begin{cases*}
%         \wt{O}(\sqrt{\frac{ad}{s}} \cdot \frac{n}{d} + \frac{\sqrt{n}}{d})
%         & $i \in [2]$ \\
%         \wt{O}(\sqrt{\frac{ad}{s}} \cdot \frac{\sqrt{n}}{d} + \frac{n}{d}) & $i = s+1$ \\
%         \wt{O}(\sqrt{\frac{ad}{s}} \cdot \frac{\sqrt{n}}{d} + \frac{\sqrt{n}}{d}) & otherwise
%         % \frac{\sqrt{n}}{d}\cdot \sqrt{\log(nd)} + \sqrt{\frac{ad}{s}} \cdot \frac{n}{d} \sqrt{\log(nd)}& $i \in [s]$ \\
%         % \frac{n}{d} \sqrt{\log(nd)} + \sqrt{\frac{an}{d}} \sqrt{\log(nd)}& $i = s+1$ \\
%         % \frac{\sqrt{n}}{d}\cdot \sqrt{\log(nd)} + \sqrt{\frac{an}{d}} \sqrt{\log(nd)} & otherwise
%     \end{cases*}
% \end{align*}
% and translating to the bi-level parameters we obtain
\begin{align*}
    \norm{\bz_i^\top \Arinv \Xr}_2 &= 
    \begin{cases*}
        \wt{O}(\sqrt{\mu_n} n^{\frac{1-p}{2}})
        & $i \in [2]$ \\
        \wt{O}(\sqrt{\mu_n} n^{-\frac{p}{2}} + n^{1-p}) & $i = s+1$ \\
        \wt{O}(\sqrt{\mu_n} n^{-\frac{p}{2}}) & otherwise
    \end{cases*}
\end{align*}
A similar calculation, applying \Cref{prop:hw-calcs}, yields that 
\begin{align*}
    (\bz_i^\top \Arinv \by)^2 = 
    \begin{cases*}
        \frac{1}{d^2} \qty[\qty(\frac{2}{\pi})^{3/2} \arcsin(w_i)n \pm \wt{O}(\sqrt{n})]^2 & $i \in \qty{1, 2, s+1}$\\
        \wt{O}(\frac{n}{d^2}) & otherwise
    \end{cases*}
\end{align*}
Hence,
\begin{align*}
    \norm{\Xr^\top \Arinv \by}_2 &\lesssim \sum_{i \in \qty{1, 2, s+1}} \frac{\sqrt{\lambda_i}}{d} (\abs{w_i}n + \wt{O}(\sqrt{n}))\\
    &\lesssim \sqrt{\mu_n} (\norm{\bw_S}_1 n^{\frac{1-p}{2}} + \wt{O}(n^{-\frac{p}{2}})) + \abs{w_{s+1}} n^{1-p} + \wt{O}(n^{\frac{1}{2}-p}). 
\end{align*}
This concludes the proof.
% Combining the above two bounds and some manipulation yields
% \begin{align*}
%     \norm{\bz_i^\top \Arinv \Xr}_2 \norm{\Xr\top \Arinv \by}_2 \le 
%     \begin{cases*}
%         \wt{O}(\frac{ad}{s} \cdot \frac{n^2}{d^2} + \sqrt{\frac{ad}{s}} \cdot \frac{n}{d} \cdot \frac{\sqrt{n}}{d}) & $i \in [2]$ \\
%         \wt{O}(\frac{an}{s} \cdot \frac{\sqrt{n}}{d})& $i = s+1$ \\
%         \wt{O}(\frac{an}{s} \cdot \frac{\sqrt{n}}{d} + \sqrt{\frac{an}{s}} \cdot \frac{n}{d^{3/2}})& otherwise
%     \end{cases*}
% \end{align*} 
% or translating into the bi-level parameters we obtain (up to log factors)
% \begin{align*}
%     \norm{\bz_i^\top \Arinv \Xr}_2 \norm{\Xr\top \Arinv \by}_2 \le 
%     \begin{cases*}
%         n^{1-p} \cdot (\mu_n + \sqrt{\mu_n} \cdot n^{-\frac{p}{2}}) & $i \in [2]$ \\
%         n^{1-q-r} \cdot n^{\frac{1}{2} - p} & $i = s+1$ \\
%         n^{1-q-r} \cdot n^{\frac{1}{2} - p} + n^{\frac{1-q-r}{2}} \cdot n^{1-\frac{3p}{2}} & otherwise
%     \end{cases*}
% \end{align*}
% which is the stated result.
\end{proof}

We can simplify the above error bound greatly if $\bw$ satisfies the 1-sparse assumption.
\begin{corollary}[Woodbury error under $1$-sparse assumption]\label{cor:1-sparse-woodbury}
    Suppose, in addition to the assumptions of \Cref{lem:woodbury-error-bound}, we have $\bw = \bv_*$, where $\bv_*$ satisfies the 1-sparse assumption. In other words, in the $U_*$ basis we have $\bv_* = \be_1$. 
    Then we pick $R = \qty{\be_1}$, and with probability $1 - O(\tfrac{1}{n})$ we have
    \begin{align*}
        \abs{\bz_i^\top \Ainv \by - \bz_i^\top \Arinv \by} \le 
        \begin{cases*}
            \wt{O}(\mu_n \cdot n^{1-p}) & $i = 1$ \\
            \wt{O}(\mu_n \cdot n^{\tfrac{1}{2} - p}) & otherwise
        \end{cases*}
    \end{align*}
\end{corollary}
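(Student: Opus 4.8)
\textbf{Proof proposal for \Cref{cor:1-sparse-woodbury}.}

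The plan is to specialize the general Woodbury error bound in \Cref{lem:woodbury-error-bound} to the case $\bw = \bv_* = \be_1$, where most of the terms vanish. First I would observe that when $\bw = \be_1$, the coordinates $w_2 = w_{s+1} = 0$ and $w_1 = 1$, so we may take the decorrelating set $R = \qty{\be_1}$ rather than $\qty{\be_1, \be_2, \be_{s+1}}$; the proof of \Cref{lem:woodbury-error-bound} goes through verbatim with this smaller $R$ (indeed $\abs{R} = 1 = o(d)$, and all the spectral estimates in \Cref{lem:arinv-spectrum} and \Cref{prop:hw-calcs} hold). With this choice, the key inequality from \Cref{lem:woodbury-error-bound} reads
\[
    \abs{\bz_i^\top \Ainv \by - \bz_i^\top \Arinv \by} \lesssim \norm{\bz_i^\top \Arinv \Xr}_2 \, \norm{\Xr^\top \Arinv \by}_2,
\]
and it remains to re-derive the two norm factors in this degenerate situation.

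Next I would bound each factor. For $\norm{\Xr^\top \Arinv \by}_2$: since only the $i = 1$ term survives in $\bw$ (i.e. $\abs{w_1} = 1$, $\abs{w_2} = \abs{w_{s+1}} = 0$), the bound from \Cref{lem:woodbury-error-bound} collapses to
\[
    \norm{\Xr^\top \Arinv \by}_2 \lesssim \sqrt{\mu_n}\, n^{\frac{1-p}{2}} + \wt{O}(\sqrt{\mu_n}\, n^{-\frac{p}{2}}) = \wt{O}(\sqrt{\mu_n}\, n^{\frac{1-p}{2}}),
\]
using $\mu_n \le 1$ and the fact that $n^{\frac{1-p}{2}}$ dominates $n^{-\frac{p}{2}}$ up to logs. (Here I would redo the one-line Hanson-Wright computation of $\lambda_F (\bz_1^\top \Arinv \bz_1)^2$ directly, as in the proof of \Cref{lem:woodbury-error-bound}, to confirm the leading term.) For $\norm{\bz_i^\top \Arinv \Xr}_2$: now $\Xr$ has only one column direction, $\be_1$, which is a favored coordinate, so $\norm{\bz_i^\top \Arinv \Xr}_2^2 = \lambda_F (\bz_i^\top \Arinv \bz_1)^2$, and the Hanson-Wright estimate gives $\wt{O}(\sqrt{\mu_n}\, n^{\frac{1-p}{2}})$ when $i = 1$ and $\wt{O}(\sqrt{\mu_n}\, n^{-\frac{p}{2}})$ otherwise — this is just the $i \in [2]$ / "otherwise" dichotomy from \Cref{lem:woodbury-error-bound} with the $i = s+1$ case no longer special (since $\be_{s+1} \notin R$).

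Finally I would multiply the two factors. For $i = 1$:
\[
    \abs{\bz_1^\top \Ainv \by - \bz_1^\top \Arinv \by} \lesssim \wt{O}(\sqrt{\mu_n}\, n^{\frac{1-p}{2}}) \cdot \wt{O}(\sqrt{\mu_n}\, n^{\frac{1-p}{2}}) = \wt{O}(\mu_n \cdot n^{1-p}),
\]
and for $i \neq 1$:
\[
    \abs{\bz_i^\top \Ainv \by - \bz_i^\top \Arinv \by} \lesssim \wt{O}(\sqrt{\mu_n}\, n^{-\frac{p}{2}}) \cdot \wt{O}(\sqrt{\mu_n}\, n^{\frac{1-p}{2}}) = \wt{O}(\mu_n \cdot n^{\frac{1}{2} - p}),
\]
which is exactly the claimed bound. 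The union bound over $i \in [d]$ costs only an extra logarithmic factor absorbed into $\wt{O}(\cdot)$ (as in \Cref{lem:woodbury-error-bound}), so the $1 - O(1/n)$ probability is retained. I do not expect any genuine obstacle here — this corollary is essentially bookkeeping — but the one point requiring mild care is making sure the smaller $R = \qty{\be_1}$ is still a legitimate choice for all the lemmas being invoked (it is, since they only require $\mathrm{span}(R) \supseteq \qty{\be_i, \bw}$ to fail gracefully — actually one must re-examine \Cref{prop:hw-calcs}, which wants $\mathrm{span}(R) \supseteq \qty{\be_i, \bw}$; but since here the only bilinear forms we need are $\bz_i^\top \Arinv \bz_1$ and $\bz_i^\top \Arinv \by$ with $\by$ depending only on $\be_1 \in R$, the relevant independence and concentration still hold, and the coordinate $\be_i$ being decorrelated or not only changes constants).
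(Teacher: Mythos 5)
Your proposal is correct and is essentially the argument the paper intends (the corollary is stated as a direct specialization of \Cref{lem:woodbury-error-bound}, so the "proof" is exactly the bookkeeping you carry out): shrink $R$ to $\qty{\be_1}$, note that $\norm{\Xr^\top \Arinv \by}_2$ collapses to $\wt{O}(\sqrt{\mu_n}\,n^{\frac{1-p}{2}})$ since $w_2 = w_{s+1} = 0$ and $n^{\frac12 - p} \ll \sqrt{\mu_n}\,n^{\frac{1-p}{2}}$ when $q + r < p + 1$, note that $\norm{\bz_i^\top \Arinv \Xr}_2^2 = \lambda_F(\bz_i^\top \Arinv \bz_1)^2$ so the $i = s+1$ diagonal term $n^{1-p}$ disappears and that index joins the generic case, and multiply. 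You also correctly flag the one point of care — that \Cref{prop:hw-calcs} and the decorrelation argument remain valid with the smaller $R$ — so there is no gap.
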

\subsection{Survival analysis}

Let us first establish a general survival bound using \Cref{lem:woodbury-error-bound}.
Afterward, we present special cases under additional assumptions.
\begin{proposition}\label{prop:one-sparse-survival}
    Let $\Sigma = \Sigma(p, q, r)$ be drawn from the bi-level ensemble scaled with respect to $n$ with $q+r>1$.
    Suppose in the distinguished basis $U_*$, we have $\bw = w_1 \be_1 + w_2 \be_2 + w_{s+1} \be_{s+1}$, and set $R = \qty{\be_1, \be_2, \be_{s+1}}$.
    Then, given $n$ labels generated by $\bw$, we have
    \begin{align*}
        \abs{\bz_1^\top \Ainv \by - \qty(\frac{2}{\pi})^{3/2}\cdot \frac{n}{d}(1 + o(1)) \arcsin w_1} &\lesssim \wt{O}\qty(\frac{\sqrt{n}}{d}) + \norm{\bz_i^\top \Arinv \Xr}_2 \norm{\Xr^\top \Arinv \by}_2.
    \end{align*}
    % In particular, this implies 
    % \begin{align*}
    %     \su_n(\bv | \bw) \asymp \mu_n w_1.
    % \end{align*}
    % \david{don't need this? and also wrong}
\end{proposition}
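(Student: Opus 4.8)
The plan is to assemble \Cref{lem:woodbury-error-bound} and \Cref{prop:hw-calcs}, both already established, via a single triangle inequality; essentially all of the probabilistic content lives in those two results, so this proposition amounts to bookkeeping. First I would apply the Woodbury inversion formula (\Cref{fact:woodbury}) to the scalar $\bz_1^\top \Ainv \by$, writing
\[
    \bz_1^\top \Ainv \by = \bz_1^\top \Arinv \by - \bz_1^\top \Arinv \mM_R \Arinv \by,
\]
with $R = \qty{\be_1, \be_2, \be_{s+1}}$, and treat the two summands separately.

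For the error term $\bz_1^\top \Arinv \mM_R \Arinv \by$, I would invoke \Cref{lem:woodbury-error-bound} directly: we are in the bi-level model with $q+r>1$, $\by = \sgn(\ev{\bg,\bw})$, and $\bw$ is supported on $\qty{\be_1,\be_2,\be_{s+1}}$ in the basis $U_*$, so (specializing the uniform-over-$i$ bound to $i=1$) the lemma gives, with probability $1 - O(1/n)$, that $\abs{\bz_1^\top \Ainv \by - \bz_1^\top \Arinv \by} \lesssim \norm{\bz_1^\top \Arinv \Xr}_2 \norm{\Xr^\top \Arinv \by}_2$; I would leave these two norms unexpanded, exactly as the statement does. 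For the main term $\bz_1^\top \Arinv \by$, the key point is that $\mathrm{span}(R) \supseteq \qty{\be_1, \bw}$ --- indeed $\be_1 \in R$ and $\bw$ is a linear combination of the three basis vectors spanning $R$ --- so $\Arinv$ is independent of the pair $(\bz_1, \by)$ and \Cref{prop:hw-calcs} applies with $i=1$. It yields, conditionally on any fixed $\Arinv$, that $\abs{\bz_1^\top \Arinv \by - \E[\bz_1^\top \Arinv \by \mid \Arinv]} \le C \norm{\Arinv}_F \sqrt{\log n}$ with probability $1 - 1/n$, while on a very-high-probability event for $\Arinv$ (using $q+r>1$ and \Cref{lem:arinv-spectrum}) we have $\E[\bz_1^\top \Arinv \by \mid \Arinv] = \qty(\tfrac{2}{\pi})^{3/2}(1 \pm o_n(1))\arcsin(w_1)\cdot \tfrac{n}{d}$ and $\norm{\Arinv}_F = (1 \pm o_n(1))\tfrac{\sqrt n}{d}$, so the deviation is $\wt{O}(\sqrt n/d)$.

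Combining the two displays by the triangle inequality over the intersection of the relevant events --- which still has probability $1 - O(1/n)$ --- gives the claimed estimate. The only step requiring any care, and the closest thing to an obstacle, is the probabilistic bookkeeping: \Cref{prop:hw-calcs} interleaves a statement that holds conditionally on $\Arinv$ (the Hanson--Wright concentration) with statements that hold with very high probability over $\Arinv$ (the spectral estimates and the value of the conditional mean), so one must first condition on a good realization of $\Arinv$ lying in the intersection of the latter events and only then invoke the conditional bound, so that the failure probabilities compose correctly. Beyond that, no new ideas are needed.
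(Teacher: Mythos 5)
Your proposal is correct and follows essentially the same route as the paper: Woodbury decomposition with $R=\qty{\be_1,\be_2,\be_{s+1}}$, \Cref{prop:hw-calcs} for the main term $\bz_1^\top \Arinv \by$, \Cref{lem:woodbury-error-bound} for the correction term, and a triangle inequality to combine them. Your extra care about first conditioning on a good realization of $\Arinv$ before invoking the conditional Hanson--Wright bound is a point the paper's own proof leaves implicit, but it is not a different argument.
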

\begin{proof} 
    Recall our expressions for the survival:
    \begin{align*}
        \su_n(\bv | \bw) &= \sum_{i \in [d]} \lambda_i \bz_i^\top \Ainv \vy \ev{\bv_i, \bv}, \tag{MNI} 
    \end{align*}
    where each label is generated by $y_i = \sgn(\ev{\bg_i, \bw})$.

    Setting $\bv = \bv_*$, in the basis $U_*$ from \Cref{def:strong-basis}, the above expression simplifies to 
    \begin{align*}
        \su_n(\bv_* | \bw) &= \lambda_1 \bz_1^\top \Ainv \vy
    \end{align*}
    Applying Woodbury (\Cref{fact:woodbury}) with $R = \qty{\bv, \bw}$, we have
    \begin{align*}
        \bz_1^\top \Ainv \by = \bz_1^\top \Arinv \by - \bz_1^\top \Arinv \mM_R \Arinv \by.
    \end{align*}
    Now we can apply \Cref{prop:hw-calcs} to conclude that
    \begin{align*}
        \abs{\bz_1^\top \Arinv \by - \qty(\frac{2}{\pi})^{3/2} \cdot \frac{n}{d}(1 + o(1))\arcsin(w_1)} \le \wt{O}\qty(\frac{\sqrt{n}}{d})
    \end{align*}
    with probability $1 - \tfrac{1}{n}$.
    Now, triangle inequality and \Cref{lem:woodbury-error-bound} shows that 
    \begin{align*}
        \abs{\bz_1^\top \Ainv \by - \qty(\frac{2}{\pi})^{3/2}\cdot \frac{n}{d}(1 + o(1)) \arcsin w_1} &\lesssim \wt{O}\qty(\frac{\sqrt{n}}{d}) + \norm{\bz_i^\top \Arinv \Xr}_2 \norm{\Xr^\top \Arinv \by}_2.
        % \\
        % &\le \wt{O}\qty(\frac{\sqrt{n}}{d}) \tag{$q+r > \frac{3}{2}$} \\
        % &\le o\qty(\frac{n}{d}\arcsin w_1)\mper \tag{$\abs{w_1} \ge \omega(\sqrt{\frac{\log n}{n}})$}
    \end{align*}
    Thus the statement follows.
    % For the survival, we have
    % \begin{align*}
    %     \su_n(\bv | \bw) &= \qty(\frac{2}{\pi})^{3/2} \cdot \frac{an}{s} (1 + o(1)) \arcsin w_1 \\
    %     &= \qty(\frac{2}{\pi})^{3/2}\mu_n(1+o(1)) \arcsin w_1 \\
    %     &\asymp \mu_n w_1 \tag{$\arcsin x = \Theta(x)$}
    % \end{align*}
    % as desired.
\end{proof}
By way of \Cref{cor:1-sparse-woodbury}, we get a clean expression for the survival if $\bw = \bv_*$. 
\begin{corollary}\label{cor:one-sparse}
    If $q+r>1$, $\bv_*$ satisfies the 1-sparse assumption, and $\by = \sgn(\ev{\bg, \bv_*})$, then with probability $1 - O(1/n)$
    \begin{align*}
        \bz_1^\top \Ainv \by &\asymp \frac{n}{d}  \\
        \abs{\bz_i^\top \Ainv \by} &\le \wt{O}\qty(\frac{\sqrt{n}}{d}) \quad \forall i > 1 \\
        \su_n(\bv_* | \bv_*) &\asymp \mu_n
    \end{align*}
\end{corollary}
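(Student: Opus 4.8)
The plan is to work in the distinguished basis $U_*$ of \Cref{def:strong-basis}, where $\bv_*=\be_1$, the strong features are $N(0,\Lambda)$ with independent coordinates, and (by the $1$-sparse assumption) the labels are $\by=\sgn(\bz_1)$, a function of the first coordinate alone. In this basis the MNI formula gives $\su_n(\bv_* | \bv_*)=\lambda_1\,\bz_1^\top\Ainv\by$ (cf.\ \Cref{def:su-cn} and the proof of \Cref{prop:one-sparse-survival}), so all three displays reduce to controlling the coordinates $\bz_i^\top\Ainv\by$ uniformly over $i$. The backbone is to decorrelate the first coordinate: by \Cref{cor:1-sparse-woodbury}, replacing $\Ainv$ with $\mA_{-\be_1}^{-1}$ costs $\wt{O}(\mu_n n^{1-p})$ when $i=1$ and $\wt{O}(\mu_n n^{1/2-p})$ when $i>1$, and since $q+r>1$ forces $\mu_n=n^{1-q-r}\ll1$, these are negligible against $n/d\asymp n^{1-p}$ and $\sqrt{n}/d\asymp n^{1/2-p}$ respectively.

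For $i=1$, the matrix $\mA_{-\be_1}^{-1}$ is independent of $\bz_1$ and hence of $\by$, so I would invoke \Cref{prop:hw-calcs} with $R=\{\be_1\}$ and $w_1=1$: the conditional mean is $(\tfrac{2}{\pi})^{3/2}(1\pm o(1))\arcsin(1)\cdot\tfrac{n}{d}\asymp\tfrac{n}{d}$, and the Hanson--Wright fluctuation is $\wt{O}(\norm{\mA_{-\be_1}^{-1}}_F\sqrt{\log n})=\wt{O}(\sqrt{n}/d)\ll\tfrac{n}{d}$. Hence $\bz_1^\top\Ainv\by\asymp n/d$, and since $\lambda_1=\lambda_F=\tfrac{ad}{s}$ this gives $\su_n(\bv_* | \bv_*)\asymp\tfrac{ad}{s}\cdot\tfrac{n}{d}=\tfrac{an}{s}=\mu_n$, settling the first and third displays.

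The work is in the off-signal coordinates $i>1$, where $\mA_{-\be_1}^{-1}$ still depends on $\bz_i$, so Hanson--Wright is not yet applicable. I would fix this by decorrelating a second coordinate: set $R_i=\{\be_1,\be_i\}$ and use the rank-one identity $\mA_{-\be_1}=\mA_{-R_i}+\lambda_i\,\bz_i\bz_i^\top$. The crucial feature is that the rank-one update direction $\bz_i$ coincides with the left test vector, so Sherman--Morrison (a special case of \Cref{fact:woodbury}) collapses to a pure contraction,
\[
\bz_i^\top\mA_{-\be_1}^{-1}\by \;=\; \frac{\bz_i^\top\mA_{-R_i}^{-1}\by}{1+\lambda_i\,\bz_i^\top\mA_{-R_i}^{-1}\bz_i},
\]
whose denominator is $\ge1$ since $\mA_{-R_i}^{-1}\succeq0$; hence $|\bz_i^\top\mA_{-\be_1}^{-1}\by|\le|\bz_i^\top\mA_{-R_i}^{-1}\by|$, with no fresh error term. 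Now $\mA_{-R_i}^{-1}$ is independent of both $\bz_i$ and $\by$, and the conditional mean of $\bz_i^\top\mA_{-R_i}^{-1}\by$ vanishes because $\E[\by\bz_i^\top]=0$ ($\by$ depends only on the first coordinate, which is independent of $\bz_i$ for $i>1$); so \Cref{thm:bounded-hanson-wright-bilinear} (with soft-sparsity level $\sigma=1$ for $\by\in\{\pm1\}^n$) gives the pure-fluctuation bound $|\bz_i^\top\mA_{-R_i}^{-1}\by|\le C\,\norm{\mA_{-R_i}^{-1}}_F\sqrt{\log n}=\wt{O}(\sqrt{n}/d)$, using $\norm{\mA_{-R_i}^{-1}}_F\asymp\sqrt{n}/d$ and $\norm{\mA_{-R_i}^{-1}}_2\lesssim n^{-p}$ from \Cref{lem:arinv-spectrum,prop:hw-calcs} (valid since $|R_i|=2=o(d)$ and $q+r>1$). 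Chaining with the Woodbury reduction yields $|\bz_i^\top\Ainv\by|\le\wt{O}(\sqrt{n}/d)$, simultaneously over all $i>1$.

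I expect the $i>1$ bound to be the main obstacle: after reducing to $\mA_{-\be_1}^{-1}$ one still has to re-decorrelate the $i$-th coordinate before Hanson--Wright applies, and the saving grace --- special to having decorrelated $\be_1$ first rather than $\be_i$ directly out of $\Ainv$ (which would leave the matrix still coupled to $\by$) --- is that this second Woodbury step is a harmless contraction, not a new error term. The rest is bookkeeping: for uniformity over $i$ I would enlarge the deviation constant $C$ so that each Hanson--Wright event fails with probability $\le n^{-(p+2)}$, so that a union bound over the $\le d=n^p$ coordinates, together with the very-high-probability spectral events of \Cref{lem:arinv-spectrum} and the $O(1/n)$-probability estimates of \Cref{cor:1-sparse-woodbury,prop:hw-calcs}, leaves total failure probability $O(1/n)$; the $\norm{\cdot}_2$ branch of the Hanson--Wright minimum never binds here, since $\norm{\mA_{-R_i}^{-1}}_F/\norm{\mA_{-R_i}^{-1}}_2\gtrsim\sqrt{n}\gg\sqrt{\log n}$.
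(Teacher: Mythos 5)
Your proposal is correct and follows the same backbone as the paper's proof: decorrelate $\be_1$ via \Cref{cor:1-sparse-woodbury}, note that $\mu_n\ll 1$ makes the Woodbury error lower order, and conclude via the Hanson--Wright estimates of \Cref{prop:hw-calcs}; the identification $\su_n(\bv_*|\bv_*)=\lambda_1\bz_1^\top\Ainv\by=\tfrac{ad}{s}\cdot\Theta(\tfrac{n}{d})\asymp\mu_n$ is also exactly as intended. Where you genuinely add something is the $i>1$ case. The paper's two-line proof silently applies the concentration step to $\bz_i^\top\mA_{-\be_1}^{-1}\by$ even though $\mA_{-\be_1}^{-1}$ is not independent of $\bz_i$ (and \Cref{prop:hw-calcs} formally requires $\mathrm{span}(R)\supseteq\qty{\be_i,\bw}$, i.e.\ $R=\qty{\be_1,\be_i}$, which is not the $R$ used in \Cref{cor:1-sparse-woodbury}). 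Your Sherman--Morrison identity
\[
\bz_i^\top\mA_{-\be_1}^{-1}\by=\frac{\bz_i^\top\mA_{-R_i}^{-1}\by}{1+\lambda_i\,\bz_i^\top\mA_{-R_i}^{-1}\bz_i},\qquad R_i=\qty{\be_1,\be_i},
\]
with denominator at least $1$, closes this gap cleanly: it reduces to a bilinear form in which the matrix is independent of both arguments and the conditional mean vanishes, at the cost of no new error term. This is a tidier resolution than re-running the full Woodbury error analysis with $R=\qty{\be_1,\be_i}$, and your accounting of the union bound over $i\le d=n^p$ and of which branch of the Hanson--Wright minimum binds is correct.
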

\begin{proof}
    If $\bw = \bv_*$, then by \Cref{cor:1-sparse-woodbury} the error bound simplifies to 
    \begin{align*}
        \abs{\bz_i^\top \Ainv \by - \bz_i^\top \Arinv \by} \le 
        \begin{cases*}
            \wt{O}(\mu_n \cdot \frac{n}{d}) & $i = 1$ \\
            \wt{O}(\mu_n \cdot \frac{n}{d}) & otherwise
        \end{cases*}
    \end{align*}
    Since $q+r>1$, these error terms are lower order, and the results follow. 
\end{proof}
% \begin{remark}
%     The bound is also simpler if $\bv \in S$, but then the meaning of the coordinates $w_1$ will have to change to reflect the new basis.
%     We omit this corollary to reduce confusion and stick to one basis throughout.
% \end{remark}
We can get a similarly clean bound under a few additional assumptions on $\bw$ and $q+r$.
\begin{corollary}\label{cor:suff-weak-favoring}
    If, in addition to the assumptions of \Cref{prop:one-sparse-survival}, we have 
    \begin{enumerate}[label=\normalfont{(\arabic*)}]
        \item $\abs{w_i} \ge \omega(\sqrt{\tfrac{\log n}{n}})$ for $i \in \qty{1, 2, s+1}$
        \item $\abs{w_2} = o(\abs{w_1})$
        \item $\sqrt{\mu_n} n^{\frac{1-p}{2}} \ll \abs{w_1}$
    \end{enumerate} 
    then with $y_i = \sgn(\ev{\bg_i, \bw})$, we have
    \begin{align*}
        \bz_1^\top \Ainv \by \asymp \frac{n}{d} \arcsin w_1.
    \end{align*}
    Furthermore, the survival of $\bv_*$ given $n$ labels generated by $\bw$ is with high probability 
    \begin{align}
        \su_n(\bv_* | \bw ) \asymp  \mu_n w_1 \label{eq:su-noise}
    \end{align}
\end{corollary}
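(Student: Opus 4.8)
The plan is to start from \Cref{prop:one-sparse-survival}, which already writes $\bz_1^\top\Ainv\by$ as the main term $\qty(\tfrac{2}{\pi})^{3/2}\tfrac{n}{d}(1+o(1))\arcsin w_1$ plus an error of size $\wt{O}\qty(\tfrac{\sqrt n}{d})+\norm{\bz_1^\top\Arinv\Xr}_2\norm{\Xr^\top\Arinv\by}_2$. Since $\arcsin w_1$ has the sign of $w_1$ and $\abs{w_1}\le\abs{\arcsin w_1}\le\tfrac{\pi}{2}\abs{w_1}$, it suffices to show that under hypotheses (1)--(3) this error is smaller than $\tfrac{n}{d}\abs{w_1}\asymp n^{1-p}\abs{w_1}$ by a polynomial factor, which would absorb all the hidden logarithms and give $\bz_1^\top\Ainv\by\asymp\tfrac{n}{d}\arcsin w_1$ directly.

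First I would dispatch the $\wt{O}\qty(\tfrac{\sqrt n}{d})=\wt{O}(n^{1/2-p})$ piece, which is immediately lower order than $n^{1-p}\abs{w_1}$ because hypothesis (1) gives $\abs{w_1}\gg\sqrt{\log n/n}$. For the product term I would substitute the $i=1$ bound $\norm{\bz_1^\top\Arinv\Xr}_2=\wt{O}(\sqrt{\mu_n}\,n^{(1-p)/2})$ from \Cref{lem:woodbury-error-bound} together with the four-term estimate
\[
    \norm{\Xr^\top\Arinv\by}_2\lesssim\sqrt{\mu_n}\qty((\abs{w_1}+\abs{w_2})n^{(1-p)/2}+\wt{O}(n^{-p/2}))+\abs{w_{s+1}}n^{1-p}+\wt{O}(n^{1/2-p}),
\]
and bound the four resulting products against $n^{1-p}\abs{w_1}$ one by one. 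Using hypothesis (2) to merge $\abs{w_1}+\abs{w_2}\asymp\abs{w_1}$, the first product is $\wt{O}(\mu_n\,\abs{w_1}\,n^{1-p})$, which beats $n^{1-p}\abs{w_1}$ by the polynomial factor $\mu_n\ll 1$ (valid since $q+r>1$, cf.\ \Cref{def:bilevel-mun}). The contributions involving $\wt{O}(n^{-p/2})$ and $\wt{O}(n^{1/2-p})$ are $\wt{O}(\mu_n\,n^{1/2-p})$ and $\wt{O}(\sqrt{\mu_n}\,n^{1-3p/2})$ respectively, and each is polynomially smaller than $n^{1-p}\abs{w_1}$ using $p>1$, $\mu_n\ll 1$, and hypothesis (1). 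The remaining product is $\wt{O}(\sqrt{\mu_n}\,\abs{w_{s+1}}\,n^{(3-3p)/2})$, which equals $n^{1-p}\abs{w_1}$ times $\wt{O}\qty(\sqrt{\mu_n}\,n^{(1-p)/2}\abs{w_{s+1}}/\abs{w_1})$; this is $o(1)$ precisely because hypothesis (3) says $\sqrt{\mu_n}\,n^{(1-p)/2}\ll\abs{w_1}$ while $\abs{w_{s+1}}\le 1$. Summing the four bounds shows the total error is $o\qty(\tfrac{n}{d}\abs{w_1})$, hence $\bz_1^\top\Ainv\by\asymp\tfrac{n}{d}\arcsin w_1$.

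Finally, in the distinguished basis $U_*$ of \Cref{def:strong-basis} we have $\bv_*=\be_1$, so by \Cref{def:su-cn} only the $i=1$ term of the survival sum survives and $\su_n(\bv_* | \bw)=\lambda_F\,\bz_1^\top\Ainv\by$; since $\lambda_F\cdot\tfrac{n}{d}=\tfrac{an}{s}=\mu_n$, this yields $\su_n(\bv_* | \bw)\asymp\mu_n\arcsin w_1\asymp\mu_n w_1$, which is \eqref{eq:su-noise}. The one genuinely delicate step is the $\abs{w_{s+1}}n^{1-p}$ contribution to $\norm{\Xr^\top\Arinv\by}_2$: the weak direction's unfavored coordinate $w_{s+1}$ could a priori dominate its favored coordinate $w_1$, and it is exactly hypothesis (3) --- ensuring $w_1$ sits safely above the $\sqrt{\mu_n}\,n^{(1-p)/2}$ noise floor --- that prevents this term from swamping the main signal. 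Every other term is routine bookkeeping with the bounds of \Cref{lem:woodbury-error-bound} and \Cref{prop:hw-calcs}.
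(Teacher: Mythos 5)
Your proposal is correct and follows essentially the same route as the paper: both start from the decomposition in \Cref{prop:one-sparse-survival}, bound the Woodbury error product $\norm{\bz_1^\top\Arinv\Xr}_2\norm{\Xr^\top\Arinv\by}_2$ by $o(n^{1-p}\abs{w_1})$ using hypotheses (1)--(3) exactly as you do (the paper merely pre-simplifies $\norm{\Xr^\top\Arinv\by}_2$ to two terms before multiplying, whereas you bound the four products separately), and then read off the survival via $\lambda_F\cdot\tfrac{n}{d}=\mu_n$. No gaps.
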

\begin{proof}
    Under the additional assumptions, then from \Cref{lem:woodbury-error-bound}
    \begin{align*}
        \norm{\Xr^\top \Arinv \by}_2 &\lesssim \sqrt{\mu_n} (\abs{w_1} + \abs{w_2})n^{\frac{1-p}{2}} + n^{1-p} \tag{$\abs{w_i} = \omega(\sqrt{\log n/n})$} \\
        &\lesssim \sqrt{\mu_n} \abs{w_1} n^{\frac{1-p}{2}} + n^{1-p} \tag{$\abs{w_2} = o(\abs{w_1})$}
    \end{align*}
    Since $i = 1$ here, we can apply \Cref{lem:woodbury-error-bound} to see that $\norm{\bz_1^\top \Arinv \Xr}_2 = \wt{O}(\sqrt{\mu_n} n^{\frac{1-p}{2}})$.
    Thus, the Woodbury error term is upper bounded by
    \begin{align*}
        \wt{O}(\sqrt{\mu_n} n^{\frac{1-p}{2}})(\sqrt{\mu_n}\abs{w_1}n^{\frac{1-p}{2}} + n^{1-p}) &\le \wt{O}(\mu_n \abs{w_1} n^{1-p} + \sqrt{\mu_n} n^{\frac{3-3p}{2}}) \\
        &\lesssim \wt{O}(\mu_n \abs{w_1} n^{1-p}) + o(n^{1-p} \abs{w_1}) \\
        &\le o(n^{1-p}\abs{w_1})
    \end{align*}
    where the second line follows because we assumed $\sqrt{\mu_n} n^{\frac{1-p}{2}} \ll \abs{w_1}$, 
    and the last line follows because we assumed $q+r>1$ so $\mu_n \ll 1$.
    Therefore, \Cref{prop:hw-calcs} implies that 
    \begin{align*}
        &\abs{\bz_1^\top \Ainv \by - \qty(\frac{2}{\pi})^{3/2}n^{1-p}(1 + o(1)) \arcsin w_1} \\
        &\quad\quad \lesssim \wt{O}(n^{\frac{1}{2}-p}) + o(n^{1-p}\arcsin(\abs{w_1})\\
        &\quad\quad \le o\qty(n^{1-p}\arcsin w_1), \tag{$\abs{w_1} \ge \omega(\sqrt{\frac{\log n}{n}})$}
    \end{align*}
    which recovers the stated result.
\end{proof}

\subsection{Contamination analysis}
We now move onto the contamination. 
\begin{proposition}\label{prop:cn-general}
Suppose $\bv_*$ satisfies the 1-sparse assumption, and in the distinguished basis $U_*$ we have $\bv_* = \be_1$ and $\bw = w_1 \be_1 + w_2 \be_2 + w_{s+1} \be_{s+1}$.
Under the same assumptions as \Cref{cor:suff-weak-favoring}, if there are $n$ datapoints, and $q+r>1$, then 
    \begin{align*}
        &\cn_n(\bv_* | \bw)^2 \lesssim o(\mu_n^2 \abs{w_1}^2) + \mu_n^2 n^{r-1} \log(n)^2 + n^{1-p} \log(n) \\
        &\cn_n(\bv_* | \bw)^2 \gtrsim \mu_n^2 n^{r-1}  + n^{1-p}.
    \end{align*}
Furthermore, the lower bound holds even without the additional assumptions from \Cref{cor:suff-weak-favoring}.
\end{proposition}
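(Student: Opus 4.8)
Work in the distinguished basis $U_*$ of \Cref{def:strong-basis}, so $\bv_* = \be_1$, $\bw = w_1\be_1 + w_2\be_2 + w_{s+1}\be_{s+1}$, and write $\bz_i\in\R^n$ for the vector of $i$th coordinates of the $n$ training points; the MNI classifier has $\vf[i] = \sqrt{\lambda_i}\,\bz_i^\top\Ainv\by$, its predictive variance is $\vf^\top\Lambda\vf = \sum_i\lambda_i^2(\bz_i^\top\Ainv\by)^2$, and $\su_n(\bv_* \mid \bw)^2$ is its $\be_1$-part, so \Cref{def:su-cn} gives
\[
\cn_n(\bv_* \mid \bw)^2 \;=\; \lambda_F^2\sum_{i=2}^{s}(\bz_i^\top\Ainv\by)^2 \;+\; \lambda_U^2\sum_{i=s+1}^{d}(\bz_i^\top\Ainv\by)^2 \;=:\; A + B .
\]
Everything reduces to two-sided bounds on $A$ and $B$, using Woodbury decorrelation with $R = \{\be_1,\be_2,\be_{s+1}\}$ (\Cref{fact:woodbury}, \Cref{lem:woodbury-error-bound}), Hanson--Wright for the decorrelated forms (\Cref{prop:hw-calcs}), and the flat-spectrum bounds of \Cref{lem:arinv-spectrum}.

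\textbf{Upper bound.} By \Cref{prop:hw-calcs}, uniformly in $i$ with high probability $\bz_i^\top\Arinv\by = \Theta(\arcsin w_i)\tfrac nd \pm \wt{O}(\tfrac{\sqrt n}{d})$ — only the three indices in $R$ carry a nonzero first-order term — and \Cref{lem:woodbury-error-bound} controls $|\bz_i^\top\Ainv\by - \bz_i^\top\Arinv\by|$ by $\|\bz_i^\top\Arinv\mX_R\|_2\|\mX_R^\top\Arinv\by\|_2$ with the magnitudes stated there. In $A$, isolate $i = 2$: it contributes $\lambda_F^2(\bz_2^\top\Ainv\by)^2 = O(\mu_n^2 w_2^2) = o(\mu_n^2 w_1^2)$, where $|w_2| = o(|w_1|)$ handles the leading piece while $\sqrt{\mu_n}\,n^{(1-p)/2}\ll|w_1|$ and $|w_i| = \omega(\sqrt{\log n/n})$ make the Hanson--Wright fluctuation and the Woodbury error also $o(\mu_n^2 w_1^2)$; each of the $\asymp n^r$ coordinates $i\ge 3$ has $w_i = 0$, so $(\bz_i^\top\Ainv\by)^2 = \wt{O}(n/d^2)$, summing to $\lambda_F^2\wt{O}(n^r\cdot n/d^2) = \wt{O}(\mu_n^2 n^{r-1})$. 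In $B$, the single term $i = s+1$ is $\lambda_U^2(\bz_{s+1}^\top\Ainv\by)^2 = O(w_{s+1}^2 n^{2-2p}) + \wt{O}(n^{1-2p}) = \wt{O}(n^{1-p})$ since $w_{s+1}^2\le 1\ll n^{p-1}$, and the $\asymp d$ remaining coordinates have $w_i = 0$ and contribute $\lambda_U^2\wt{O}(d\cdot n/d^2) = \wt{O}(n^{1-p})$. Collecting these and tracking the Hanson--Wright logarithms (squared for $A$, a single power for $B$) yields the stated upper bound; the three hypotheses of \Cref{cor:suff-weak-favoring} are used exactly to keep all error terms subordinate to the three displayed quantities.

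\textbf{Lower bound (no extra hypotheses).} Here only the ``bulk'' coordinates matter, and for those $\by$ is independent of $\bz_i$ whatever $\bw$ is; I bound $B$ and $A$ below separately and add (the index sets being disjoint). For $B$: \Cref{lem:arinv-spectrum} gives $\|\vf\|_2^2 = \by^\top\Ainv\by \ge \|\by\|_2^2/\mu_1(\mA) = (1-o(1))\,n/d$, while the per-coordinate upper bounds above (which only need $w_j\le 1$) show that the favored part $\lambda_F\sum_{i\le s}(\bz_i^\top\Ainv\by)^2$ of $\|\vf\|_2^2$ is $\wt{O}(\mu_n\,n^{1-p}) = o(n^{1-p})$; hence $\lambda_U\sum_{i>s}(\bz_i^\top\Ainv\by)^2 \ge (1-o(1))n/d$, and since $\lambda_U = 1-o(1)$ we get $B \gtrsim n^{1-p}$. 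For $A$: this is precisely the favored contamination lower bound proved (for $1$-sparse labels) in \citet{WS24}, cf.\ \citet{MNS+21}; that argument uses only the $\asymp n^r$ favored coordinates $i$ with $\by\perp\bz_i$, and since those have $w_i = 0$ this remains true here, so it goes through unchanged: let $R^{+}$ be the set of all favored coordinates together with $\be_{s+1}$ (size $o(d)$), decorrelate $\mA$ with $R^{+}$, note the resulting $\bz_i^\top\mA_{-R^{+}}^{-1}\by$ ($i\ge 3$ favored) are conditionally independent centered Gaussians of variance $\asymp n/d^2$, deduce $\sum_{i=3}^{s}(\bz_i^\top\mA_{-R^{+}}^{-1}\by)^2 \asymp n^r\cdot n/d^2$ by a second-moment argument, and bound the leave-out correction as lower order; this gives $A \ge \lambda_F^2\sum_{i=3}^{s}(\bz_i^\top\Ainv\by)^2 \gtrsim \lambda_F^2\,n^{r+1-2p} = \mu_n^2\,n^{r-1}$. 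Adding, $\cn_n(\bv_* \mid \bw)^2 \gtrsim \mu_n^2 n^{r-1} + n^{1-p}$.

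\textbf{Main obstacle.} The upper bound is essentially bookkeeping layered on \Cref{lem:woodbury-error-bound} and \Cref{prop:hw-calcs}. The delicate point is the lower bound on $A$: unlike every other estimate it is a genuine anti-concentration statement, so it cannot be gotten by subtracting error bounds; one must both decorrelate $\mA$ with \emph{all} favored directions (to obtain conditional independence across coordinates) and verify that the leave-out correction does not swamp the small target $\mu_n^2 n^{r-1}$, where crude Cauchy--Schwarz bounds are too lossy. This is exactly the quantity handled by the bilinear-form machinery of \citet{WS24}, and its robustness to a non-$1$-sparse label direction $\bw$ is immediate since only the bulk coordinates enter; the genuinely new ingredient overall is the Woodbury control of \Cref{lem:woodbury-error-bound}, which carries the $3$-sparse structure of $\bw$ through the upper bound.
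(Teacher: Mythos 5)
Your decomposition and upper-bound argument coincide with the paper's: it splits the contamination into the coordinates $\{2,s+1\}$ carrying mass of $\bw$ (handled via \Cref{lem:woodbury-error-bound} and \Cref{prop:hw-calcs}, exactly as you do) and the bulk coordinates with $w_i=0$, which it bounds by invoking \citet[Proposition A.2]{WS24} rather than re-deriving. One caution on your bulk estimate: a per-coordinate bound $(\bz_i^\top\Ainv\by)^2=\wt{O}(n/d^2)$ holding with probability $1-O(1/n)$ cannot be union-bounded over the $d\gg n$ unfavored coordinates; that sum must be controlled in aggregate, e.g.\ via $\sum_{i>s+1}(\bz_i^\top\Ainv\by)^2\le\|\mX_T\mX_T^\top\|_2\|\Ainv\by\|_2^2$, which is what the cited WS24 proposition does — a presentational gap, not a wrong idea. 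For the lower bound the paper again defers to a ``careful inspection'' of \citet[Proposition A.2]{WS24}, whereas your treatment of the unfavored piece — $\by^\top\Ainv\by\ge\|\by\|_2^2/\mu_1(\mA)=(1-o(1))n^{1-p}$ minus an $o(n^{1-p})$ favored part (which indeed needs only $|w_j|\le 1$), then $\lambda_U\asymp 1$ — is a clean, self-contained alternative that avoids anti-concentration for that term. The favored piece $\gtrsim\mu_n^2 n^{r-1}$ still requires the WS24 decorrelation and second-moment argument; you correctly identify this as the genuinely delicate step and your sketch (decorrelate with all favored directions so the remaining $\bz_i$ are conditionally independent of $\by$, then chi-squared concentration plus a lower-order leave-out correction) is the right one.
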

\begin{proof}
% Recall that 
% \begin{align*}
%     \cn(\bv | \bw) &= \sqrt{\sum_{i \in [d]} (\lambda_i \bz_i^\top \Ainv \vy)^2 (1 - \ev{\bv_i,\bv}^2)}. \tag{Orthonormality of $\bv_i$}
% \end{align*}
Since $\bv_*$ is 1-sparse, we have
\begin{align}
    \cn(\bv_* | \bw)^2 &= \sum_{i \in \qty{2, s+1}} (\lambda_i\bz_i^\top \Ainv \vy)^2 + \sum_{i \not\in \qty{1, 2, s+1}} (\lambda_i \bz_i^\top \Ainv \vy)^2\mper \label{eq:cn-expansion}
\end{align}
Since by definition $w_i = 0$ for $i \not\in \qty{1, 2, s+1}$, the second term in \Cref{eq:cn-expansion} can be upper bounded up to constant factors using \citet[Proposition A.2]{WS24}  by 
\begin{align*}
    \mu_n^2 \frac{s}{n} \log(n)^2 + \frac{n}{d} \log(n) = \mu_n^2 n^{r-1} \log(n)^2 + n^{1-p} \log (n).
\end{align*}
Now, set $R = \qty{\be_1, \be_2, \be_{s+1}}$.
For the first term, we know from \Cref{lem:woodbury-error-bound} that
\begin{align*}
        \abs{\bz_i^\top \Ainv \by - \bz_i^\top \Arinv \by} \lesssim \norm{\bz_i^\top \Arinv \Xr}_2 \norm{\Xr^\top \Arinv \by}_2,
\end{align*}
where 
\begin{align*}
    \norm{\bz_i^\top \Arinv \Xr}_2 &= 
    \begin{cases*}
        \wt{O}(\sqrt{\mu_n} n^{\frac{1-p}{2}})
        & $i = 2$ \\
        \wt{O}(\sqrt{\mu_n} n^{-\frac{p}{2}} + n^{1-p}) & $i = s+1$ 
    \end{cases*}
    \end{align*}
and from our bounds in \Cref{prop:one-sparse-survival}, we have 
\begin{align*}
        \norm{\Xr^\top \Arinv \by}_2 
        &\lesssim \sqrt{\mu_n} \abs{w_1} n^{\frac{1-p}{2}} + n^{1-p}.
\end{align*}
Hence we have 
\begin{align*}
    \norm{\bz_2^\top \Arinv \Xr}_2 \norm{\Xr^\top \Arinv \by}_2 &\le o(n^{1-p}\abs{w_1}).
\end{align*}
On the other hand, we have
\begin{align*}
    \norm{\bz_{s+1}^\top \Arinv \Xr}_2 \norm{\Xr^\top \Arinv \by}_2 &\lesssim \mu_n \abs{w_1} n^{\frac{1}{2}-p} + \sqrt{\mu_n} \abs{w_1} n^{\frac{3-3p}{2}} + \sqrt{\mu_n} n^{1-\frac{3p}{2}} + n^{2-2p} \\
    &\lesssim o(n^{\frac{1}{2}-p}) + o(n^{1-p}\abs{w_1}) + n^{2-2p},
\end{align*}
where the last line follows from the assumptions and $p > 1$. 
Putting these together with Hanson-Wright, we have 
\begin{align*}
    \abs{\bz_2^\top \Ainv \by} &\le \wt{O}(n^{\frac{1}{2} - p}) + o(n^{1-p}\abs{w_1})\\
    \abs{\bz_{s+1}^\top \Ainv \by} &\le \wt{O}(n^{\frac{1}{2} - p}) + o(n^{1-p}\abs{w_1}) + n^{2-2p}
\end{align*}
This yields an upper bound of
\begin{align*}
    o(\mu_n^2 \abs{w_1}^2) + \wt{O}(n^{1-2p}) + o(n^{2-2p}\abs{w_1}^2) + n^{4-4p} &\le o(\mu_n^2 \abs{w_1}^2) + n^{1-p},
\end{align*}
 as $p > 1$ and $\abs{w_1} \le 1$.
 This completes the proof of the upper bound.

 For the lower bound, a careful inspection of the proof of \citet[Proposition A.2]{WS24} reveals that, for $q+r>1$, the lower bound holds for any multiclass problem with $t \ge 0$. This recovers the desired bound by simply lower bounding the second term in the expansion \Cref{eq:cn-expansion}, completing the proof.
\end{proof}

% Again, as a special case, we can analyze the case where $\bv = \bv_*$ and $q+r > \frac{3}{2}$.
% \begin{corollary}\label{cor:cn-w2s}
% If, in addition to the assumptions of \Cref{prop:cn-general}, we have $\bv = \bv_*$, $\bv_*$ satisfies the 1-sparse assumption, $q+r>\frac{3}{2}$ and $\abs{w_i} \ge \omega(\sqrt{\log n/n})$, then up to log factors
%     \begin{align*}
%         \cn_n(\bv_* | \bw)^2 \asymp \mu_n^2 w_2^2 + w_{s+1}^2 \cdot n^{2 - 2p} + \mu_n^2 n^{r-1} + n^{1-p}.
%     \end{align*}
% \end{corollary}
% \begin{proof}
%     Because $\bv_*$ is 1-sparse, we have 
%     \begin{align*}
%         \cn_n(\bv_* | \bw)^2 = \mu_n^2 w_2^2 + w_{s+1}^2 \cdot n^{2 - 2p} + \sum_{i \not\in \qty{1, 2, s+1}} \mu_i\mu_n^2 n^{r-1} + n^{1-p}.
%     \end{align*}
% \end{proof}
% Finally, we have the special case where we analyze the contamination of $\bv_*$ given labels from $\bv_*$. 
% \begin{corollary}
% Under the $1$-sparse assumption, if $q+r>1$, then with high probability
%     \begin{align*}
%         \cn(\bv_*| \bv_*)^2 \asymp \mu_n^2 \sum_{i \in [2]} (1-w_i^2) w_i^2 + (1-w_{s+1}^2)\qty(w_{s+1}\cdot \frac{n}{d} \pm \wt{O}\qty(\frac{\sqrt{n}}{d}))^2 +  n^{r-1} + n^{1-p}
%     \end{align*}
% \end{corollary}

\section{Analyzing the subset ensemble}\label{sec:subset-proof}
In this section, we prove \Cref{thm:weak-to-strong-simple}, which establishes weak-to-strong generalization in the simple weak-to-strong subset ensemble.
We restate it below for convenience.
\subsetwts*
\begin{proof}
    The key idea of the proof is that, despite the error rate of the weak classifier being $\frac{1}{2} - o(1)$, the weak classifier still has a noticeable amount of mass on the true label defining direction.
    So long as the error rate is not $\frac{1}{2} - O(\frac{1}{\sqrt{m}})$, which random guessing can achieve, the strong learner can still pick up on the signal contained in the weak labels.
    To make this precise, we will control the coordinates of $\fweak$ in the distinguished basis (\Cref{def:strong-basis}).

    Quantitatively, for the weak learner $\fweak$, since we have clean labels and $q_{\weak} + r_{\weak} > 1$, we can apply \Cref{cor:one-sparse}. 
    Let $\mu_{n, \weak} \triangleq \frac{a_{\weak} n}{s_{\weak}}$ denote the bi-level prefactor for the weak model.
    Then the corollary shows that in the basis where $\bv_* = \be_1$, we have with high probability
    \begin{align*}
        \fweak[1] &= \sqrt{\lambda_{F, \weak}} \bz_1^\top \Ainv \by \\
        &\asymp \sqrt{\frac{a_{\weak} d_{\weak}}{s_{\weak}}} \frac{n}{d_{\weak}} \\
        &= \sqrt{\mu_{n, \weak}} n^{\frac{1-p_{\weak}}{2}},
    \end{align*}
    whereas for all other coordinates we have with high probability
    \begin{align*}
        \abs{\fweak[i]} &\lesssim
        \begin{cases*}
            \sqrt{\lambda_{F, \weak}} \cdot \frac{\sqrt{n}}{d_{\weak}} \sqrt{\log n} & $i \in \qty{2, \ldots, s_{\weak}}$ \\
            \sqrt{\lambda_{U, \weak}} \cdot \frac{\sqrt{n}}{d_{\weak}} \sqrt{\log n}& $i \in \qty{s_{\weak}+1, \ldots, d_{\weak}}$
        \end{cases*} \\
        &\lesssim 
        \begin{cases*}
            \sqrt{\mu_{n, \weak}} n^{-\frac{p_{\weak}}{2}}  \sqrt{\log n} & $i \in \qty{2, \ldots, s_{\weak}}$ \\
            n^{\frac{1}{2} - p_{\weak}}\sqrt{\log n}& $i \in \qty{s_{\weak}+1, \ldots, d_{\weak}}$
        \end{cases*}
    \end{align*}
    Hence, to obtain $\bw_{\weak}$ in $U_*$, we first normalize $\fweak$ and then do the basis change. 
    Some calculation yields 
    \begin{align*}
        \norm{\fweak}^2 &\lesssim \mu_{n, \weak} n^{1-p_{\weak}} + n^{1-p_{\weak}}\log n \cdot \frac{s_{\weak}\lambda_{F, \weak} + (d_{\weak} -s_{\weak})\lambda_{U, \weak}}{d_{\weak}}
        \\
        &= (\mu_{n, \weak} +  \log n)n^{1-p_{\weak}} \\
        &\lesssim n^{1-p_{\weak}}\log n \tag{$q_{\weak}+r_{\weak} > 1$}
        % d_{\weak}}{s_{\weak}} \frac{n^2}{d_{\weak}^2} + s_{\weak} \cdot \frac{a_{\weak} d_{\weak}}{s_{\weak}} \cdot \frac{n}{d^2} + (d_{\weak} - s_{\weak}) \cdot 
    \end{align*}
    where the second line follows because $ s_{\weak} \lambda_{F, \weak}+ (d_{\weak} - s_{\weak}) \lambda_{U, \weak} = d_{\weak}$. 
    Similarly, we can compute a simple lower bound:
    \begin{align*}
        \norm{\fweak}^2 \ge \sum_{i > s+1} (\bz_i^\top \Ainv \by)^2 &= \by^\top \Ainv \sum_{i > s+1} \bz_i \bz_i^\top \Ainv \by \\
        &\gtrsim n^{1-p_{\weak}} \tag{\Cref{lem:arinv-spectrum}}.
    \end{align*}
    We therefore obtain the sandwiching bounds
    \begin{align}
        n^{\frac{1-p_{\weak}}{2}} \lesssim \norm{\fweak} \lesssim n^{\frac{1-p_{\weak}}{2}}\sqrt{\log n}.
    \end{align}
    By rotating to the distinguished basis, since $q_{\weak} + r_{\weak} > 1$, this yields
    \begin{align*}
        \bw_{\weak} =\frac{\fweak}{\norm{\fweak}} =  w_1 \be_1 + w_2 \be_2 + w_{s+1} \be_{s+1},
    \end{align*}
    where 
    \begin{align}
        \tfrac{1}{\sqrt{\log n}}\sqrt{\mu_{n, \weak}}\lesssim &\, w_1 \lesssim \sqrt{\mu_{n, \weak}} \label{eq:w1-bound}\\
        \tfrac{1}{\log n} n^{-\frac{q_{\weak}}{2}} \lesssim  &\,\abs{w_2} \lesssim  n^{-\frac{q_{\weak}}{2}} \sqrt{\log n} \label{eq:w2-bound} \\
        1 \lesssim  &\,\abs{w_{s+1}} \label{eq:ws-bound} \mper
    \end{align}

    Let us now check the conditions to apply \Cref{cor:suff-weak-favoring}, keeping in mind we need to scale with respect to $m$ instead of $n$.
    In particular, we require 
    \begin{itemize}
        \item $w_i = \omega(\sqrt{\tfrac{\log m}{m}})$ for all $i \in \qty{1, 2, s+1}$.
        \item $\abs{w_2} \ll w_1$ 
        \item $\sqrt{\mu_m} n^{\frac{u-p}{2}} \ll \abs{w_1}$
    \end{itemize}
    Since we are going to check the second condition, and clearly the first condition is satisfied for $i=s+1$, it suffices to check the first condition for $i=1$. 
    By plugging in the scaling for $w_1$ in \Cref{eq:w1-bound}, we have $w_1 = \omega(\sqrt{\tfrac{\log m}{m}})$ if 
    \begin{align}
        1 - q_{\weak} - r_{\weak} > -u \iff q_{\weak} + r_{\weak} < u+ 1. \label{eq:u-cond-w2}
    \end{align}
    Let us now verify the second condition. 
    It turns out that the second condition always holds under the bi-level parameterization. Indeed, from \Cref{eq:w1-bound,eq:w2-bound}, we always have
    \begin{align*}
        \abs{w_2} \le n^{-\frac{q_{\weak}}{2}}\sqrt{\log n} \ll \frac{1}{\sqrt{\log n}} n^{\frac{1-q_{\weak}-r_{\weak}}{2}} \le w_1,
    \end{align*}
    as $r_{\weak} < 1$.
    Finally, for the third condition, we have
    \begin{align}
        \sqrt{\mu_m} n^{\frac{u-p}{2}} = n^{\frac{u-q-r}{2} + \frac{u-p}{2}} \ll n^{\frac{1-q_{\weak} - r_{\weak}}{2}} &\iff 2u - (p + q +r) < 1 - (q_{\weak} + r_{\weak}) \nonumber \\
        &\iff q+r > 2u-(p+1) + q_{\weak} + r_{\weak} \label{eq:u-cond-third}
    \end{align}
    
    With these preparations in hand, we now prove the positive side of the result.
    
    \paragraph{Sufficient condition for weak-to-strong generalization.} First we lower bound $\su_{m}(\bv_* | \bw_{\weak})$. 
    
    Under \Cref{eq:u-cond-w2,eq:u-cond-third}, we conclude that 
    \begin{align*}
        \su_{m, \strong}(\bv_* | \bw_{\weak}) \gtrsim \mu_m w_1 \ge \frac{1}{\sqrt{\log n}} \mu_m \cdot n^{\frac{1-q_{\weak} - r_{\weak}}{2}}
    \end{align*}
    On the other hand, by \Cref{prop:cn-general} we have
    \begin{align*}
        \cn_{m, \strong}(\bv_* | \bw)^2 &\le o(\mu_m^2 \abs{w_1}^2) + \mu_m^2 n^{r-u}\log n + n^{u-p}\log n.
    \end{align*}
    
     We will now analyze the survival-to-contamination ratio for the strong learner. The first term is clearly at least $\omega(1)$. 
    For the second term, we get 
    \begin{align*}
        \frac{1}{\log n} n^{\frac{1-q_{\weak} - r_{\weak}}{2} - \frac{r-u}{2}},
    \end{align*}
    which is $\omega(1)$ if 
    \begin{align}
        q_{\weak} + r_{\weak} < u+1-r \label{eq:u-cond-favored}
    \end{align}
    Finally, for the third term, we get
    \begin{align*}
        \frac{1}{\log n}n^{u-q-r + \frac{1-q_{\weak}-r_{\weak}}{2} -\frac{u-p}{2}},
    \end{align*}
    which is $\omega(1)$ if 
    \begin{align}
       q_{\weak} + r_{\weak} < u + p+1 - 2(q+r). \label{eq:u-cond-unfavored}
    \end{align}
    Collecting the conditions \Cref{eq:u-cond-w2,eq:u-cond-favored,eq:u-cond-unfavored} yields 
    \begin{align*}
        u &> q_{\weak} + r_{\weak} - \min\qty{1, 1-r, p+1-2(q+r)} \nonumber \\
        &= q_{\weak} + r_{\weak} - \min\qty{1-r, p+1-2(q+r)},
    \end{align*}
    which establishes the positive side of the result.

    \paragraph{Sufficient condition for failure of weak-to-strong generalization.} To get the negative result, we need to upper bound $\su_{m, \strong}(\bv_* | \bw_{\weak})$ and lower bound $\cn_{m, \strong}(\bv_* | \bw_{\weak})$. 
    Our earlier calculations and \Cref{prop:cn-general} yield
    \begin{align}
        \abs{\su_{m, \strong}(\bv_* | \bw_{\weak})} &\lesssim \mu_m \cdot \max\qty{w_1, n^{\frac{u}{2}-p}\sqrt{\log n}}
        \\
        \cn_{m, \strong}(\bv_* | \bw_{\weak}) &\gtrsim \mu_m n^{\frac{r-u}{2}} + n^{\frac{u-p}{2}}\mper
    \end{align}
    The above bound only differ by log factors from the upper bounds. 
    Hence, in this case, the sufficient condition merely flips the direction of the inequality, which recovers the stated result.
    Otherwise, if $u < q_{\weak} + r_{\weak} - 1$, then certainly $u < q_{\weak} + r_{\weak} - 1 +r$, and $n^{\frac{u}{2}-p}\sqrt{\log n}$ dominates.
    The latter is in turn dominated by $n^{\frac{u-p}{2}}$, so the survival-to-contamination ratio is $o(1)$. 
    This completes the proof.
\end{proof}

\section{Multilabel classification}\label{sec:multilabel}
In this section, we analyze multilabel classification with $k$ classes, which is a variant of multiclass classification where a single datapoint can be positive examples of multiple classes. 
Given $n$ datapoints, we encode this label enformation using $k$ label vectors $\by^{(1)}, \ldots, \by^{(k)}$, where for $i \in [k]$, we  encode the $i$th label vector as $\by^{(i)} \in \qty{\pm 1}^n$, with $1$ representing positive examples.
We make the following $1$-sparse assumption for multilabel data.
\begin{assumption}[1-sparse assumption for multilabel]\label{assump:1-sparse-multilabel}
    The label defining directions $\bv_*^{(1)}, \ldots, \bv_*^{(k)}$ are aligned with a top $k$ eigenbasis of the strong covariance $\Sigma$. 
    In a strong eigenbasis where $\bv_*^{(i)} = \be_i$, we have 
    \begin{align*}
        y^{(j)} = \sgn(x_j),\quad\quad\quad \forall j \in [k]
    \end{align*}
\end{assumption}
\begin{remark}
    In fact, the analysis below will turn out to work even if the label defining directions are \emph{not} orthogonal; they could all be the same! The analysis only relies on there existing some basis $U_i$ such that $\bv_*^{(i)} = \be_1$. 
    However, the tight misclassification rate would be different depending on the relationship between the label defining directions.
\end{remark}
The weak-to-strong setup is defined as follows. 
\begin{enumerate}[label=(\arabic*)]
    \item $\fweak \in \R^{d_{\weak} \times k}$: train on $n$ datapoints using weak features and ground-truth labels. 
    \item $\fwts \in \R^{d \times k}$: train on $m \gg n$ datapoints using strong features and hard pseudolabels generated from $\fweak$. 
\end{enumerate}
As before, we will study weak-to-strong generalization using overparameterized linear classifiers. 
Hence, $\fweak$ will now consist of $k$ different linear classifiers $\fweak^{(i)} \in \R^{d_{\weak}}$ for $i \in [k]$, all trained by MNI on $n$ clean multilabel datapoints. 
To train $\fwts^{(i)}$, we generate hard pseudolabel vectors $\wh{\by}^{(i)} = \sgn(\langle \fweak^{(i)}, \bx_{\weak}\rangle)$, and then perform MNI on these hard pseudolabel vectors.
We deem that a multilabel classifier $\vf$ generalizes if, for a fresh test sample $\xtest$ and for every class $i \in [k]$, $\vf$ correctly labels whether $\xtest$ is a positive example of class $i$.
More formally, define for a collection of classifiers $\vf = (\vf^{(1)}, \ldots, \vf^{(k)})$, the loss function $\ell(\vf) = \Pr[\exists i\in [k]: \sgn(\vf^{(i)}(\xtest)) \neq y_{\test}^{(i)}]$, where the probability is taken over a fresh test sample $(\xtest, y_{\test})$.
\begin{remark}
    The weak model can also be trained on clean multiclass data rather than clean multilabel data. This will only affect the regimes for the weak model to satisfy \Cref{item:subopt-weak}, based on \Cref{thm:regimes-clean}.
\end{remark}

The subset ensemble definition is essentially the same as before (\Cref{assump:w2s-ensemble}), with the main difference being that we require all $k$ of the label defining directions to be favored and axis-alignable in both the weak and strong favored feature subspace. 
\begin{assumption}[Subset ensemble for multilabel classification]
    Let $\Lambda = \Lambda(p, q, r) \in \R^{d \times d}$ denote the strong eigenvalues and $\Lambda_{\weak} = \Lambda(p_{\weak}, q_{\weak}, r_{\weak}) \in \R^{d_{\weak} \times d_{\weak}}$ denote the weak eigenvalues, both drawn from the bi-level ensemble. 
    Suppose the $1$-sparse assumption (\Cref{assump:1-sparse}) holds for the strong covariance, with any distinguished eigenbasis $U$ where $\bv_*^{(i)} = \be_i$ for $i \in [k]$.
    The following conditions relate the weak and strong features after rotating to $U$.
    \begin{enumerate}[label=\normalfont{(\arabic*)}]
        \item $\bx_{\strong} \sim N(0, \Lambda)$, where $\Lambda = \lambda_F I_{[s]} + \lambda_U I_{[d] \setminus [s]}$.
        \item There exists subsets of coordinates $S \subseteq [s], T \subseteq [d] \setminus [s]$, with $[k] \subseteq S$ and $\abs{S} = s_{\weak}$, such that
        \[
            \bx_{\weak} = (\sqrt{\tfrac{\lambda_{F,\weak}}{\lambda_{F}}} \Pi_S + \sqrt{\tfrac{\lambda_{U,\weak}}{\lambda_{U}}} \Pi_{T}) \bx_{\strong}\stackrel{d}{=} N(0, \lambda_{F, \weak} I_{S} + \lambda_{U, \weak} I_{T})\mper
        \]
    \end{enumerate}
\end{assumption}

The crucial observation is that multilabel training boils down to $k$ (nearly) independent binary classification problems. 
The main difference to establish successful generalization is that we now need to union bound over all $k$ multilabel classifiers.
The high probability bounds from the binary analysis come from two sources: (1) applying spectral bounds (\Cref{lemma:eigenvaluebounds}), which holds with very high probability ($\exp(-n^{1/2})$), and (2) Hanson-Wright calculations (\Cref{prop:hw-calcs}), where bounds that hold with probability $\delta$ have deviation $\poly\log(1/\delta)$. 
Hence, for $k$ a constant, $\delta$ is only affected by a constant, and this will only change the bounds in our analysis by a constant, which does not shift the regimes. 
Furthermore, even if we allow $k$ to scale with $n$ as in \Cref{def:scaling-k}, since $k$ is polynomial in $n$, the dependence for the high probability bounds on $k$ is at most polylogarithmic in $k$, so again nothing changes with the analysis, which looks at polynomial regime shifts. 

For the converse direction, one can use a crude bound of just analyzing the probability of one classifier failing, which gives a error rate bounded from below by $\frac{1}{2} - o_n(1)$. 
However, we expect that a more refined analysis would give the expected error rate of $1 - O(2^{-k})$; we sketch an argument for how to get this improved error rate after the theorem statement.
With these minor modifications, the formal details go through unchanged, and we arrive at our main theorem for weak-to-strong multilabel classification using hard pseudo-multilabels.

\begin{restatable}[Weak-to-strong generalization for multilabel subset ensemble]{theorem}{subsetmultilabelwts}\label{thm:weak-to-strong-multilabel}
    Consider the setup where the weak model $\fweak$ is trained on $n$ correctly labeled examples and the strong model $\fwts$ is trained on $m = n^u$ weakly labeled examples using MNI, where $u < p$, $q+r>u$, and $q_{\weak} + r_{\weak} > 1$.
    Assume the following:
    \begin{enumerate}[label=\normalfont{(\arabic*)}]
        \item The true multilabels satisfy the $1$-sparse assumption (\Cref{assump:1-sparse-multilabel}) for the strong covariance.
        \item The weak and strong features follow the subset ensemble (\Cref{assump:w2s-ensemble}) with bi-level eigenvalues $\Lambda(p, q, r)$ and $\Lambda(p_{\weak}, q_{\weak}, r_{\weak})$, respectively, scaled relative to $n$.
        \item There are not too many weakly labeled examples: 
        $u < \frac{p+1+q+r-(q_{\weak}+r_{\weak})}{2}$.
    \end{enumerate}

    Let $\tau_{\strong} \triangleq p+1-2(q+r)$. 
    Then, the resulting test error for $\fwts$ satisfies
    \begin{align}
        \E[\ell(\fwts)] & = \begin{cases} o_n(1) ,& \ \mathrm{if} \ u > q_{\weak} + r_{\weak} - \min\qty{1-r, \tau_{\strong}}\\
        \Omega(1) ,& \ \mathrm{if} \ u < q_{\weak} + r_{\weak} - \min\qty{1-r, \tau_{\strong}}.
    \end{cases}
\end{align}
\end{restatable}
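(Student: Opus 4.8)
The plan is to reduce the multilabel problem to $k$ (nearly) independent copies of the binary problem already settled in \Cref{thm:weak-to-strong-simple}, and then to stitch the per-class guarantees together with union bounds. Since MNI is applied separately to each label vector, we have $\fweak^{(i)} = \mX_{\weak}^\top \mA_{\weak}^{-1}\by^{(i)}$ and $\fwts^{(i)} = \mX^\top \Ainv \wh{\by}^{(i)}$, so the only coupling across classes is through the shared (decorrelated) Gram matrices $\mA_{\weak}$ and $\mA$ and through the requirement that the weak-model directions $\bw_{\weak}^{(i)} = \fweak^{(i)}/\norm{\fweak^{(i)}}$ be controlled for \emph{all} $i$ at once. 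Everything downstream is a class-by-class repetition of the binary argument.

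First I would fix a class $i$ and work in the distinguished basis $U_i$ of \Cref{def:strong-basis} in which $\bv_*^{(i)} = \be_1$; there the clean labels $\by^{(i)}$ are $1$-sparse, so \Cref{cor:one-sparse} gives $\fweak^{(i)}[1] \asymp \sqrt{\mu_{n,\weak}}\, n^{(1-p_{\weak})/2}$ and controls the off-direction coordinates, exactly as in the proof of \Cref{thm:weak-to-strong-simple}. This yields the sandwiching bounds \Cref{eq:w1-bound,eq:w2-bound,eq:ws-bound} for $\bw_{\weak}^{(i)} = w_1^{(i)}\be_1 + w_2^{(i)}\be_2 + w_{s+1}^{(i)}\be_{s+1}$. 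The point is that these bounds must hold \emph{simultaneously} over $i \in [k]$: the spectral bounds (\Cref{lemma:eigenvaluebounds}, \Cref{cor:flat-gram}, \Cref{lem:arinv-spectrum}) are label-independent and hold with probability $1 - \exp(-n^{\Omega(1)})$, while the Hanson--Wright bounds of \Cref{prop:hw-calcs} lose only a $\poly\log n$ factor when the failure probability is pushed from $1/n$ to $1/(nk)$; since $k = \Theta(n^t)$ is polynomial, the class-wise union bound does not move any polynomial threshold.

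Next, for each $i$ I would invoke \Cref{cor:suff-weak-favoring} and \Cref{prop:cn-general} with $\bw = \bw_{\weak}^{(i)}$, scaling with respect to $m = n^u$; verifying the three hypotheses of \Cref{cor:suff-weak-favoring} (the assumption $u < p$ enters here, just as in the binary proof) is identical to before, and one obtains $\su_m(\bv_*^{(i)}\mid \bw_{\weak}^{(i)}) \gtrsim \mu_m w_1^{(i)}$ together with the matching contamination bound. Exactly the arithmetic of \Cref{sec:subset-proof} --- collecting \Cref{eq:u-cond-w2,eq:u-cond-favored,eq:u-cond-unfavored} --- shows the survival-to-contamination ratio for class $i$ is $\omega(1)$, indeed polynomially large with exponent $\tfrac12\bigl(u - (q_{\weak}+r_{\weak}-\min\{1-r,\tau_{\strong}\})\bigr)$, precisely when $u > q_{\weak}+r_{\weak}-\min\{1-r,\tau_{\strong}\}$. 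By the noise-stability identity the per-class error is then $O(\poly\log n \cdot n^{-\gamma})$ for a fixed $\gamma > 0$, and a union bound over the $k$ classes inside $\ell(\fwts) = \Pr[\exists i:\ \sgn(\fwts^{(i)}(\xtest)) \neq y_{\test}^{(i)}]$ gives $\E[\ell(\fwts)] = o_n(1)$ (immediate for constant $k$, and for $k = \Theta(n^t)$ using that $t$ is dominated by the polynomial margin exponent). For the converse regime it suffices to exhibit one failing class: running the negative half of the \Cref{thm:weak-to-strong-simple} argument for $i = 1$ shows its survival-to-contamination ratio is $o(1)$, so $\Pr[\sgn(\fwts^{(1)}(\xtest)) \neq y_{\test}^{(1)}] = \tfrac12 - o_n(1)$ and hence $\E[\ell(\fwts)] = \Omega(1)$; an inclusion--exclusion over the essentially independent classes would upgrade this to $1 - 2^{-\Omega(k)}$.

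The step I expect to be the main obstacle is making the two union bounds compatible: controlling all $\bw_{\weak}^{(i)}$ on a single high-probability event \emph{and} ensuring the per-class test-error decay $n^{-\gamma}$ beats the $k = \Theta(n^t)$-fold union bound, which forces one to track how $\gamma$ depends on the slack in $u > q_{\weak}+r_{\weak}-\min\{1-r,\tau_{\strong}\}$ and confirm it exceeds $t$. Beyond that bookkeeping, the proof is a mechanical, class-by-class reuse of the binary analysis.
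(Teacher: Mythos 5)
Your proposal is correct and follows essentially the same route as the paper: the paper's proof of \Cref{thm:weak-to-strong-multilabel} is exactly the reduction to $k$ nearly independent binary problems, a union bound over classes for the concentration events (noting the spectral bounds hold with very high probability and the Hanson--Wright deviations degrade only polylogarithmically in $k$), and, for the converse, the crude bound from a single failing class giving error at least $\tfrac12 - o_n(1)$, with the same sketched inclusion--exclusion upgrade to $1 - O(2^{-k})$. The one issue you flag --- that the per-class test-error exponent $\gamma$ must beat $t$ for the union bound over $k = \Theta(n^t)$ classes inside $\ell(\fwts)$ --- is a legitimate point that the paper's own sketch glosses over rather than resolves, so you are, if anything, slightly more careful than the source.
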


We now sketch out an approach which should give the correct error rate for constant $k$  (or even $k$ growing slowly with $n$).
Note that the complement of the failure event is
\[
    \forall i \in[k]: \sgn(\langle\fwts^{(i)}, \xtest) \rangle = y_{\test}^{(i)}.
\]
From the Gram-Schmidt decomposition \Cref{eq:strong-gen-event} and the noise stability formula, we can decompose this event as 
\[
    \forall i\in[k]: \sgn(\tfrac{\su(\bv_*^{(i)} | \bw)}{\cn(\bv_*^{(i)} | \bw)} x_*^{(i)} + g^{(i)}) = \sgn(x_*^{(i)}),
\]
where $x_*^{(i)} = \ev{\bg_{\test}, \bv_*^{(i)}}$ and $g^{(i)}$ are iid standard Gaussians.
In the converse regime, the survival to contamination ratio is polynomially decaying, so for typical $g^{(i)}$, with probability $\exp(-n^c)$ we have $\sgn(\tfrac{\su(\bv_*^{(i)} | \bw)}{\cn(\bv_*^{(i)} | \bw)} x_*^{(i)} + g^{(i)}) = \sgn(g^{(i)})$. 
Furthermore, by unpacking on the analysis of $\fwts^{(i)}$ (specifically, see \Cref{eq:w2-bound}) for $i \neq j$, $g^{(i)}$ and $g^{(j)}$ only differ by a Gaussian with polynomially decaying variance, so up to a failure event with probability $\exp(-n^c)$, we can replace these with the same Gaussian $\wt{g}$ which is independent of both $x_*^{(i)}$ and $x_*^{(j)}$.
We can do this for all pairs $(i, j)$ and union bound over $k$ (this is where we use the slow growing condition on $k$).
Hence, up to these error terms, which are negligible, the complementary event occurs with probability $2^{-k}$, so the test error will indeed be $1 - O(2^{-k})$.

\subsection{Multilabel supervision for multiclass classification}\label{sec:multilabel-for-multiclass}
In this section, we expand upon the arguments to use weak multilabel supervision to train a weak-to-strong multiclass classifier $\fwts$.
The difficulty of the multiclass analysis is studying the MNI behavior for weak supervision. 
However, since the multilabel MNI behavior is well under control by the arguments above, it is tractable to study this setup instead.

The key insight is that, for multiclass classification to succeed, it suffices to look at pairwise comparisons between the score functions for the $k$ different classes (see \citet{WS24} for more justification). 
In particular, one studies the relative survival given different class label vectors:
\[
    \su(\bv | \wh{\by}_{\weak}^{(i)}) - \su(\bv | \wh{\by}_{\weak}^{(j)}),
\]
with the signal component for class $i$ being the above quantity with $\bv = \bv_*^{(i)}$. 
One can then define the relative contamination in analogous way. 
However, the path towards studying the survival comes from understanding the coefficients of $\fweak$ and $\fwts$, which is feasible.

Thus, to determine whether $\fwts$ generalizes for multiclass classification, it reduces back down to the multilabel/binary behavior. 
As argued in \citet{WS24}, because of the margin between the features and the survivals, it suffices to get a polynomially increasing $\su/\cn$ ratio. 
Hence, in the successful regime for weak-to-strong multilabel generalization, this relative $\su/\cn$ ratio is polynomially increasing, which implies that multiclass classification succeeds.

%!TEX root = ./main.tex
\section{Improving the bounds for misclassification rate}\label{sec:lower-bound}
In this section we tighten the bounds on misclassification rate for the multiclass setting.
\begin{restatable}[{Tightening of \cite[Proposition A.7]{WS24}}]{theorem}{errorrate}\label{thm:tight-rate}
    Assume we are in the bi-level ensemble model (\cref{def:bilevel}), the true data generating process is 1-sparse (\Cref{assump:1-sparse}), and the number of classes follows the scaling defined in \Cref{def:scaling-k}.
    Then, in the negative regime where the model does not achieve vanishing error, we have 
\begin{align}
    \E[\ell(\fstrong)]= 1 - \Theta\qty(\frac{1}{k}), 
\end{align}
where the expectation is taken over the randomness of the training data and the test point. 
\end{restatable}
At a high level, the misclassification event $\gE_{\err}$ is governed by the following inequality holding:
\begin{align*}
    \frac{\su_n}{\cn_n} \max_{j \in [k]} \abs{x_j} \le \max_{i \in [k]} g_i,
\end{align*}
where $(x_j)_{j \in [k]}$ are iid standard normals, and $(g_i)_{i \in [k]}$ are jointly gaussian with some correlation structure. 

Now, \cite[Proposition A.3]{WS24} implies that $\frac{\su_n}{\cn_n} \le n^{-u}$ for some constant $u > 0$ with probability $1 - O(1/n)$.
Also, we can upper bound $\max_{j \in [k]} \abs{x_j} \le O(\sqrt{\log k})$ with probability at least $1 - O(1/k)$. 
Moreover, in the regime where classification fails, we know that from \cite[Proposition F.1]{WS24}, we have that $\E[g_i g_j] \le \frac{1}{2} + n^{-\delta}$ for some constant $\delta > 0$ with probability at least $1 - O(1/n)$.
Since $k = o(n)$, when we union bound over the above events, they get absorbed by the $O(\cdot)$ and $\Omega(\cdot)$ terms.

To tighten the misclassification rate, we will improve upon \citep[Theorem 2.1]{LY22}.
In particular, we shows the following lower tail inequality for the maximum of correlated gaussians. 
We will prove the following theorem, which captures the lower tail behavior of the maximum of correalted Gaussians very far from its expectation. This result complements the lower tail bound of \cite{LY22}, which holds for more moderate deviations.

\lowertail*
Before we prove the theorem, let us see how it tightens the misclassiifcation rate.
\begin{proof}[Proof of \Cref{thm:tight-rate}]
    We will apply \Cref{thm:correlated-gaussians} with $N = k-1$, $\rho_0 = \frac{1}{2} + n^{-\delta}$, and $\delta_0 = \frac{1}{\log k}$. 
    
    Noting that $\delta_0\sqrt{2(1-\rho_0)\log k} \ge n^{-u}$ if $k = c_k n^t$ for any $t \in [0, 1)$, we see that 
\begin{align*}
    \Pr[\max_{i \in [k]} g_i \le n^{-u}] &\le \Pr[\max_{i \in [k]} g_i \le \delta_0 \sqrt{2(1-\rho_0)\log k}] \tag{$\delta_0\sqrt{2(1-\rho_0)\log k} \ge n^{-u}$}\\
    &\le  O\qty(k^{-(1 - \delta_0)^2(1 + n^{-\delta})} (\log k)^{n^{-\delta} -\frac{\delta_0}{2}})  \tag{\Cref{thm:correlated-gaussians}}\\
    &\le O\qty(k^{-(1-\delta_0)^2} (\log k)^{-\frac{\delta_0}{2}})(1+ o_n(1))  \tag{$k^{n^{-\delta}} = 1 + o_n(1)$}\\
    &\le O(k^{-(1-\delta_0)^2})\tag{$(\log k)^{-\delta_0/2} = 1 - o_k(1)$} \\
    &\le O\qty(\frac{1}{k})\mper \tag{$\delta_0 = 1/\log k$}
\end{align*}

Inverting this bound, we see that 
\begin{align*}
    \Pr[\max_{i \in [k]} g_i > n^{-u}] \ge 1 -O\qty(\frac{1}{k})\mper
\end{align*}
For the upper bound on the above probability, we can use Slepian's lemma and then compare to Gaussians $\ul{g_i}$ which have correlation $\frac{1}{2} - n^{-\delta}$. 
Writing it all out, we have 
\begin{align*}
    \Pr[\max_{i \in [k]} g_i \le n^{-u}] &\ge \Pr[\max_{i \in [k]} \ul{g_i} \le n^{-u}] \tag{Slepian's lemma} \\
    &\ge \Pr[\max_{i \in [k]} \ul{g_i} \le 0] \\ 
    &\ge \Omega\qty(\frac{1}{k^{1 + n^{-\delta}}}) \tag{\cite[Theorem 2.1]{PSZ21}} \\
    &\ge \Omega\qty(\frac{1}{k})\mper \tag{$k^{n^{-\delta}} = 1 + o_n(1)$}
\end{align*}
from which we get a nearly matching upper bound on the probability of $1 - \Omega(\frac{1}{k})$. 
\end{proof}
We return to the proof of \Cref{thm:correlated-gaussians}.
\begin{proof}[Proof of \Cref{thm:correlated-gaussians}]
    The lower bound directly follows from the proof of \citet{LY22}, so we focus on proving the upper bound. 
    We remark that the constant hidden by $\Theta$ can be pinpointed to $(1+o_N(1))\sqrt{\frac{\rho_0}{1-\rho_0}}$, but we will not discuss this further.

    To reduce confusion, we will attempt to follow the notation and treatment from \cite{LY22}.
    To prove the upper bound, we can use Slepian's lemma to reduce to the case where $\E[g_i g_j] = \rho_0$ for all $i \neq j$. 
    Then, we can explicitly decompose 
    \[
        g_i = \sqrt{\rho_0} x + \sqrt{1-\rho_0} h_i,
    \]
    where $x, h_i$ are iid standard Gaussians.
    Via this decomposition, we can write an integral representation for the desired probability:
    \begin{align}
        \Pr[\max_{i \in [N]} g_i \le t_N] &= \int_{-\infty}^{\infty} \psi(s) ds, \\
         \psi(s) &\triangleq 
        \phi(s) \Phi^N\qty(\frac{t_N - \sqrt{\rho_0} s}{\sqrt{1-\rho_0}}) ds, \label{eq:integral-rep}
    \end{align}
    where $\phi(\cdot)$ and $\Phi(\cdot)$ are the standard Gaussian density and CDF, respectively.
    We will estimate this integral by splitting it into a couple pieces. 
    To this end, we will bound the Gaussian CDF using Mills' inequality. For any $t > 0$, we have 
    \begin{align*}
        \frac{t}{1+t^2} \phi(t)\le 1 - \Phi(t) = \Phi(-t) \le \frac{1}{t}\phi(t)\mper
    \end{align*}
    % Moreover, we will also need the following estimate on $\Phi^N(\cdot)$, which holds for any $\eps \in (0, \sqrt{2\log N})$:
    % \begin{align}
    %     \Pr[ \max_{i \in [N]} h_i \le \sqrt{2 \log N - \eps}] &= \Phi^N(\sqrt{2\log N - \eps}) \\
    %     &= (1 - \Phi(-\sqrt{2\log N - \eps}))^N \nonumber \\
    %     &\le \qty(1 - \tfrac{\exp(\eps/2)}{\sqrt{2\pi} N (\sqrt{2\log N - \eps} + 1)})^N \tag{Mill's inequality} \nonumber \\
    %     &\le \exp(-\tfrac{\exp(\eps/2)}{\sqrt{2\pi} (\sqrt{2 \log N} + 1)})\mper  \label{eq:cdf-bound}
    % \end{align}
    % As an easy consequence, this bound yields $\Phi^N(\sqrt{2 \log N - \eps}) = o_N(1)$ only if $\eps \ge K\log \log N$ for some constant $K > 1$.
    We will need the following multiplicative estimate on $\Phi^N(\cdot)$, which holds for any $\delta \in [0, 1]$:
    \begin{align}
        \Phi^N(\sqrt{2\log N}(1-\delta))
        &= (1 - \Phi(-\sqrt{2\log N}(1 - \delta)))^N \nonumber \\
        &\le \qty(1 - \tfrac{1}{\sqrt{2\pi} N^{(1-\delta)^2} (\sqrt{2\log N}(1-\delta) + 1)})^N \tag{Mill's inequality} \nonumber \\
        &\le \exp(-\tfrac{N^{1 - (1-\delta)^2}}{\sqrt{2\pi} (\sqrt{2 \log N} + 1)})\mper  \label{eq:cdf-bound-mult}
    \end{align}
    The above bound \eqref{eq:cdf-bound-mult} yields nontrivial bounds only when $\delta > \frac{c \log \log N}{\log N}$ for a constant $c > \frac{1}{4}$, and decays superpolynomially if $c > \frac{3}{4}$.

    These bounds motivate splitting \eqref{eq:integral-rep} into a few different pieces, based on which term dominates the behavior of the integral $\psi$. 
    To this end, let $c_N, d_N > 0$ be parameters we specify shortly. 
    Then we split the integral into three pieces: 
    \begin{align*}
        \int_{-\infty}^{-c_N} \psi(s) \dd{s} + \int_{-c_N}^{-d_N}  \psi(s) \dd{s} + \int_{-d_N}^{\infty} \psi(s) \dd{s}.
    \end{align*} 
    As in \cite{LY22}, we define $\alpha_0 \triangleq (\frac{1}{\rho_0} - 1)(1-\delta_0)^2$ and $\beta_0 \triangleq \frac{\alpha_0}{1-\delta_0}$. 
    Then the ultimate bound we want to prove is $\int_{-\infty}^{\infty} \psi(s) \dd{s} \le O\qty(N^{-\alpha_0} (\log N)^{\tfrac{\beta_0 - 1}{2}})$.
    We explain the choice of $c_N, d_N > 0$ as follows.
    For succinctness, we introduce the following two functions on $\R$:
    \begin{align*}
        s_N(u) &\triangleq -\sqrt{\frac{2(1-\rho_0)\log N}{\rho_0}} u = -\frac{\sqrt{2\alpha_0\log N}}{1-\delta_0}u \\
        f_N(s) &\triangleq \frac{t_N - \sqrt{\rho_0}s}{\sqrt{1-\rho_0}},
    \end{align*}
    and reparameterize the interval $[-c_N, -d_N]$ as another interval $\calI$ on $u$ below.

    \begin{enumerate}
        \item We want $c_N$ to satisfy $\Phi(-c_N) = O\qty(N^{-\alpha_0} (\log N)^{\tfrac{\beta_0 - 1}{2}})$. 
        Given this, the first term can be bounded by 
        \[
            \int_{-\infty}^{-c_N} \psi(s) \dd{s} \le \int_{-\infty}^{-c_N} \phi(s) \dd{s} = \Phi(-c_N) \le O\qty(\frac{1}{N^{\alpha_0}} (\log N)^{\tfrac{\beta_0-1}{2}}).
        \]
        
        By inverting Mills' inequality, we see that it suffices to pick 
        \begin{align}
            c_N &\triangleq \sqrt{\frac{2(1-\rho_0)\log N}{\rho_0}}\qty(1 - \delta_0 - \frac{\log \log N}{4 \log N}) \label{eq:def-cN}\\
            &= s_N\qty(1 - \delta_0 - \frac{\log \log N}{4\log N}) \mper \nonumber 
        \end{align}
        Indeed, we have for sufficiently large $N$ that
        \begin{align*}
            \Phi(-c_N) \le \frac{\phi(c_N)}{c_N} &\le \frac{\exp(-\alpha_0\log N(1 - \frac{\log \log N}{4(1-\delta_0)\log N}^2))}{\sqrt{2\pi}\sqrt{2(\tfrac{1}{\rho_0} - 1)\log N}(1 - o_N(1))} \\
            &\le \sqrt{\frac{\rho_0}{1-\rho_0}} N^{-\alpha_0} (\log N)^{\frac{\beta_0 - 1}{2}} \tag{$\beta_0 = \frac{\alpha_0}{1-\delta_0}$}
        \end{align*}
        \item We pick $d_N$ such that $\Phi^N(f_N(-d_N)) \ll \frac{1}{N^{\alpha_0}} (\log N)^{\tfrac{\beta_0-1}{2}}$. 
        Given this, by monotonicity we evidently have 
        \[
            \int_{-d_N}^{\infty} \psi(s) \dd{s} \le \Phi^N(f_N(-d_N)) \ll \frac{1}{N^{\alpha_0}} (\log N)^{\tfrac{\beta_0-1}{2}}.
        \]

        Owing to \eqref{eq:cdf-bound-mult}, to achieve the desired superpolynomial decay it suffices to pick $d_N \ge 0$ such that $\frac{t_N + \sqrt{\rho_0} d_N}{\sqrt{1-\rho_0}} \le \sqrt{2\log N}(1 - (\frac{3}{4} + \eps)\frac{\log \log N}{\log N})$, where $\eps > 0$ is a constant.

        Recalling that $t_N = \delta_0\sqrt{2(1-\rho_0)\log N}$, we see that $f_N(-d_N) = \sqrt{2\log N} \delta_0 + \frac{\sqrt{\rho_0}}{\sqrt{1-\rho_0} } d_N$. 
        Hence the desired inequality holds by picking
        \begin{align}
            d_N &\triangleq \sqrt{\frac{2(1-\rho_0)\log N}{\rho_0}}\qty(1 - \delta_0 - (\tfrac{3}{4} + \eps)\frac{\log \log N}{\log N}) \label{eq:def-dN} \\
            &= s_N\qty(1 - \delta_0 - (\tfrac{3}{4} + \eps)\frac{\log \log N}{\log N})\mper\nonumber
        \end{align}

    \end{enumerate}

    Given the above choices of $c_N, d_N$ in \cref{eq:def-cN,eq:def-dN}, we see that the interval $\calI$ that $u$ belongs to is $\calI \triangleq [1 - \delta_0 - (\tfrac{3}{4} + \eps)\frac{\log \log N}{\log N}, 1 - \delta_0 - \tfrac{1}{4} \cdot \frac{\log \log N}{\log N}]$.
    Hence, it is natural to reparameterize the integral in terms of $\eta \in [0, \frac{1}{2} + \eps]$ via the following change of variables  
    \begin{align*}
        u_N(\eta) &\triangleq 1 - \delta_0 - \frac{(\eta + \tfrac{1}{4}) \log \log N}{\log N},
    \end{align*}
    and an easy computation yields 
    \[
        \dd{s} = s_N'(u_N(\eta)) u_N'(\eta) \dd{\eta} = \frac{\sqrt{2 \alpha_0}}{1-\delta_0} \cdot \frac{\log \log N}{\log N}\dd{\eta}\mper
    \]
    It is not hard to see that $u_N([0, \frac{1}{2}+\eps]) = [-c_N, -d_N]$.
    Let us introduce the following abbreviations for the compositions of our changes of variable: 
    \begin{align*}
        \wt{s}_N(\eta) &\triangleq s_N(u_N(\eta)) \\
        \wt{f}_N(\eta) &\triangleq f_N(\wt{s}_N(\eta)) = \sqrt{2\log N}\qty(1 - \frac{(\eta + \tfrac{1}{4}) \log \log N}{\log N}).
    \end{align*} 
    In this notation, we have
    \[
        \int_{-c_N}^{-d_N} \psi(s) \dd{s} = \frac{\sqrt{2 \alpha_0}}{1-\delta_0} \cdot \frac{\log \log N}{\log N} \int_{0}^{1/2+\eps} \psi(\wt{s}_N(\eta)) \dd{\eta} \mper
    \]
    Now the argument as in \cite[Eq. 4.22]{LY22} shows that this integral is $O(N^{-\alpha_0} (\log N)^{\frac{\beta_0 - 1}{2}})$, which completes the proof of the first part of the theorem.\footnote{To be explicit, their calculation requires us to verify that $\sqrt{2\log N}(\delta_0 + u_N(\eta)) = \omega(1)$, which is certainly true.} 
    However, let us give an alternative proof which is a bit simpler.
    We will estimate the integral by performing a Riemann sum  with subintervals of width $\tau > 0$, which we pick to satisfy $\beta_0 \tau < 1$ and such that $\tau$ evenly divides $\frac{1}{2} + \eps$.
    Then, we have 
    \begin{align*}
        \int_{0}^{1/2+\eps} \psi(\wt{s}_N(\eta)) \dd{\eta} \le \sum_{k=0}^{\tfrac{1/2 + \eps}{\tau}} \int_{k\tau}^{(k+1)\tau} \psi(\wt{s}_N(\eta)) \dd{\eta}\mper
    \end{align*}
    By monotonicity, for any $k$ we have
    \begin{align*}
        \int_{k\tau}^{(k+1)\tau} \psi(\wt{s}_N(\eta)) \dd{\eta} &\le \tau \cdot \phi(\wt{s}_N((k+1)\tau)) \Phi^N(\wt{f}_N(k\tau))\\
        &\le C \cdot \tau \cdot N^{-\alpha_0} (\log N)^{\beta_0 \cdot (\tfrac{1}{2} + 2(k+1)\tau)} \exp(-(\log N)^{2k\tau}),
    \end{align*}
    where $C$ is a universal constant. Here, the last line used $\phi(\wt{s}_N(\eta)) = O(N^{-\alpha_0}(\log N)^{2\beta_0 \cdot (\tfrac{1}{4} + \eta)})$ and $\Phi^N(\wt{f}_N(\eta)) = O(\exp(-\log N)^{2\eta})$.
    
   If $2\beta_0(k+1)\tau < \frac{1}{2}$, then $\frac{\log \log N}{\log N}  (\log N)^{2\beta_0(k+1)\tau} \ll (\log N)^{-\frac{1}{2}}$, as desired. 
   On the other hand, if $2\beta_0(k+1)\tau \ge \frac{1}{2}$, then as $\beta_0\tau < \frac{1}{8}$, we have $2k\tau \ge \frac{1}{4\beta_0} > 0$. 
   Since $\exp(-(\log N)^{2k\tau})$ dominates any polylog terms, so it is not hard to see that the total contribution here is $\ll N^{-\alpha_0} (\log N)^{\frac{\beta_0-1}{2}}$.
   Hence, we conclude that $\int_{-c_N}^{-d_N} \psi(s) \dd{s} \le \sqrt{\frac{\rho_0}{1-\rho_0}} N^{-\alpha_0} (\log N)^{\frac{\beta_0-1}{2}}$, as desired.    
%     the argument here for the sake of self-containedness.
%     The key observation is that, as a function of $u$, one has 
%     \[
%         \log \psi(s_N(u)) = -\frac{u^2(1-\rho_0)\log N}{\rho_0} + N \log \Phi(f_N(s_N(u))) - \log \sqrt{2\pi}.
%     \]
%     To proceed, one uses Mill's inequality, which is tight only when the input is $\omega(1)$. 
%     Hence one needs to verify that $f_N(s_N(u)) = \sqrt{2\log N}(\delta_0 + u) = \omega(1)$. 
%     But indeed, we have $\delta_0 + u \in [1 - (\tfrac{3}{4} + \eps)\frac{\log \log N}{\log N}, 1 - \tfrac{1}{4} \cdot \frac{\log \log N}{\log N}]$.
\end{proof}

\section{Experiments}\label{sec:experiments}
In this section, we describe the simulations we conducted to validate the theory. 
We generated Gaussian data following the subset ensembles specified in the figures, and constructed two linear models from them: the MNI classifier and the simple averaging classifier. 
In the averaging classifier, we average over the positive examples of a label, which approximates the behavior of the first few iterations of gradient descent.
In contrast, the MNI classifier governs the asymptotic behavior of gradient descent.

The weak-to-strong behavior for these two learning algorithms was compared to two other baselines: the weak accuracy for $\fweak$ and the accuracy for the strong model trained on $m$ clean labels: $\fstrong$. 
The test accuracies were evaluated on $n_{\test} = 100$ fresh datapoints.
% % The label noise accuracy refers to the performance of $\fwts$ when the hard weak labels are generated with independent label noise matching the noise rate of $\fweak$.
% To quickly evaluate the test accuracy, we used the closed form formula given by \Cref{thm:regimes-clean}.
We ran $8$ independent trials to train $\fweak$ with $n=50$ so that we could explore how the weak-to-strong behavior scales with $p$ and $u$. 
For each $\fweak$, we conducted 16 independent trials to train $\fwts$. 

We swept out $u$ using five equally spaced points in $[1, 1.3]$.
In \Cref{fig:w2s-overall,fig:w2s-mni-overall}, we show the results of the averaging and MNI experiments, respectively, for four different slices. In the top row, we show two slices where the theory predicts weak-to-strong generalization to occur for MNI, and in the bottom row we show two slices where the theory predicts failure of weak-to-strong generalization. The error bars show the estimated 95\% CI over all sources of uncertainty in the inner and outer loop ($\fwts$ and $\fweak$).
In both the averaging and MNI plots, the theory successfully predicts whether weak-to-strong generalization occurs. 
Furthermore, in every plot the ground truth trained strong model $\fstrong$ trained on $m$ clean labels has better test accuracy than $\fweak$ and $\fwts$, as expected.
Another interesting experimental observation is that the averaging classifier does significantly better than MNI in non-asymptotic settings. 
This corroborates the view of practitioners of the benefits of early-stopping for gradient descent.
\begin{figure}[!ht]
    \centering
    \begin{subfigure}[b]{0.48\textwidth}
         \centering
         \includegraphics[width=\textwidth]{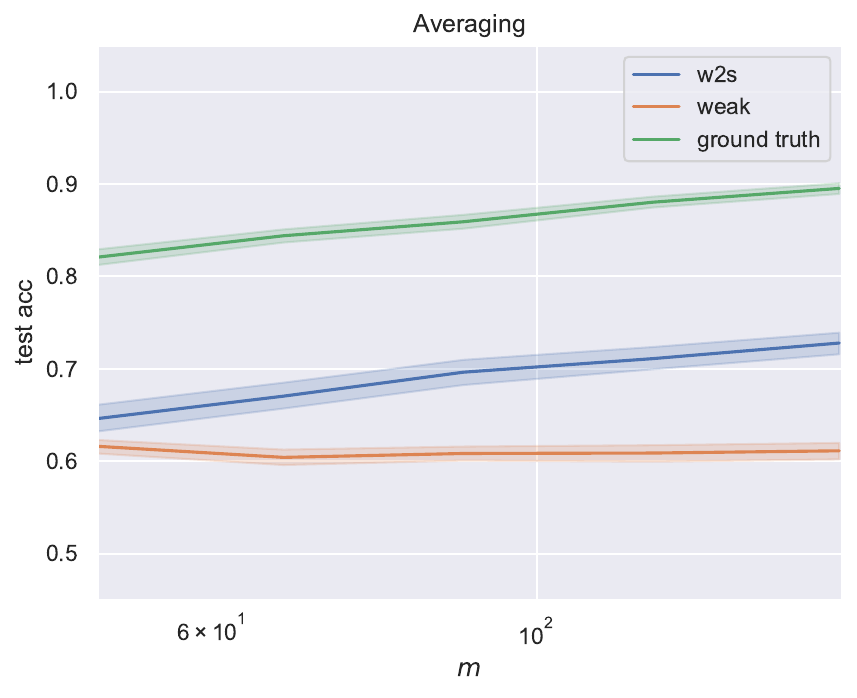}
         \caption{$(p, q, r) = (2, 0.6, 0.6), (p_{\weak}, q_{\weak}, r_{\weak}) = (1.4, 0.9, 0.5)$.} 
         \label{fig:w2s-avg-1}
     \end{subfigure}
     \hfill
     \begin{subfigure}[b]{0.48\textwidth}
         \centering
         \includegraphics[width=\textwidth]{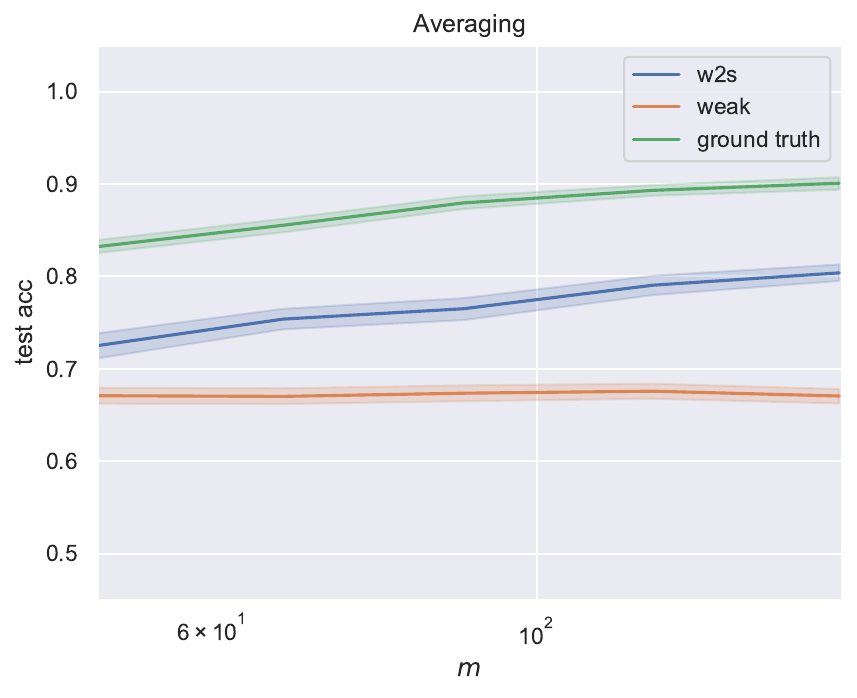}
         \caption{$(p,q,r) = (2, 0.9, 0.4)$, $(p_{\weak}, q_{\weak}, r_{\weak}) = (1.4, 0.9, 0.4)$}
         \label{fig:w2s-avg-2}
     \end{subfigure}
     \hfill \\
     \begin{subfigure}[b]{0.48\textwidth}
         \centering
         \includegraphics[width=\textwidth]{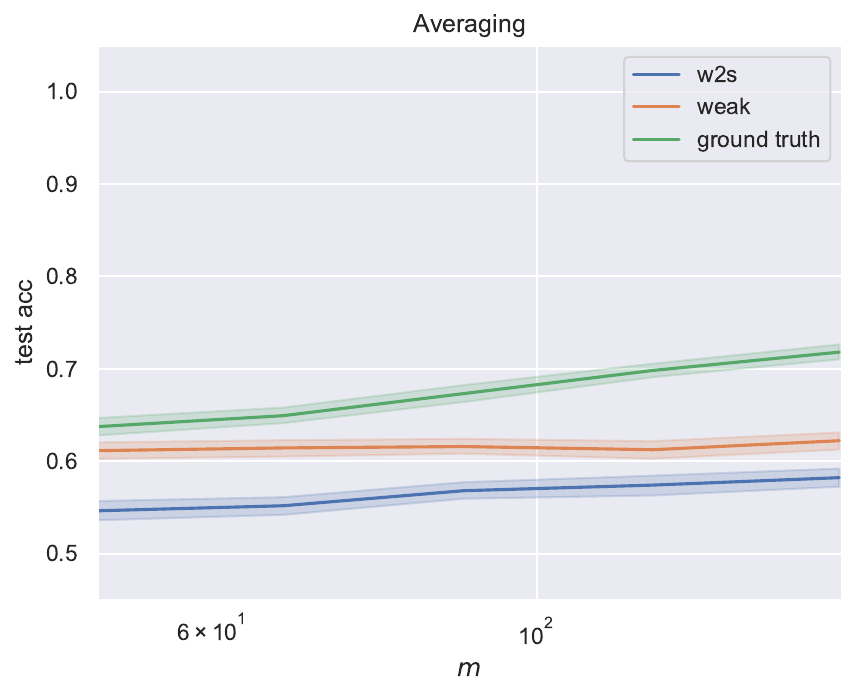}
         \caption{$(p, q, r) = (1.5, 0.6, 0.8), (p_{\weak}, q_{\weak}, r_{\weak}) = (1.4, 0.9, 0.5)$.} 
         \label{fig:w2s-avg-1-fail}
     \end{subfigure}
     \hfill
     \begin{subfigure}[b]{0.48\textwidth}
         \centering
         \includegraphics[width=\textwidth]{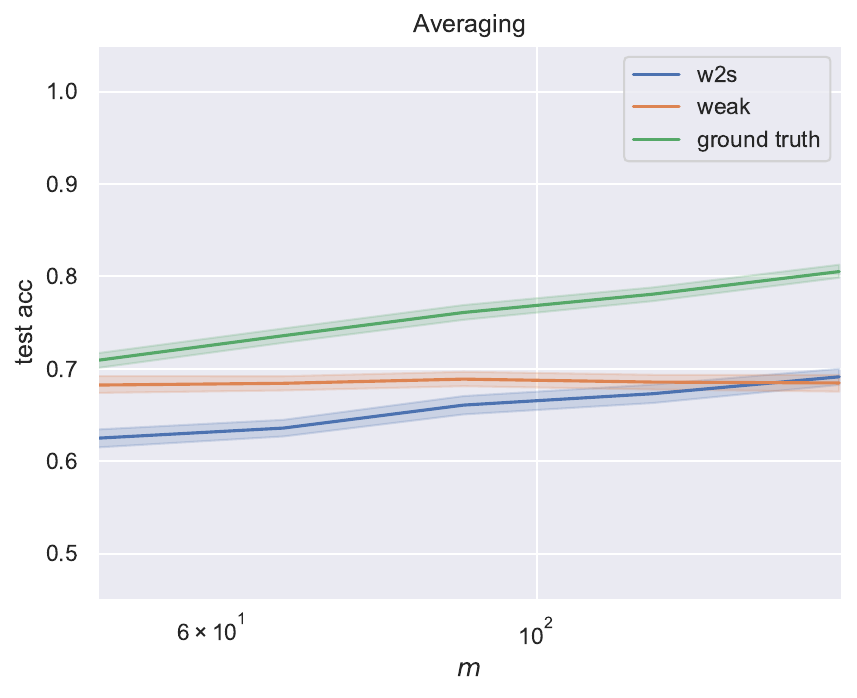}
         \caption{$(p,q,r) = (2, 0.9, 0.4)$, $(p_{\weak}, q_{\weak}, r_{\weak}) = (1.4, 0.9, 0.4)$}
         \label{fig:w2s-avg-2-fail}
     \end{subfigure}
     \hfill
    \caption{Comparison of test accuracies for four different models using averaging training. The $x$-axis plots $m$, the number of additional labeled datapoints. The models are trained using class averaging, which approximates the behavior of the initial few gradient descent iterations. Note how the weak model has low accuracy, whereas the weak-to-strong  model and ground truth have higher accuracies that increase as $m$ increases.
    The top row \Cref{fig:w2s-avg-1,fig:w2s-avg-2} are in a regime where we predict MNI weak-to-strong generalization to succeed, whereas the bottom row \Cref{fig:w2s-avg-1-fail,fig:w2s-avg-2-fail} depict regimes where we expect MNI weak-to-strong generalization to fail.} 
    \label{fig:w2s-overall}
\end{figure}
\begin{figure}[!ht]
    \centering
    \begin{subfigure}[b]{0.48\textwidth}
         \centering
        \includegraphics[width=\textwidth]{figures/w2s_mni_1.pdf}
         \caption{$(p, q, r) = (2, 0.6, 0.6), (p_{\weak}, q_{\weak}, r_{\weak}) = (1.4, 0.9, 0.5)$.}
         \label{fig:w2s-mni-1}
     \end{subfigure}
     \hfill
     \begin{subfigure}[b]{0.48\textwidth}
         \centering
        \includegraphics[width=\textwidth]{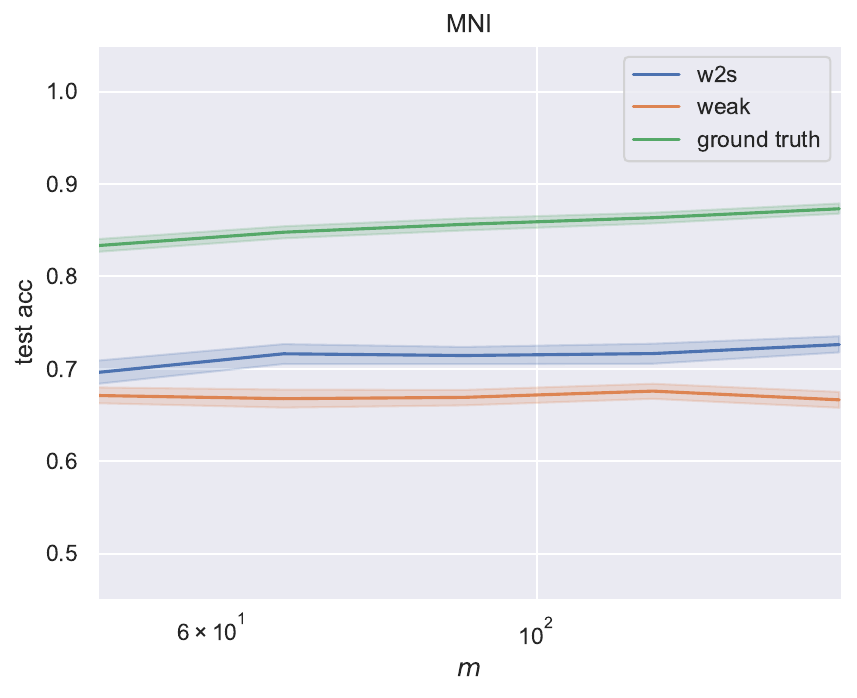}
         \caption{$(p,q,r) = (2, 0.9, 0.5)$, $(p_{\weak}, q_{\weak}, r_{\weak}) = (1.4, 0.9, 0.5)$}
         \label{fig:w2s-mni-2}
     \end{subfigure}
     \hfill \\
     \begin{subfigure}[b]{0.48\textwidth}
         \centering
         \includegraphics[width=\textwidth]{figures/w2s_mni_1_fail.pdf}
         \caption{$(p, q, r) = (1.5, 0.6, 0.8), (p_{\weak}, q_{\weak}, r_{\weak}) = (1.4, 0.9, 0.5)$.} 
         \label{fig:w2s-mni-1-fail}
     \end{subfigure}
     \hfill
     \begin{subfigure}[b]{0.48\textwidth}
         \centering
         \includegraphics[width=\textwidth]{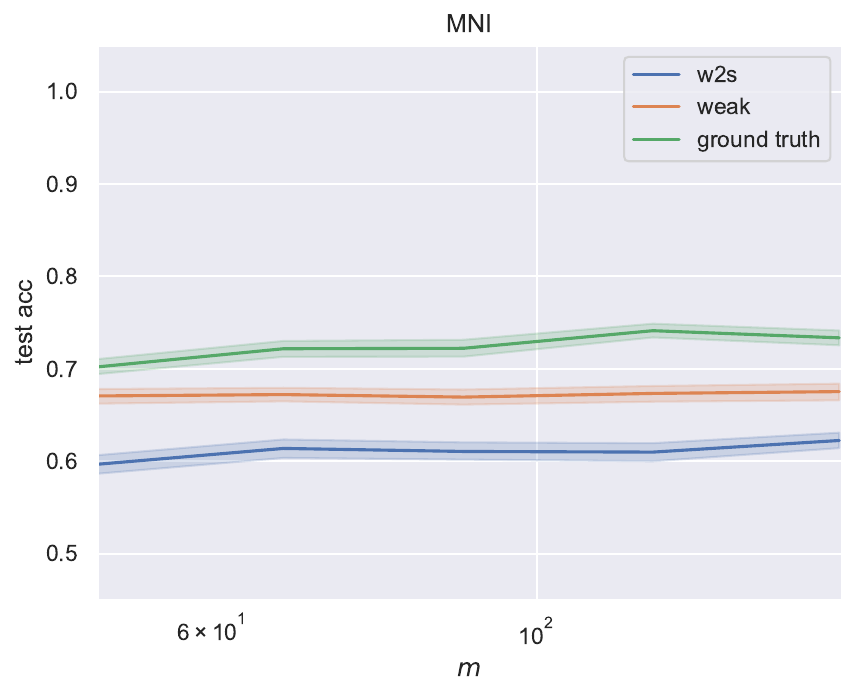}
         \caption{$(p,q,r) = (2, 0.9, 0.4)$, $(p_{\weak}, q_{\weak}, r_{\weak}) = (1.4, 0.9, 0.4)$}
         \label{fig:w2s-mni-2-fail}
     \end{subfigure}
     \hfill
    \caption{Comparison of MNI test accuracies for four different models. Observe how the weak-to-strong accuracy increases as $m$ increases. Again, the top row \Cref{fig:w2s-mni-1,fig:w2s-mni-2} are in a regime where we predict MNI weak-to-strong generalization to succeed, whereas the bottom row \Cref{fig:w2s-mni-1-fail,fig:w2s-mni-2-fail} depict regimes where we expect MNI weak-to-strong generalization to fail. 
    The plots corroborate these theoretical predictions.}
    \label{fig:w2s-mni-overall}
\end{figure}

\section{Heuristic calculations}\label{sec:heuristic}
Recall from the definition of MNI that 
\[
    \fstrong = \mX^\top \Ainv \vy,
\]
where $\vy \in \qty{\pm 1}^n$ is a label vector generated by either the true feature $x_*$ or a weak feature $x_{\weak}$ and $\mA = \mX \mX^\top \in \R^{n \times n}$ is the Gram matrix. 
The key to our analysis is studying the survival and contamination of various features when the labels are possibly generated by another feature.
Recall that we performed a basis change $\mX \mapsto \mX U$, so that the strong features are drawn iid from $N(0, \Lambda)$. 

Writing the transformed data matrix now as 
\begin{align*}
    \mX = \mqty[\sqrt{\lambda_1}\bz_1 & \sqrt{\lambda_2}\bz_2 & \ldots & \sqrt{\lambda_d}\bz_d],
\end{align*}
where each $\bz_i \sim N(0, I_n)$, we obtain for any unit norm $v \in \R^D$ that
\begin{align*}
    \su(v) &= \sum_{i \in [d]} \lambda_i \bz_i^\top \Ainv \vy \ev{\bv_i, v} \tag{MNI} \\
    \cn(v) &= \sqrt{\sum_{i \in [d]} (\lambda_i \bz_i^\top \Ainv \vy)^2 (1 - \ev{\bv_i, v}^2)}. \tag{Orthonormality of $\bv_i$}
\end{align*}

Since $\mA$ is close to $dI_d$ in the isotropic case, and we are working in the regime where PCA fails to extract the bi-level structure ($q+r>1$), one could hope for the best and pretend that $\Ainv =  \frac{1}{d} I_d$. 
This step is not rigorous, but we will justify these approximations in \Cref{sec:su-cn-proofs}.

Based on the decompositions in \Cref{def:su-cn}, we study the case where $\vy = \sgn(\ev{\bg, \bw})$ for some $\bw \in R^D$ but we want to recover the planted direction $\bv \in \R^D$. 
% Let $\vrho = (\rho_1, \ldots, \rho_d)$, where $\rho_i \triangleq \ev{\bv_i, \bw} = w_i$, and similarly define $\veta = (\eta_1, \ldots, \eta_d)$ by $v_i \triangleq \ev{\bv_i, \bv}$. 
% Note that $\norm{\veta}_2 = \norm{\vrho}_2 = 1$, and that after the basis change in fact $\rho_i = w_i$, but we will attempt keep the notation basis-independent. 
For a subspace $V \subseteq \R^D$ and a vector $\bu \in \R^D$, let $\bu_V$ denote the projection of $\bu$ onto $V$. 
For axis-aligned subspaces $V \subseteq [d]$, this just corresponds to restricting to the coordinates in $V$. 
To simplify the heuristic calculation, we will make the following assumptions.
\begin{assumption}\label{assump:label-direction}
    Let $S = [s]$ denote the spiked subspace after the basis change, and let $\alpha, \rho > 0$ be  parameters possibly depending on $n$. 
    For any vector $\bu \in \R^d$, let $T_{\bu} \triangleq \qty{i \in S: \abs{u_i} = \omega(\frac{1}{\sqrt{n}})}$ denote the spiked coordinates where $\bu$ is large, and $R_{\bu} = S \setminus T_{\bu}$ denote the spiked coordinates where $\bu$ is small.

    We assume the following holds for $T = T_{\bw}$ and $R = R_{\bw}$:
    \begin{enumerate}[label=\normalfont{(\arabic*)}] 
        % \item \label{item:spike-correlations} 
        % We have $\max\qty{\abs{R_{\bw} \cap S}, \abs{R_{\bv} \cap S}} = o(s)$. 
        % In other words, most of the spiked directions are heavy  in $\bv$ and $\bw$.
        % \item \label{item:spike-mass} We have $\norm{\bw_{T }}^2 = \sum_{i \in T } w_i^2 = \Omega(1)$. In other words, a constant fraction of $w$ lives in the spiked subspace.
        \item \label{item:correlation} We have $\ev{\bv_{T}, \bw_{T}} = \alpha \norm{\bv_{T}}_2 \norm{\bw_{T}}_2$. In other words, $\bv$ and $\bw$ have correlation $\alpha$ restricted to the heavy coordinates for $\bw$.
        \item \label{item:squared-corr} We have $\sum_{i \in T} v_i^2 w_i^2 = \rho^2 \norm{\bw_{T}}_2^2$. Note that by $L^p$ norm inequalities we always have $\rho^2 \le 1$. 
        % \item \label{item:spike-mass-v} We have $\norm{\bv_T}_2^2 = \Omega(1)$. In other words, most of $\bv$ lives on $\bw$'s heavy coordinates. 
        \item \label{item:many-coords-light} We have $\abs{R} = \Omega(s)$, i.e. a constant fraction of $\bw$'s spiked coordinates are small.
    \end{enumerate}
\end{assumption}
\begin{remark}
    When $\bv = \bw$, \Cref{item:squared-corr} can be thought of as a relaxed notion of $\bv$ being $1$-sparse; for a given $\rho$ one should roughly think of $\bv$ as being $\frac{1}{\rho}$-sparse. 
\end{remark}
\paragraph{Survival bound.}
Recall that $\vy = \sgn(\ev{\bg, \bw})$. 
By applying the noise stability formula again and the fact that $z_{ij} \sim N(0, 1)$, we deduce that 
\begin{align*}
    \E[\bz_i^\top \by] &= n \E[z_{ij} y_j] \\
    &= \Pr[\sgn(z_{ij}) = y_j]\E[\abs{z_{ij}}] - (1 - \Pr[\sgn(z_{ij}) = \by_j])\E[\abs{z_{ij}}] \\
    &= \sqrt{\tfrac{2}{\pi}} (2\Pr[\sgn(z_{ij}) = y_j] - 1) \\ 
    &= (\tfrac{2}{\pi})^{3/2} \arcsin w_i,
\end{align*}
where in the second to last line we have used the fact that the expected magnitude of a standard Gaussian is $\sqrt{2/\pi}$, and in the last line we have used the noise stability formula. 
By standard concentration inequalities, the deviations will be of order $O(\sqrt{n})$. 
As the behavior of $\frac{2}{\pi}\arcsin(x) \approx \frac{2}{\pi} x$ for small $x$, 
we deduce that the expectation will dominate whenever $\abs{w_i} \gg \frac{1}{\sqrt{n}})$.

We will now plug in the bi-level scaling. 
Recall that $\lambda_i = \lambda_F = \frac{ad}{s}$ for $i \in [s]$ and $\lambda_i = \lambda_U = \frac{(1-a)d}{d-s}$ for $i > s$.

Thus, ignoring constants, with high probability we should have
\begin{align*}
    \su(\bv | \bw) &= \frac{\lambda_F n}{d} \sum_{i \in T_{\bw} } v_i \arcsin w_i \pm \frac{\lambda_F \sqrt{n}}{d}\sum_{i \in R_{\bw} } v_i \pm \frac{\lambda_U \sqrt{n}}{d} \sum_{i > s} v_i\mper
\end{align*}

Observe that 
\[
    \frac{2}{\pi} \abs{x} \le \frac{2}{\pi} \abs{\arcsin x} \le \abs{x},
\]
for all $x \in [-1, 1]$.

Consequently, for $i \in T_{\bw}$, each summand contributes $v_i \arcsin(w_i) \approx \Theta(v_i w_i)$.\footnote{The reason this is not an equality is that  there might be some heavy $w_i's$ which disagree in sign with $v_i's$, but for the settings we consider this estimate will be true.}
Thus, by \Cref{item:correlation} and the definition of $\lambda_F$, the first term is $\Theta(\alpha \cdot \frac{an}{s} \cdot \norm{\bv_{T_{\bw} }}_2 \norm{\bw_{T_{\bw} }}_2) $. 

For the second term, we will upper bound its magnitude by 
\begin{align*}
    \frac{\lambda_F \sqrt{n}}{d} \abs{\sum_{i \in R_{\bw}} v_i} &\le \frac{an}{s} \cdot \frac{\norm{\bv_{R_{\bw}}}_1}{\sqrt{n}} \\
    &\le \frac{an}{s} \cdot \sqrt{\frac{s}{n}} \cdot \norm{\bv_{R_{\bw}}}_2  \tag{$R_{\bw} \subseteq S$} \\
    &\le \frac{an}{s} \cdot \sqrt{\frac{s}{n}} \mper \tag{$\norm{\bv}_2 = 1$}
\end{align*}   
Finally, for the third term, since $\abs{\sum_{i > s} v_i} \le \norm{\bv}_1 \le \sqrt{d}$, after plugging in the definition of $\lambda_U$, we obtain the asymptotics 
\begin{align}
    \su(\bv | \bw) \asymp \frac{an}{s} \cdot \qty(\alpha \cdot \norm{\bv_{T_{\bw} }}_2 \norm{\bw_{T_{\bw} }}_2 \pm \sqrt{\frac{s}{n}}) \pm \sqrt{\frac{n}{d}}\mper \label{eq:su-scaling}
\end{align}
Note that for the first term to dominate, we must have $\alpha = \Omega(\frac{s}{n})$.
Also, if we want to improve our estimate on the second term, we can further split it by $T_{\bv}$, we can gain and get deviations of order 
\begin{align*}
    \abs{\sum_{i \in R_{\bw} \cap T_{\bv} } v_i + \sum_{i \in R_{\bw} \cap R_{\bv} } v_i} &\le \norm{\bv_{R_{\bw} \cap T_{\bv}}}_1 + \frac{\abs{R_{\bv} }}{\sqrt{n}} \tag{Definition of $R_{\bv}$}\\
    &\le \norm{\bv_{R_{\bw} \cap T_{\bv}}}_1 + \frac{s}{\sqrt{n}}, 
\end{align*}   
yielding an ultimate relative deviation of $\frac{\norm{\bv_{R_{\bw} \cap T_{\bv}}}_1}{\sqrt{n}} + \frac{s}{n}$. 
This is significantly better if, say $\abs{T_{\bv}} = o(s)$, as it allows us to beat the contamination bounds which have relative deviations $\sqrt{\frac{s}{n}}$.

\paragraph{Contamination bound.}
For succinctness, introduce the shorthand $h_i^2 = (1 - v_i^2) \ev{\vz_i, \vy}^2$. 
Then the squared contamination is 
\begin{align*}
    \cn(\bv | \bw)^2 &= \sum_{i \in [d]} \lambda_i^2 ( 1 - v_i^2) \cdot \vy^\top \Ainv \vz_i \vz_i^\top \Ainv \vy \\
    &\approx \frac{1}{d^2} \sum_{i \in [d]} \lambda_i^2 h_i^2 \tag{$\Ainv \approx \frac{1}{d} I_d$}\\
    &= \frac{\lambda_F^2}{d^2} \sum_{i \in T_{\bw} } h_i^2 + \frac{\lambda_F^2}{d^2} \sum_{i \in R_{\bw} } h_i^2 + \frac{\lambda_U^2}{d^2} \sum_{i > s} h_i^2\\
    &\le \frac{\lambda_F^2n^2}{d^2} \sum_{i \in T_{\bw} } (1-v_i^2) (\tfrac{2}{\pi}\arcsin w_i)^2 + \frac{\lambda_F^2 n}{d^2} \sum_{i \in R_{\bw} } (1-v_i^2) + \frac{\lambda_U^2 n}{d^2} (d-s),
 \end{align*}
 where in the last line we have used the observation that the expectation of $h_i$ dominates if and only if $i \in T_{\bw}$.

 The first term can be bounded (up to constants) as
 \[
    \qty(\frac{an}{s})^2\sum_{i \in T_{\bw} } (1-v_i^2) w_i^2 = \qty(\frac{an}{s})^2 \norm{\bw_{T_{\bw} }}_2^2(1 - \rho^2) \mper \tag{\Cref{item:squared-corr}}
 \]
 For the second term, we can bound up to constants as
 \begin{align*}
    \qty(\frac{an}{s})^2 \cdot \frac{1}{n} \cdot \sum_{i \in R_{\bw} } (1-v_i^2) &= \qty(\frac{an}{s})^2 \cdot \frac{\abs{R_{\bw} } - \norm{\bv_{R_{\bw} }}_2^2}{n} \\
    &\le \qty(\frac{an}{s})^2  \cdot \frac{\abs{R_{\bw}}}{n}\mper \tag{$\norm{\bv}_2 = 1$}\\
 \end{align*}
Finally, the third term can be bounded by $\frac{n}{d}$. 
Putting these together, we conclude that
\begin{align}
    \cn(\bv | \bw) \asymp \qty(\frac{an}{s}) \qty(\sqrt{1-\rho^2} \norm{\bw_{T_{\bw} }}_2+ \sqrt{\frac{\abs{R_{\bw}}}{n}}) + \sqrt{\frac{n}{d}}\mper \label{eq:cn-scaling}
\end{align}

Let $\mu_n = \frac{an}{s}$. 
In our regime, $\mu_n \ll 1$ because $q + r > 1$.
Combining \Cref{eq:su-scaling,eq:cn-scaling} yields 
\begin{align}
    \frac{\su(\bv | \bw)}{\cn(\bv | \bw)} \asymp \frac{\mu_n \cdot \qty(\alpha \norm{\bv_{T_{\bw} }}_2 \norm{\bw_{T_{\bw} }}_2 \pm \frac{\norm{\bv_{R_{\bw} \cap T_{\bv}}}_1}{\sqrt{n}} \pm \frac{s}{n}) \pm \sqrt{\frac{n}{d}}}{\mu_n \qty(\sqrt{1-\rho^2} \norm{\bw_{T_{\bw} }}_2+ \sqrt{\frac{\abs{R_{\bw}}}{n}}) + \sqrt{\frac{n}{d}}}
\end{align}
Hence, for the survival to contamination ratio to grow with $n$, we need 
\begin{align*}
    &\alpha \norm{\bv_{T_{\bw} }}_2 \gg  \sqrt{1-\rho^2} \tag{Weak supervision} \\
    &\alpha \norm{\bv_{T_{\bw} }}_2 \norm{\bw_{T_{\bw} }}_2 \gg \sqrt{\frac{\abs{R_{\bw}}}{n}} \tag{Favored contamination} \\
    &\mu_n \alpha \norm{\bv_{T_{\bw} }}_2 \norm{\bw_{T_{\bw} }}_2 \gg \sqrt{\frac{n}{d}} 
    % \tag{BBP threshold}
\end{align*}

Let us put these scalings together to predict the scaling regimes for weak-to-strong generalization. 
For strong generalization, we have $\bv = \bv_*$ and $\by = \sgn(\ev{\vg, \bv_*})$.
From the discussion in \Cref{sec:tech-overview}, we know that the strong learner generalizes if
\[
    \frac{\su(\bv_* | \bv_*)}{\cn(\bv_* | \bv_*)} = \omega_n(1),
\]
and fails to generalize if the ratio is $o_n(1)$.
Under \Cref{assump:1-sparse}, we have $\bv_* = e_1$, so $T_{\bv} = T_{\bw} = \qty{1}$, $\alpha = 1$, and $\rho = 1$, and the expression simplifies to 
\[
    \frac{\su(\bv_* | \bv_*)}{\cn(\bv_* | \bv_*)} \asymp \frac{\mu_n}{\mu_n \cdot \sqrt{\frac{s}{n}} + \sqrt{\frac{n}{d}}},
\]
which under the bi-level parameter scaling verifies the conditions for ground truth supervision.

This completes the proof sketch; it remains to justify the above estimates rigorously. 
In the subsequent subsections, we will assume that the above scalings of the survival and contamination are correct and use them to deduce that the 1-sparse assumption is necessary to get a sharp transition in the test error. 
% calculate necessary conditions for weak-to-strong generalization. 
These calculations can be upgraded to rigorous proofs using the tools are developed in \Cref{sec:su-cn-proofs}.
\subsection{The necessity of 1-sparse assumption}
Let's suppose we get clean labels from $\sgn(\ev{\bg, \bv})$ and want to learn the unit vector $\bv$. 
We will abbreviate $T = T_{\bv} = T_{\bw}$.
In this case, we have $\alpha = 1$ and $\rho^2 = \frac{\norm{\bv_T}_4^4}{\norm{\bv_T}_2^2}$ in \Cref{assump:label-direction}.

\begin{lemma}\label{lemma:1-sparse-necessity}
    Suppose we are given labels according to $\bv$ and want to learn $\bv$.
    Then, the survival to contamination ratio is $\omega_n(1)$ only if 
    \[
        \frac{\norm{\bv_T}_4^4 }{\norm{\bv_T}_2^2} = 1 - o(1). 
    \]
    In particular, the above condition holds only if the following two conditions hold:
    \begin{enumerate}[label=\normalfont{(\arabic*)}]
        \item $\norm{\bv_T}_2^2 = 1 - o(1)$.
        \item For each $i \in T$, either $v_i = o(1)$ or $v_i = 1-o(1)$.
    \end{enumerate}
    The upshot is that having $1$-sparse labels is necessary for obtaining asymptotically perfect generalization.
\end{lemma}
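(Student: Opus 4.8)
The plan is to read the necessary condition directly off the survival-to-contamination scaling from the heuristic, \eqref{eq:su-scaling}--\eqref{eq:cn-scaling}, specialized to the present situation $\bw = \bv$, $\alpha = 1$, $T_\bv = T_\bw = T$, and $\rho^2 = \norm{\bv_T}_4^4 / \norm{\bv_T}_2^2$, which gives
\[
    \frac{\su(\bv | \bv)}{\cn(\bv | \bv)} \asymp \frac{\mu_n\qty(\norm{\bv_T}_2^2 \pm \sqrt{s/n}) \pm \sqrt{n/d}}{\mu_n\qty(\sqrt{1-\rho^2}\,\norm{\bv_T}_2 + \sqrt{\abs{R}/n}) + \sqrt{n/d}}.
\]
The first step is to isolate the single term of this ratio that forces the conclusion. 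Bounding the denominator below both by $\mu_n\sqrt{1-\rho^2}\,\norm{\bv_T}_2$ and by $\sqrt{n/d}$, bounding the numerator above by $\mu_n\norm{\bv_T}_2^2 + \mu_n\sqrt{s/n} + \sqrt{n/d}$, and absorbing $\mu_n\sqrt{s/n} = O(\mu_n\sqrt{\abs{R}/n})$ (valid since $\abs{R} = \Omega(s)$ by \Cref{assump:label-direction}) into the denominator, one obtains that the ratio is $O(1) + O\qty(\norm{\bv_T}_2 / \sqrt{1-\rho^2})$. Hence if the ratio is $\omega_n(1)$ then necessarily $\norm{\bv_T}_2 \gg \sqrt{1-\rho^2}$; since $\norm{\bv_T}_2 \le \norm{\bv}_2 = 1$, this forces $\sqrt{1-\rho^2} = o(1)$, i.e.\ $\rho^2 = \norm{\bv_T}_4^4 / \norm{\bv_T}_2^2 = 1 - o(1)$, which is the main conclusion.

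Next I would deduce the two refinements from the main conclusion using only the elementary inequality $\norm{\bv_T}_4^4 \le \norm{\bv_T}_2^4$ (expand $(\sum_{i\in T} v_i^2)^2$ and drop the nonnegative cross terms). Indeed $\rho^2 = \norm{\bv_T}_4^4 / \norm{\bv_T}_2^2 \le \norm{\bv_T}_2^2$, so $\norm{\bv_T}_2^2 \ge \rho^2 = 1 - o(1)$, which is (1); then $\norm{\bv_T}_4^4 = \rho^2 \norm{\bv_T}_2^2 = 1 - o(1)$ as well, so $\sum_{i\in T} v_i^2(1-v_i^2) = \norm{\bv_T}_2^2 - \norm{\bv_T}_4^4 = o(1)$, and since every summand is nonnegative, each satisfies $v_i^2(1-v_i^2) = o(1)$ uniformly over $i \in T$; as $x \mapsto x(1-x)$ is bounded below on any $[\eps, 1-\eps]$, this forces $\abs{v_i} = o(1)$ or $\abs{v_i} = 1-o(1)$, which is (2). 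For the ``upshot'' I would observe that at most one $i \in T$ can have $\abs{v_i} = 1-o(1)$ (two such would contribute $\ell_2$-mass exceeding $1$), so together with $\norm{\bv_T}_2^2 = 1-o(1)$ and $\norm{\bv}_2 = 1$ the vector $\bv$ lies within $o(1)$ of a single coordinate vector $\be_j$, i.e.\ it is essentially $1$-sparse.

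The main obstacle is entirely in the first step: one must make sure that $\norm{\bv_T}_2 \gg \sqrt{1-\rho^2}$ is a \emph{genuine} necessary consequence of the ratio being $\omega_n(1)$, i.e.\ that no other term of \eqref{eq:su-scaling}--\eqref{eq:cn-scaling} can rescue a large ratio once this condition fails --- which is why I deliberately pair the single denominator summand $\mu_n\sqrt{1-\rho^2}\,\norm{\bv_T}_2$ against the dominant numerator term $\mu_n\norm{\bv_T}_2^2$ and route the $\sqrt{n/d}$ and $\mu_n\sqrt{s/n}$ contributions harmlessly into the denominator. Everything downstream is elementary: upgrading $\rho^2 = 1-o(1)$ to the quantitative (1) and (2) needs nothing beyond $\norm{\bv_T}_4^4 \le \norm{\bv_T}_2^4$, whose equality case --- all mass of $\bv_T$ concentrated on one coordinate --- is exactly what makes the reduction to the $1$-sparse assumption tight.
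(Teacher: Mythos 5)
Your proposal is correct and follows essentially the same route as the paper: both extract the key bound $\su(\bv\mid\bv)/\cn(\bv\mid\bv) \lesssim \norm{\bv_T}_2/\sqrt{1-\rho^2} \le 1/\sqrt{1-\rho^2}$ from the heuristic scalings by pairing the $T$-survival against the $T$-contamination, and both derive conditions (1) and (2) from $\rho^2 = 1-o(1)$ via the elementary inequality $\norm{\bv_T}_4^4 \le \norm{\bv_T}_2^4$. Your handling of the secondary numerator terms and your forward derivation of (2) via $\sum_{i\in T} v_i^2(1-v_i^2) = o(1)$ are marginally more explicit than the paper's contrapositive argument, but the content is the same.
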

\begin{proof}
Hence, focusing only on the survival terms coming from $T$, which are the only relevant coordinates for learning, and lower bounding the contamination with just the $T$ terms, we have
\[
    \frac{\su(\bv | \bv)}{\cn(\bv | \bv)} \le \frac{\frac{an}{s} \norm{\bv_T}_2^2}{\frac{an}{s} \sqrt{\norm{\bv_T}_2^2 - \norm{\bv_T}_4^4}} = \frac{\norm{\bv_T}_2}{\sqrt{1 - \frac{\norm{\bv_T}_4^4}{\norm{\bv_T}_2^2}}} \le \frac{1}{\sqrt{1 - \frac{\norm{\bv_T}_4^4}{\norm{\bv_T}_2^2}}},
\]
where the last inequality used the fact that $\bv$ is unit norm.
This proves the first necessary condition.

We show that if $\norm{\bv_T}_4^4/\norm{\bv_T}_2^2 = 1 - o(1)$, then the second set of necessary conditions hold. 

Indeed, for the first claim, suppose $\norm{\bv_T}_2^2 \le 1 - \eps$ for some constant $\eps > 0$. 
The $L^p$ norm inequalities imply that  $\norm{\bv_T}_4^4 \le \norm{\bv_T}_2^4 \le (1-\eps) \norm{\bv_T}_2^2$, so the ratio is at most $1-\eps$, a contradiction.
For the second claim, suppose instead there is a coordinate $i$ with $v_i^2 = 1 - \eps$ for some constant $\eps \in (0, 1)$. Then, we have $\norm{\bv_{T \setminus \qty{i}}}_4^4 \le \norm{\bv_{T \setminus \qty{i}}}_2^4 \le \eps$, but then $\norm{\bv_T}_4^4 \le (1-\eps)^2 + \eps \le 1 - \Omega(\eps)$, a contradiction.

% More generally, if $\norm{\bv_T}_2^2 =  1-\eps$, then we can still hope get a survival to contamination ratio of $\frac{1}{\sqrt{\eps}}$. 
\end{proof}

\end{document}